\renewcommand{\phi}{\varphi}
\newcommand{\naturals}{\mathbb{N}}
\newcommand{\union}{\cup} 		
\newcommand{\setdiff}{-}
\newcommand{\card}[1]{{|#1|}}		
\newcommand{\set}[1]{\{{#1}\}}          
\newcommand{\st}{\ |\ }		     	
\newcommand{\tuple}[1]{\langle{#1}\rangle}  
\newcommand{\seq}[1]{\langle #1 \rangle}
\theoremstyle{definition}
\theoremstyle{plain}
\newtheorem{theorem}{Theorem}
\newtheorem{lemma}{Lemma}
\newtheorem{corollary}{Corollary}
\newtheorem{proposition}{Proposition}
\newtheorem{claim}{Claim}
\newcommand{\instance}[1]{{\mathbb{#1}}}
\newcommand{\cc}[1]{{\mbox{\textnormal{\textbf{#1}}}}}  
\newcommand{\cocc}[1]{{\mbox{\textrm{co}\textbf{#1}}}}  
\newcommand{\poly}{\cc{P}}
\newcommand{\NP}{\cc{NP}}
\newcommand{\coNP}{\cocc{NP}}
\newcommand{\PSPACE}{\cc{PSPACE}}
\newcommand{\FPT}{\cc{FPT}}
\newcommand{\Weft}{{\cc{W}}}
\newcommand{\W}[1]{{\Weft}{{[#1]}}}
\newcommand{\NOPOLYKERNEL}{{\textnormal\coNP \subseteq \NP/\textup{poly}}}
\newcommand{\problem}[1]{\textsc{#1}}
\newcommand{\SAT}{\problem{SAT}}
\newcommand{\insti}{\instance{I}}
\newcommand{\iplan}{\instance{P}}      
\newcommand{\unpar}[1]{[{#1}]}  
\newcommand{\strips}{{\textsc{Strips}}}
\newcommand{\sasplus}{{SAS$^+$}}
\newcommand{\vars}{V} 
\newcommand{\dom}{D}  
\newcommand{\acts}{A}  
\newcommand{\init}{I}  
\newcommand{\goal}{G}  
\newcommand{\pre}{\mathrm{pre}}  
\newcommand{\eff}{\mathrm{eff}}  %
\newcommand{\varval}{x}
\newcommand{\proj}[2]{{#1[{#2}]}}
\newcommand{\undef}{\mathbf{u}}
\newcommand{\plan}{\omega}
\newcommand{\PE}{\textsc{\sasplus\ Planning}}
\newcommand{\BPE}{\textsc{Bounded \sasplus\ Planning}}
\newcommand{\BPER}[1]{$\set{#1}$-\BPE}
\newcommand{\FOMC}{\textsc{FO\linebreak[3]\ Model\linebreak[3]\ Checking}}
\newcommand{\FOMCR}[1]{$#1$-\FOMC}
\newcommand{\occs}{\mathbf{O}}
\newcommand{\orders}{\mathbf{P}}
\newcommand{\links}{\mathbf{L}}
\newcommand{\link}[3]{{#1\stackrel{#2}{\longrightarrow}{#3}}}
\newcommand{\pbox}[1]{\makebox[1em][l]{#1}}
\newcommand{\hy}{\hbox{-}\nobreak\hskip0pt}
\newcommand{\SB}{\{\,}%
\newcommand{\SM}{\;{|}\;}%
\newcommand{\SE}{\,\}}%
\newcommand{\Card}[1]{\card{#1}}
\newcommand{\mdef}{\;=\;}
\newcommand{\myindent}{~\hspace{15mm}}
\newcommand{\AAA}{\mathcal{A}}
\newcommand{\VVV}{\vars}
\newcommand{\OOO}{\acts}
\newcommand{\SAS}{\mbox{\sasplus}}
\newcommand{\concat}{,}
\newcommand{\undv}{\undef}
\newcommand{\varR}{\textup{VAR}}
\newcommand{\domR}{\textup{DOM}}
\newcommand{\initR}{\textup{INIT}}
\newcommand{\goalR}{\textup{GOAL}}
\newcommand{\goalRV}{\textup{GOAL\_V}}
\newcommand{\preR}{\textup{PRE}}
\newcommand{\postR}{\textup{EFF}}
\newcommand{\preRV}{\textup{PRE\_V}}
\newcommand{\postRV}{\textup{EFF\_V}}
\newcommand{\diff}{\textup{diff}}
\newcommand{\diffR}{\textup{DIFF\_ACT}}
\newcommand{\diffGoalR}{\textup{DIFF\_GOAL}}
\newcommand{\dumR}{\textup{DUM}}
\newcommand{\dumop}{\textup{dum\_a}}
\newcommand{\dumopR}{\textup{DUM\_A}}
\newcommand{\opeNS}{\textup{ACT}}
\newcommand{\varRNS}{\textup{VAR}}
\newcommand{\domRNS}{\textup{DOM}}
\newcommand{\initRVNS}{\textup{INIT\_V}}
\newcommand{\goalRVNS}{\textup{GOAL\_V}}
\newcommand{\preRNS}{\textup{PRE}}
\newcommand{\postRNS}{\textup{EFF}}
\newcommand{\preRVNS}{\textup{PRE\_V}}
\newcommand{\postRVNS}{\textup{EFF\_V}}
\newcommand{\diffRNS}{\textup{DIFF\_ACT}}
\newcommand{\diffGoalRNS}{\textup{DIFF\_GOAL}}
\newcommand{\dumRNS}{\textup{DUM}}
\newcommand{\dumopRNS}{\textup{DUM\_A}}
\newcommand{\fvalue}{\textit{value}}
\newcommand{\fcheckpre}{\textit{check-pre}}
\newcommand{\fcheckpreall}{\textit{check-pre-all}}
\newcommand{\fcheckpregoal}{\textit{check-goal}}
\newcommand{\fcheckpost}{\textit{check-eff}}
\newcommand{\fdiffop}{\textit{diff-op}}
\newcommand{\fdiffopall}{\textit{diff-op-all}}
\newcommand{\fdiffgoal}{\textit{diff-goal}}
\title{A Complete Parameterized Complexity Analysis of \\Bounded Planning}
\author{
Christer B\"{a}ckstr\"{o}m$^1$,
Peter Jonsson$^1$,
Sebastian Ordyniak$^2$, and
Stefan Szeider$^3$\\[0.1cm]
\mbox{}\small$^1$Department of Computer Science, Link{\"o}ping University,
Link{\"o}ping, Sweden\\[-4pt]
\small christer.backstrom@liu.se, peter.jonsson@liu.se\\
\mbox{}\small$^2$ Faculty of Informatics, Masaryk University, Brno, Czech Republic\\[-4pt]
\small sordyniak@gmail.com\\
\mbox{}\small$^3$Institute of Information Systems, Vienna University of Technology,
Vienna, Austria\\[-4pt]
\small stefan@szeider.net
}
\date{}
\begin{document}

\maketitle

\begin{abstract}
  The {\em propositional planning} problem is a notoriously difficult
  computational problem, which remains hard even under strong
  syntactical and structural restrictions.  Given its difficulty it
  becomes natural to study planning in the context of parameterized
  complexity. In this paper we continue the work initiated by Downey,
  Fellows and Stege on the parameterized complexity of planning with
  respect to the parameter ``length of the solution plan.'' We provide
  a complete classification of the parameterized complexity of the
  planning problem under two of the most prominent syntactical
  restrictions, i.e., the so called PUBS restrictions introduced by
  B{\"a}ckstr\"{o}m and Nebel and restrictions on the number of
  preconditions and effects as introduced by Bylander. We also
  determine which of the considered fixed-parameter tractable problems
  admit a polynomial kernel and which don't.
\end{abstract}


\thispagestyle{empty}
\section{Introduction}
\noindent
The (propositional) planning problem has been the subject of intensive
study in knowledge representation, artificial intelligence and control
theory and is relevant for a large number of industrial
applications~\cite{GhallabNauTraverso04}.  The problem involves deciding
whether an \emph{initial state}---an $n$-vector over some domain 
$D$–--can
be transformed into a \emph{goal state} via the application of {\em
  actions} (or {\em operators}) each consisting of 
 {\em preconditions} and {\em
  post-conditions} (or {\em effects}) stating the conditions that need
to hold before the
action can be applied and which conditions will hold after the
application of the action, respectively.
It is known that the problem of deciding whether a solution exists or not is
\PSPACE-complete~\cite{BackstromNebel95,Bylander94}.
Although various \NP-complete and even tractable restrictions are
known in the literature
\cite{BackstromNebel95,BrafmanDomshlak03,Bylander94,JonssonBackstrom98} 
these are often considered not to coincide
well with cases that are intersting in practice.
In the authors experience, classical complexity analysis is often
too coarse to give relevant results for planning, since most
interesting restrictions seem to remain \PSPACE-complete.
Despite this, there has been very few attempts 
to use alternative analysis methods.
The few exceptions include 
probabilistic analysis~\cite{Bylander96},
approximation~\cite{BetzHelmert09,Jonsson99} and
padding~\cite{BackstromJonsson11}. 

Another obvious alternative is to use the framework of Parameterized
Complexity which offers the more relaxed notion of
\emph{fixed-parameter tractability} (FPT).  A problem is
fixed-parameter tractable if it can be solved in time $f(k)n^{O(1)}$
where $f$ is an arbitrary function of the parameter $k$ (which
measures some aspect of the input) and $n$ is the input size. Indeed,
already in a 1999 paper, Downey, Fellows and Stege
\cite{DowneyFellowsStege99} initiated the parameterized analysis of
planning, taking the minimum number of steps from the initial state to
the goal state (i.e., the length of the solution plan) as the
parameter. However, the parameterized viewpoint did not immediately
gain momentum for analysis of planning and it is only during the last
few years that we have witnessed a strongly increased interest in this
method, cf.~\cite{BackstromJonsson11,BackstromJonssonStahlberg13,
  deHahnRoubickovaSzeider13,KroneggerPfandlerPichler13}.

In this article, we use the same parameter as Downey et al.
and provide a complete analysis of planning under various
syntactical restrictions, in particular the restrictions
considered by Bylander~\cite{Bylander94} and by B\"{a}ckstr\"{o}m and
Nebel~\cite{BackstromNebel95}.
These were among the first attempts to
understand why and when planning is hard or easy and they have
had a heavy influence on later theoretical research in planning.
We complement these results by also considering bounds on 
problem kernels for those planning problems that we prove to
be fixed-parameter tractable.
It is known that a decidable problem is fixed-parameter
tractable if and only if it admits a polynomial-time self-reduction 
where the size
of the resulting instance is bounded by a function~$f$ 
of the parameter~\cite{Fellows06,Fomin10,GuoNiedermeier07}.
The function $f$ is called the {\em kernel size}. By providing upper and 
lower bounds
on the kernel size, one can rigorously establish the potential of 
polynomial-time
preprocessing for the problem at hand.

\subsection*{Our results}

We provide a full parameterized complexity analysis of 
planning with respect to the length of the solution plan, under 
all combinations of the syntactical P, U, B, and S restrictions 
previously considered by B\"{a}ckstr\"{o}m and 
Nebel~\cite{BackstromNebel95}
as well as under the restrictions on the number of preconditions 
and effects previously considered by Bylander~\cite{Bylander94}.
Our new parameterized results are summarized 
in Table~\ref{table:bylander} and Figure~\ref{fig:pubs-lattice}
alongside the previously reported classical complexity results.
We discuss our results more thoroughly in Section~\ref{sec:class}.
In addition, we examine whether the fixed-parameter tractable
subcases, which we obtain, admit polynomial kernels or not. 
Our results on this are negative
throughout---if any of these problems admit polynomial kernels,
or even polynomial bi-kernels,
then $\NOPOLYKERNEL$ and the Polynomial-time Hierarchy collapses.

\subsection*{Outline}

The rest of the paper is laid out as follows.  Section 2 defines some
concepts of parameterized complexity theory and Section 3 defines the
\sasplus\ and \strips\ planning languages.  
The hardness results are collected
in Section 4 and the membership results in Section 5.
Section 6 is devoted to our tractability results and in Section 7
we show that none of the tractable subcases admits a polynomial
kernel.
We summarize
the results of the paper in Section 8 and discuss some observations and
consequences.  The paper ends with an outlook in Section~9.

\begin{table}[tb]
  \centering
  \begin{tabular}{@{}l@{~~~~~}|llll@{}} 
    \toprule
    & $e=1$     & $e=2$ & fixed $e > 2$ ~~~ & arbitrary $e$\\
    \midrule
    $p=0$       & in \poly   & in \FPT      & \W{1}-C  & \W{2}-C \\ 
    & in \poly    & \NP-C      &\NP-C             & \NP-C \\ 
    \midrule
    $p=1$       & \W{1}-C    & \W{1}-C & \W{1}-C   & \W{2}-C \\
    & \NP-H       & \NP-H      &  \NP-H    & \PSPACE-C \\ 
    \midrule
    fixed $p > 1$ & \W{1}-C    & \W{1}-C & \W{1}-C   & \W{2}-C \\
    & \NP-H       & \PSPACE-C  & \PSPACE-C & \PSPACE-C \\ 
    \midrule
    arbitrary $p$    & \W{1}-C    & \W{1}-C & \W{1}-C   & \W{2}-C \\ 
    & \PSPACE-C   & \PSPACE-C  & \PSPACE-C   & \PSPACE-C \\ 
    \bottomrule
  \end{tabular}
  \medskip

  \caption{Complexity of \BPE{} when
    restricting the number of preconditions ($p$) and 
    effects ($e$).
    All parameterized results are shown in this paper and all
    classical results are
    from Bylander~\cite{Bylander94}. 
    The classical results apply to \strips, while the parameterized
    results hold for \sasplus\ (the hardness results hold already
    for binary domains and the membership results hold for
    arbitrary domains).
    We also show that none of the problems
    that are in \FPT\ admit polynomial kernels.
}

    \label{table:bylander}
\end{table}

\begin{figure}[thb]
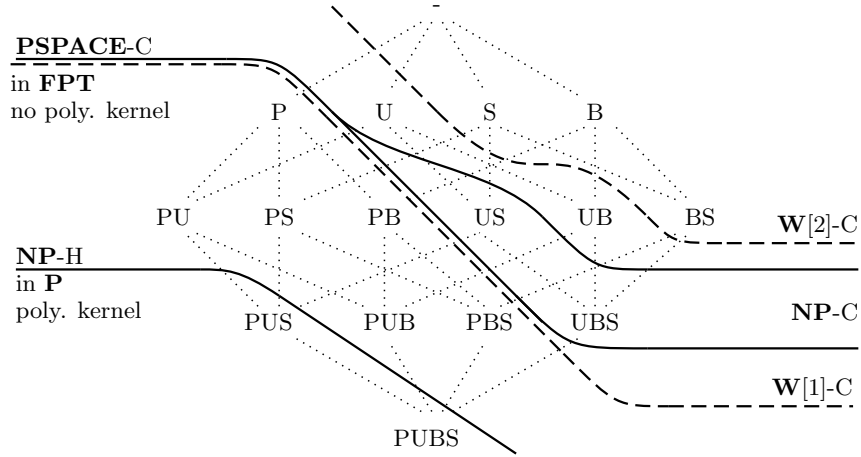

\centering
\begin{pgfpicture}{0cm}{0cm}{12cm}{8cm}


  \pgftranslateto{\pgfxy(6,3.5)}

    \newdimen\pubsdim
    \pgfextractx{\pubsdim}{\pgfpoint{7mm}{0mm}}
    \pgfsetxvec{\pgfpoint{7mm}{0mm}}
    \pgfsetyvec{\pgfpoint{0mm}{7mm}}

    \small
    \pgfsetlinewidth{0.2mm}
    \pgfsetdash{{0.2mm}{0.8mm}}{0mm}

    \pgfputat{\pgfxy(0,4)}{\pgfbox[center,center]{-}}

    \pgfline{\pgfxy(-0.3,3.7)}{\pgfxy(-2.7,2.3)} 
    \pgfline{\pgfxy(-0.1,3.7)}{\pgfxy(-0.9,2.4)} 
    \pgfline{\pgfxy(0.1,3.7)}{\pgfxy(0.9,2.4)}   
    \pgfline{\pgfxy(0.3,3.7)}{\pgfxy(2.7,2.3)}   

    \pgfputat{\pgfxy(-3,2)}{\pgfbox[center,center]{P}}
    \pgfputat{\pgfxy(-1,2)}{\pgfbox[center,center]{U}}
    \pgfputat{\pgfxy(1,2)}{\pgfbox[center,center]{S}}
    \pgfputat{\pgfxy(3,2)}{\pgfbox[center,center]{B}}

    \pgfline{\pgfxy(-3.4,1.7)}{\pgfxy(-4.8,0.3)} 
    \pgfline{\pgfxy(-3.0,1.7)}{\pgfxy(-3.0,0.3)} 
    \pgfline{\pgfxy(-2.6,1.7)}{\pgfxy(-1.2,0.3)} 

    \pgfline{\pgfxy(-1.4,1.7)}{\pgfxy(-4.6,0.3)} 
    \pgfline{\pgfxy(-0.9,1.7)}{\pgfxy(0.6,0.3)}  
    \pgfline{\pgfxy(-0.6,1.7)}{\pgfxy(2.6,0.3)}  

    \pgfline{\pgfxy(0.8,1.7)}{\pgfxy(-2.6,0.3)}  
    \pgfline{\pgfxy(1.0,1.7)}{\pgfxy(1.0,0.3)}   
    \pgfline{\pgfxy(1.2,1.7)}{\pgfxy(4.6,0.3)}   

    \pgfline{\pgfxy(2.8,1.7)}{\pgfxy(-0.6,0.3)}  
    \pgfline{\pgfxy(3.0,1.7)}{\pgfxy(3.0,0.3)}   
    \pgfline{\pgfxy(3.4,1.7)}{\pgfxy(4.8,0.3)}   

    \pgfputat{\pgfxy(-5,0)}{\pgfbox[center,center]{PU}}
    \pgfputat{\pgfxy(-3,0)}{\pgfbox[center,center]{PS}}
    \pgfputat{\pgfxy(-1,0)}{\pgfbox[center,center]{PB}}
    \pgfputat{\pgfxy(1,0)}{\pgfbox[center,center]{US}}
    \pgfputat{\pgfxy(3,0)}{\pgfbox[center,center]{UB}}
    \pgfputat{\pgfxy(5,0)}{\pgfbox[center,center]{BS}}
 
    \pgfline{\pgfxy(-3.4,-1.7)}{\pgfxy(-4.8,-0.3)} 
    \pgfline{\pgfxy(-3.0,-1.7)}{\pgfxy(-3.0,-0.3)} 
    \pgfline{\pgfxy(-2.6,-1.7)}{\pgfxy(0.8,-0.3)} 

    \pgfline{\pgfxy(-1.4,-1.7)}{\pgfxy(-4.6,-0.3)} 
    \pgfline{\pgfxy(-1.0,-1.7)}{\pgfxy(-1.0,-0.3)}  
    \pgfline{\pgfxy(-0.6,-1.7)}{\pgfxy(2.6,-0.3)}  

    \pgfline{\pgfxy(0.6,-1.7)}{\pgfxy(-2.6,-0.3)}  
    \pgfline{\pgfxy(0.9,-1.7)}{\pgfxy(-0.8,-0.3)}  
    \pgfline{\pgfxy(1.2,-1.7)}{\pgfxy(4.6,-0.3)}   

    \pgfline{\pgfxy(2.8,-1.7)}{\pgfxy(0.8,-0.3)}  
    \pgfline{\pgfxy(3.0,-1.7)}{\pgfxy(3.0,-0.3)}  
    \pgfline{\pgfxy(3.2,-1.7)}{\pgfxy(4.8,-0.3)}  

    \pgfputat{\pgfxy(-3.2,-2)}{\pgfbox[center,center]{PUS}}
    \pgfputat{\pgfxy(-0.9,-2)}{\pgfbox[center,center]{PUB}}
    \pgfputat{\pgfxy(1,-2)}{\pgfbox[center,center]{PBS}}
    \pgfputat{\pgfxy(3,-2)}{\pgfbox[center,center]{UBS}}

    \pgfline{\pgfxy(-0.3,-3.7)}{\pgfxy(-2.7,-2.3)} 
    \pgfline{\pgfxy(-0.1,-3.7)}{\pgfxy(-0.9,-2.4)} 
    \pgfline{\pgfxy(0.1,-3.7)}{\pgfxy(0.9,-2.4)}   
    \pgfline{\pgfxy(0.3,-3.7)}{\pgfxy(2.7,-2.3)}   

    \pgfputat{\pgfxy(-0.2,-4.2)}{\pgfbox[center,center]{PUBS}}


    \pgfsetlinewidth{0.3mm}
    \pgfsetdash{}{0mm}

    \pgfxyline(-8.0,-1.0)(-4.5,-1.0)
    \pgfxycurve(-4.5,-1.0)(-4.0,-1.0)(-3.75,-1.0)(-3.0,-1.5)
    \pgfxyline(-3.0,-1.5)(1.5,-4.5)
    \pgfstroke
    \pgfputat{\pgfxy(-8.0,-1.1)}{\pgfbox[left,top]{in \poly}}
    \pgfputat{\pgfxy(-8.0,-1.6)}{\pgfbox[left,top]{poly. kernel}}

    \pgfxyline(-8.0,3.0)(-4.0,3.0)
    \pgfxycurve(-4.0,3.0)(-3.0,3.0)(-3.0,3.0)(-2.0,2.0)
    \pgfxyline(-2.0,2.0)(2.0,-2.0)
    \pgfxycurve(1.5,-1.5)(2.5,-2.5)(2.5,-2.5)(4.0,-2.5)
    \pgfxyline(4.0,-2.5)(8.0,-2.5)
    \pgfputat{\pgfxy(-8.0,-0.9)}{\pgfbox[left,bottom]{\NP-H}}
    \pgfputat{\pgfxy(8.0,-1.8)}{\pgfbox[right,center]{\NP-C}}
    \pgfxycurve(-2.0,2.0)(-1.0,1.0)(1.0,1.0)(2.0,0.0)
    \pgfxycurve(2.0,0.0)(3.0,-1.0)(3.0,-1.0)(4.0,-1.0)
    \pgfxyline(4.0,-1.0)(8.0,-1.0)
    \pgfstroke
    \pgfputat{\pgfxy(-8.0,3.1)}{\pgfbox[left,bottom]{\PSPACE-C}}
    \pgfsetlinewidth{0.3mm}
    \pgfsetdash{{2mm}{1mm}}{0mm}

    \begin{pgftranslate}{\pgfxy(-0.1,-0.1)}
      \pgfxyline(-8.0,3.0)(-4.0,3.0)
      \pgfxycurve(-4.0,3.0)(-3.0,3.0)(-3.0,3.0)(-2.0,2.0)
      \pgfxyline(-2.0,2.0)(2.5,-2.5)
      \pgfxycurve(2.5,-2.5)(3.5,-3.5)(3.5,-3.5)(4.5,-3.5)
      \pgfxyline(4.5,-3.5)(8.0,-3.5)
      \pgfstroke
      \pgfputat{\pgfxy(-8.0,2.8)}{\pgfbox[left,top]{in \FPT}}
      \pgfputat{\pgfxy(-8.0,2.3)}{\pgfbox[left,top]{no poly. kernel}}
      \pgfputat{\pgfxy(8.0,-3.4)}{\pgfbox[right,bottom]{\W{1}-C}}
    \end{pgftranslate}

    \begin{pgftranslate}{\pgfxy(0.0,0.0)}
      \pgfxyline(-2.0,4.0)(0.0,2.0)
      \pgfxycurve(0.0,2.0)(2.0,0.0)(2.0,2.0)(4.0,0.0)
      \pgfxycurve(4.0,0.0)(4.5,-0.5)(4.5,-0.5)(5.5,-0.5)
      \pgfxyline(5.5,-0.5)(8.0,-0.5)
      \pgfstroke
      \pgfputat{\pgfxy(8.0,-0.4)}{\pgfbox[right,bottom]{\W{2}-C}}
    \end{pgftranslate}
  \end{pgfpicture}
  \caption{Complexity of \BPE\ for the restrictions P, U, B and S 
    illustrated as a lattice defined by all
    possible combinations of these restrictions. Again all
    parameterized results are shown in this paper and all classical results 
    are from B{\"a}ckstr{\"o}m and Nebel~\cite{BackstromNebel95}.
    Furthermore, as shown in this paper, PUS and PUBS are
    the only restrictions that admit a polynomial kernel, unless the
    Polynomial Hierarchy collapses. }
  \label{fig:pubs-lattice}
\end{figure}

\section{Parameterized Complexity}
\noindent
We define the basic notions of Parameterized Complexity and refer to
other sources~\cite{DowneyFellows99,FlumGrohe06} for an in-depth
treatment.  Let $\Sigma$ be a finite alphabet and $\naturals$ the set
of natural numbers.  A \emph{parameterized decision problem} or
\emph{parameterized problem}, for short, is a language $L \subseteq
\Sigma^*\times\naturals$.  The \emph{instances} of the problem are
pairs on the form $\tuple{\insti,k}$, where $\insti$ is a string over
$\Sigma^*$, which constitutes the main part, and $k$ is the
\emph{parameter}.
A parameterized problem is {\em fixed-parameter tractable (FPT)} 
if there exists an algorithm that solves any instance
$\tuple{\insti,k}$ in time $f(k)n^{c}$ where $f$ is an
arbitrary computable function, $n = |\tuple{\insti,k}|$,  and 
$c$ is a constant independent of both $n$ and $k$.  
\FPT\ is the class of all fixed-parameter tractable
parameterized problems. 
Since the emphasis in parameterized algorithms
lies on the dependence of the running time on $k$ we will sometimes
use the notation $O^*(f(k))$ as a synonym for $O(f(k)n^c)$.

Parameterized complexity offers a completeness theory, similar to the
theory of $\NP$-completeness, that supports the accumulation of strong
theoretical evidence that certain parameterized problems are not
fixed-parameter tractable. This theory is based on a hierarchy of
complexity classes
\[\FPT \subseteq \W{1} \subseteq \W{2} \subseteq \W{3} \subseteq
\cdots\] where all inclusions are believed to be strict. Each class
$\W{i}$ contains all parameterized problems that can be reduced by a
parameterized reduction to a certain canonical parameterized problem
(known as {\sc Weighted $i$-Normalized Satisfiability}) under {\em
  parameterized} reductions. 

A \emph{parameterized reduction} or \emph{fpt-reduction} from  
a parameterized problem $P$ to a
parameterized problem $Q$ is an algorithm that maps instances 
$\tuple{\insti,k}$ of $P$ to instances $\tuple{\insti',k'}$ of $Q$
such that:
\begin{enumerate}
\item $\tuple{\insti,k} \in P$ if and only if $\tuple{\insti',k'} \in
  Q$;
\item there is a computable function $g$ such that $k' \leq g(k)$; and
\item there is a computable function $f$ and a constant $c$ such that
  $\tuple{\insti,k}$ is computed in time $O(f(k) \cdot n^c)$, where $n
  = |\tuple{\insti,k}|$.
\end{enumerate}

A \emph{bi-kernelization} \cite{AlonGutinKimSzeiderYeo11} (or
\emph{generalized kernelization} \cite{BodlaenderDowneyFellowsHermelin09})
for a parameterized problem~$P$ is a parameterized reduction from $P$ to a
parameterized problem $Q$ 
that maps instances 
$\tuple{\insti,k}$ of $P$ to instances $\tuple{\insti',k'}$ of $Q$
with the additional property that
\begin{enumerate}
\item $\tuple{\insti',k'}$ can be computed in time that is polynomial in
  $\Card{\insti}+k$, and 
\item   $|\insti'|$ and $k'$ are both bounded by some function $f$~of~$k$.  
\end{enumerate}
The output $\tuple{\insti',k'}$ is called a \emph{bi-kernel}
(or a \emph{generalized kernel}). We say that $P$
has a \emph{polynomial bi-kernel} if $f$ is a polynomial.  
If $P=Q$, we call the bi-kernel a \emph{kernel}.
Every fixed-parameter tractable problem admits a bi-kernel, 
but not necessarily a polynomial bi-kernel~\cite{CaiEtAl93b}.


A \emph{polynomial parameter reduction} from a parameterized problem $P$
to a parameterized problem $Q$ 
is a parameterized reduction from $P$ to $Q$ 
that maps instances $\tuple{\insti,k}$ of
$P$ to instances $\tuple{\insti',k'}$ of $Q$ with the additional
property that 
\begin{enumerate}
\item $\tuple{\insti',k'}$ can be computed in time that is polynomial in
  $\Card{\insti}+k$, and 
\item  $k'$ is bounded by some polynomial $p$~of~$k$.  
\end{enumerate}

\noindent The following result is an adaptation of a result by
Bodlaender~\cite[Theorem~8]{Bodlaender09}.
\begin{proposition}\label{pro:poly-par-reduction-bi}
  Let $P$ and $Q$ be two parameterized problems such that there is a
  polynomial parameter reduction from $P$ to $Q$. Then, if $Q$
  has a polynomial bi-kernel also $P$ has a polynomial bi-kernel.
\end{proposition}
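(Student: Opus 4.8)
The plan is to show that the composition of a polynomial parameter reduction with a polynomial bi-kernelization is again a polynomial bi-kernelization (into the same target problem, or a suitable one), and then invoke the definitions. Concretely, suppose $R$ is a polynomial parameter reduction from $P$ to $Q$, and suppose $Q$ has a polynomial bi-kernel, i.e., a parameterized reduction $K$ from $Q$ to some parameterized problem $Q'$ such that on input $\tuple{\insti,k}$ it runs in time polynomial in $\Card{\insti}+k$ and outputs $\tuple{\insti'',k''}$ with $\Card{\insti''}$ and $k''$ bounded by a polynomial in $k$. I would form the composition $K \circ R$ and argue it is a polynomial bi-kernelization from $P$ to $Q'$.

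First I would verify correctness: $R$ preserves membership ($\tuple{\insti,k}\in P \iff R(\tuple{\insti,k})\in Q$) and $K$ preserves membership ($\cdot \in Q \iff K(\cdot)\in Q'$), so the composition preserves membership from $P$ to $Q'$. Next I would track the parameter bound through the composition: $R$ produces an instance with parameter $k' \le p(k)$ for some polynomial $p$, and $K$ then produces parameter $k'' \le q(k')$ for some polynomial $q$ (since $Q$ has a polynomial bi-kernel), hence $k'' \le q(p(k))$, which is polynomial in $k$. Similarly the size $\Card{\insti''}$ is bounded by a polynomial in $k'$, hence by a polynomial in $k$. The only slightly delicate point is the running-time bound, which is the main obstacle: a polynomial parameter reduction is only required to run in time polynomial in $\Card{\insti}+k$, so the intermediate instance $\tuple{\insti',k'}$ has size polynomial in $\Card{\insti}+k$; then $K$ runs in time polynomial in $\Card{\insti'}+k'$, which is polynomial in $\Card{\insti}+k$; composing two polynomial-time steps gives overall running time polynomial in $\Card{\insti}+k$, as required for a bi-kernelization.

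Having established that $K\circ R$ is a bi-kernelization from $P$ to $Q'$ whose output size and parameter are bounded by a polynomial in $k$, it follows immediately that $P$ has a polynomial bi-kernel. (One should note that the target problem of the bi-kernel for $P$ is $Q'$ rather than $P$ itself, which is harmless since the definition of a bi-kernel permits an arbitrary target parameterized problem; this is precisely the feature that distinguishes bi-kernels from ordinary kernels and makes the composition go through without any compatibility assumption between $P$ and $Q$.) I would present this as a short direct argument, with the bookkeeping of the three quantities — correctness, parameter/size bound via composition of polynomials, and running time — done explicitly but briefly, since each step is routine once the definitions from the preceding paragraphs are unwound.
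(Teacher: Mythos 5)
Your proposal is correct and follows essentially the same route as the paper's own (much terser) proof: compose the polynomial parameter reduction with the polynomial bi-kernelization of $Q$ and observe that the result is a polynomial bi-kernelization of $P$ into the target problem $Q'$. The bookkeeping you spell out (correctness, polynomial composition of the parameter/size bounds, and the running-time chain through the intermediate instance) is exactly what the paper leaves implicit.
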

\begin{proof}
  Let $(\insti,k)$ be an instance of $P$. We first apply the
  polynomial parameter reduction from $P$ to $Q$ to the instance
  $(\insti,k)$ and obtain the instance $(\insti',k')$ of~$Q$. Then
  the instance $(\insti'',k'')$ of some parameterized problem, say $Q'$,
  that we obtain by applying the polynomial bi-kernelization algorithm for $Q$ is also
  a polynomial bi-kernel for $P$. This concludes the proof of the
  proposition.
\end{proof}

For a parameterized problem $P \subseteq \Sigma^*\times\naturals$, 
we define its 
\emph{unparameterized version} $\unpar{P}$ as the corresponding
classical problem with the parameter given in unary.
That is, every instance  $\tuple{\insti,k}$ of $P$ has
a corresponding instance $\unpar{\tuple{\insti,k}}$ of $\unpar{P}$
such that $\unpar{\tuple{\insti,k}}$ is the string $\insti\#1^k$,
where $\#\not\in\Sigma$ is a separator symbol and $1$ is 
an arbitrary symbol in $\Sigma$.


We note here that most researchers in
kernelization talk about kernels instead of bi-kernels. It is however
well known that the approaches to obtain lower bounds for kernels and
bi-kernels, respectively, work in the same manner~\cite{BodlaenderDowneyFellowsHermelin09}.
It is immediate from the definitions that if a problem does not admit
a polynomial bi-kernel, then it cannot admit a polynomial kernel
either, so super-polynomial lower bounds for the size of bi-kernels
imply super-polynomial lower bounds for the size of kernels.  Since we will be
mostly concerned with lower bounds we will give all our results in
terms of bi-kernels.

An \emph{OR-composition algorithm} for a parameterized problem $P$ maps
$t$ instances $\tuple{\insti_1,k},\dotsc,\tuple{\insti_t,k}$ of $P$ to
one instance $\tuple{\insti',k'}$ of $P$ such that the algorithm runs in
time polynomial in $\sum_{1 \leq i \leq t}|\insti_i|+k$, the parameter
$k'$ is bounded by a polynomial in the parameter $k$, and
$\tuple{\insti',k'} \in P$ if and only if 
there is an $i$, where $1 \leq i \leq t$, such that 
$\tuple{\insti_i,k} \in P$.

\begin{proposition}[Bodlaender, et al.~{\cite[Lemmas~1 and~2]{BodlaenderDowneyFellowsHermelin09}}]\label{pro:or-comp-no-poly-kernel}
  \sloppypar
  If a parameterized problem $P$ has an OR\hy composition algorithm
  and its unparameterized version $\unpar{P}$ is \NP-complete,
  then it has no polynomial bi-kernel\,\footnote{
    The original results refer to kernels but the authors remark that
    all their results extend also to bi-kernels.}
  unless $\NOPOLYKERNEL$.
\end{proposition}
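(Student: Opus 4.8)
The plan is to prove the contrapositive: assuming that $P$ has both an OR-composition algorithm and a polynomial bi-kernel, I would assemble from these an \emph{OR-distillation} of the NP-complete problem $\unpar{P}$ --- a polynomial-time algorithm that turns $t$ input instances of bit-length at most $n$ into a single instance whose size is bounded by a fixed polynomial in $n$ alone (independent of $t$), the output being a yes-instance exactly when one of the inputs is --- and then invoke the theorem of Fortnow and Santhanam that no NP-complete problem admits such a distillation unless $\NOPOLYKERNEL$. Because the OR-composition algorithm insists that all its inputs carry the same parameter, the construction must begin with a grouping step, and the real content of the argument is the bookkeeping that keeps every size estimate free of any dependence on $t$.

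Concretely, given instances $\langle \mathbb{I}_1,k_1\rangle,\dots,\langle \mathbb{I}_t,k_t\rangle$ of $\unpar{P}$, each encoded in at most $n$ bits --- so in particular each $k_i\le n$, since the parameter is written in unary --- I would first sort them into at most $n+1$ buckets according to the value of the parameter. To each nonempty bucket I apply the OR-composition algorithm, obtaining a single instance of $P$ whose parameter is polynomially bounded in the common bucket parameter, hence by some fixed polynomial $p(n)$, and which is a yes-instance precisely when some instance in that bucket is; this runs in time polynomial in the total input length. I then run the polynomial bi-kernelization on each of these at most $n+1$ composed instances, producing instances of the target problem $Q$ whose sizes are bounded by a fixed polynomial $N(n)$ --- independent of $t$ --- with the same answers. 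At this point the $t$ original instances have been replaced by at most $n+1$ instances of $\unpar{Q}$, each of size at most $N(n)$, whose disjunction of answers coincides with that of the originals.

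To fold these into one instance I would exploit the NP-completeness of $\unpar{P}$. The language $\{\langle w_1,\dots,w_m\rangle : w_j\in\unpar{Q}\text{ for some }j\}$ is in $\NP$ (guess a witnessing index $j$ together with a certificate for $w_j$; here one uses, as is routine for kernel lower bounds, that $\unpar{Q}\in\NP$, and for an ordinary kernel this is immediate since $Q=P$), so it reduces in polynomial time to $\unpar{P}$. Applying this Karp reduction to the tuple of at most $n+1$ kernelized instances --- whose total length is polynomial in $n$ --- yields a single instance of $\unpar{P}$ of size polynomial in $n$, independent of $t$, that is a yes-instance iff at least one of the $t$ inputs is. This is exactly an OR-distillation of $\unpar{P}$, and the whole procedure runs in time polynomial in the total input length; by the Fortnow--Santhanam theorem it entails $\NOPOLYKERNEL$, which is the stated escape clause. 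Choosing $Q=P$ throughout gives the corresponding statement for polynomial kernels, and, as Bodlaender et al.\ note, nothing else changes between the kernel and bi-kernel versions.

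The step I expect to be the genuine obstacle --- and the reason the Fortnow--Santhanam result is quoted rather than reproved here --- is that theorem itself: the proof that an OR-distillation of an NP-complete language $L$ places its coNP-complete complement into $\NP/\textup{poly}$ proceeds through a delicate advice-based nondeterministic procedure that batches many candidate strings together and uses the distillation, together with a short nonuniform advice string, to certify that none of them belongs to $L$. On the side built above, the only hazard is arithmetic vigilance: one must check that the composed parameter $p(n)$, the kernel size $N(n)$, the bucket count, and the output of the final reduction are each polynomial in $n$ with no hidden factor of $t$, so that the pipeline genuinely distills $t$ instances into one of bounded size rather than merely performing a many-to-many compression.
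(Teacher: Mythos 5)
The paper does not actually prove this proposition---it imports it from Bodlaender et al.---so the natural in-paper comparison is with the proof of Proposition~\ref{pro:strong-or-comp-no-poly-kernel}, the strong-composition analogue. Your argument is the standard one and follows the same pipeline (compose, kernelize, read off a distillation of $\unpar{P}$, and invoke Fortnow--Santhanam via Corollary~\ref{cor:no-distillation}). The extra work you do---bucketing the $t$ inputs by parameter value and recombining the at most $n+1$ bucket outputs at the end---is precisely the work that the ``strong'' composition of Proposition~\ref{pro:strong-or-comp-no-poly-kernel} is designed to make unnecessary, since it accepts heterogeneous parameters and produces a single output directly. Your size bookkeeping is right: unary parameters give $k_i\le n$, hence at most $n+1$ buckets, each composed parameter is $\mathrm{poly}(n)$, each bi-kernel output is $\mathrm{poly}(n)$, and nothing depends on~$t$.

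The one step that does not go through as written for \emph{bi}-kernels is the final fold. You reduce the OR of the $\le n+1$ kernelized instances back to $\unpar{P}$ via NP-completeness, which requires $\unpar{Q}\in\NP$ for the bi-kernel target $Q$; you flag this yourself, but the paper's definition of a bi-kernelization places no restriction at all on the target problem $Q$, so for the proposition as stated (arbitrary bi-kernel) this is a real hole in your route rather than a routine convention. It is easily plugged, and the paper's own machinery shows how: the definition of a distillation algorithm used here allows the output to be an instance of \emph{any} fixed problem, and Proposition~\ref{pro:sat-no-distillation} together with Corollary~\ref{cor:no-distillation} holds for arbitrary target languages. So instead of Karp-reducing back to $\unpar{P}$, simply output the tuple of the $\le n+1$ kernelized instances as a single instance of the fixed language $\SB \tuple{w_1,\dots,w_m} \SM w_j\in\unpar{Q} \text{ for some } j \SE$; its total size is still $\mathrm{poly}(n)$, and no membership of $\unpar{Q}$ in $\NP$ is needed. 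With that substitution your proof is complete; for ordinary kernels ($Q=P$, and $\unpar{P}$ is assumed NP-complete) your original fold is already fine.
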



Unfortunately, this proposition can only be applied to problems
that are contained in \NP{}, while membership in \NP{} is not
known for some of the problems that we consider in this
article. Furthermore, to the best of our knowledge, there is no
consistent presentation of bi-kernel lower bounds for problems that
may not be in \NP{}. Hence, we fill this gap by giving such a
presentation from first principles, i.e., in the following we will
give an analogue of  Proposition~\ref{pro:or-comp-no-poly-kernel} 
for problems that are not (or not known to be) in \NP{}. 

A \emph{distillation algorithm}~\cite{BodlaenderDowneyFellowsHermelin09,FortnowSanthanam08} 
for a classical problem $P$ is an
algorithm that takes $t$ instances $\insti_1,\dotsc,\insti_t$ of $P$ as the
input, runs in time polynomial in $\sum_{1 \leq i \leq t}|\insti_i|$, and outputs an
instance $\insti$ of some problem $Q$ such that: 
(1) $\insti \in Q$ if and only if 
there is an~$i$, where $1 \leq i \leq t$, such that 
$\insti_i \in P$ and 
(2)~$|\insti|$ is polynomial in $\max_{1  \leq i \leq t}|\insti_i|$.
\begin{proposition}[{Fortnow and Santhanam~\cite[Theorem 1.2]{FortnowSanthanam08}}]\label{pro:sat-no-distillation}
  \sloppypar
  Unless $\NOPOLYKERNEL$, the satisfiability problem for propositional
  formulas ($\SAT$) has no distillation algorithm.
\end{proposition}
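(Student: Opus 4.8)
\medskip
\noindent\emph{Proof plan.}
This is a result of Fortnow and Santhanam, and the plan is to reproduce their argument. Assume for contradiction that $\SAT$ has a distillation algorithm $D$, mapping $t$-tuples of propositional formulas to instances of some problem $Q$. Since $\overline{\SAT}$ (deciding unsatisfiability) is $\coNP$-complete and $\NP/\textup{poly}$ is closed under polynomial-time many-one reductions, it suffices to show $\overline{\SAT}\in\NP/\textup{poly}$; this is exactly $\NOPOLYKERNEL$, the stated conclusion, so in particular no further collapse of the polynomial hierarchy needs to be invoked here.

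The first step is to record the one structural fact that makes $D$ useful. After padding all input formulas to a common length $n$, the definition of a distillation guarantees that $|D(\phi_1,\dots,\phi_t)|\le p(n)$ for a fixed polynomial $p$ that does \emph{not} depend on the number $t$ of instances. Combined with the semantic guarantee of $D$, this yields a dichotomy: if every $\phi_i$ is unsatisfiable then $D(\phi_1,\dots,\phi_t)\notin Q$, whereas if some $\phi_i$ is satisfiable then $D(\phi_1,\dots,\phi_t)\in Q$. Hence the set of outputs reachable from ``all-unsatisfiable'' $t$-tuples and the set reachable from tuples containing a satisfiable formula are disjoint subsets of the set of strings of length at most $p(n)$.

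The engine of the proof is then a pigeonhole/counting argument. Choosing $t$ of order $p(n)$, the (potentially very many) unsatisfiable formulas of length $n$ are forced through a comparatively narrow output window, and from this one extracts polynomial-size advice $\alpha_n$ together with a nondeterministic polynomial-time verifier for $\overline{\SAT}$ at length $n$: on input $\phi$ the verifier guesses a completion of $\phi$ to a $t$-tuple, recomputes $D$ on it, and accepts iff the resulting output has been ``certified unsatisfiable'' by $\alpha_n$. Soundness is immediate from the dichotomy --- a certified output lies outside $Q$, forcing every entry of the guessed tuple, in particular $\phi$, to be unsatisfiable --- and the counting argument is what guarantees completeness, i.e.\ that every unsatisfiable length-$n$ formula admits such a certified completion. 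The degenerate case in which there are only polynomially many unsatisfiable formulas of length $n$ is handled by hard-coding them into the advice.

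The step I expect to be the real obstacle is keeping the advice polynomial: a na\"ive ``greedy covering'' of the unsatisfiable formulas by completions only bounds the advice by the number of reachable output strings, which is exponential in $p(n)$. Getting around this --- exploiting that a \emph{polynomial-time} distillation cannot itself test satisfiability and is therefore too ``oblivious'' to scatter the unsatisfiable formulas over many outputs --- is the technical heart of the Fortnow--Santhanam argument; by comparison, the reduction to $\overline{\SAT}$, the padding normalisation, and the soundness of the verifier are routine.
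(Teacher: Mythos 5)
The paper offers no proof of this proposition at all: it is imported verbatim from Fortnow and Santhanam \cite{FortnowSanthanam08}, so there is no in-paper argument to compare against. Judged as a reconstruction of their proof, your proposal gets the architecture right --- reduce to showing $\overline{\SAT}\in\NP/\textup{poly}$, pad to a common length $n$, use the OR-dichotomy on outputs of the distillation $D$, take as advice a list of output strings certified to lie outside $Q$, and let an $\NP$ verifier guess a completion of the input formula to a $t$-tuple and check that $D$ maps it onto the advice list. Soundness is, as you say, immediate from the dichotomy.

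However, the one step that carries the entire proof --- that a \emph{polynomial-size} list of certified outputs suffices to cover every unsatisfiable formula of length $n$ --- is left unproven, and your diagnosis of it points in the wrong direction. The greedy covering you dismiss as na\"{\i}ve is in fact exactly the Fortnow--Santhanam argument; what makes it terminate quickly is not any ``obliviousness'' of the polynomial-time distillation but a pure pigeonhole bound. Take $t=p(n)+1$ and let $U$ be the set of not-yet-covered unsatisfiable formulas of length $n$. The $|U|^{t}$ tuples in $U^{t}$ are mapped by $D$ into at most $2^{p(n)+1}$ output strings, so some output $y$ receives at least $|U|^{t}/2^{p(n)+1}=(|U|/2)^{t}$ of them; since any family of tuples drawn from $V^{t}$ with $|V|<|U|/2$ has fewer than $(|U|/2)^{t}$ members, the formulas occurring in tuples sent to $y$ number at least $|U|/2$. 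Each greedy round therefore halves $U$, the covering set has size at most $\log_{2}|U_{n}|+1=O(n)$, and the advice is polynomial --- with no appeal to what $D$ can or cannot compute internally. Without this halving lemma your verifier has no completeness guarantee, so as written the proposal is a correct skeleton with the load-bearing step missing.
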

Using the fact that $\SAT$ is \NP\hy complete and there are polynomial
reductions from $\SAT$ to any other \NP\hy hard problem, we
immediately obtain the following corollary.
\begin{corollary}\label{cor:no-distillation}
  Unless $\NOPOLYKERNEL$, no \NP\hy hard problem has a distillation algorithm.
\end{corollary}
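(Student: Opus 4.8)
The plan is to derive Corollary~\ref{cor:no-distillation} from Proposition~\ref{pro:sat-no-distillation} by showing that a distillation algorithm for any \NP-hard problem would yield one for \SAT. So suppose, for contradiction, that some \NP-hard problem $R$ admits a distillation algorithm $\mathcal{D}$, which on input $t$ instances $\insti_1,\dots,\insti_t$ of $R$ runs in time polynomial in $\sum_i|\insti_i|$ and outputs an instance of some problem $Q$ that is a ``yes'' instance iff some $\insti_i$ is a ``yes'' instance of $R$, with output size polynomial in $\max_i|\insti_i|$.

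The key step is to reduce \SAT\ to $R$ instance-wise and then feed the results through $\mathcal{D}$. Since $R$ is \NP-hard and \SAT\ is in \NP, there is a polynomial-time many-one reduction $f$ from \SAT\ to $R$; in particular there is a polynomial $q$ with $|f(\insti)| \le q(|\insti|)$ for every formula $\insti$. Given $t$ \SAT\ instances $\psi_1,\dots,\psi_t$, I would first compute $\insti_i = f(\psi_i)$ for each $i$ (total time polynomial in $\sum_i|\psi_i|$ since each application of $f$ is polynomial), then run $\mathcal{D}$ on $\insti_1,\dots,\insti_t$ to obtain an instance $\insti$ of $Q$. The composition $\mathcal{D}\circ(f,\dots,f)$ is then the claimed distillation algorithm for \SAT: its running time is polynomial in $\sum_i|\psi_i|$; its output $\insti$ satisfies $\insti\in Q$ iff some $f(\psi_i)$ is a ``yes'' instance of $R$ iff some $\psi_i$ is satisfiable (using correctness of $f$ and of $\mathcal{D}$); and $|\insti|$ is polynomial in $\max_i|f(\psi_i)| \le \max_i q(|\psi_i|) = q(\max_i|\psi_i|)$, hence polynomial in $\max_i|\psi_i|$, as required. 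This contradicts Proposition~\ref{pro:sat-no-distillation} unless $\NOPOLYKERNEL$, establishing the corollary.

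The only point requiring a little care is verifying that the size bound ``polynomial in $\max_i|\psi_i|$'' survives the two-stage composition: the distillation output size is polynomial in $\max_i|\insti_i|$, and each $|\insti_i|$ is individually polynomially bounded in $|\psi_i|$, so composing a polynomial with a polynomial keeps things polynomial, and the max commutes appropriately with the monotone bound $q$. There is no real obstacle here; this is essentially the standard observation that distillability, like kernelizability, transfers along polynomial-time many-one reductions, and it is exactly the statement flagged in the text as ``immediate.''
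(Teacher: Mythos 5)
Your proof is correct and is exactly the argument the paper has in mind: the paper derives the corollary ``immediately'' from Proposition~\ref{pro:sat-no-distillation} via the polynomial-time many-one reduction from $\SAT$ to any \NP-hard problem, which is precisely the instance-wise composition you spell out. Your write-up simply makes explicit the running-time and output-size bookkeeping that the paper leaves implicit.
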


Below, we first introduce a stronger OR\hy composition concept and
then prove generalizations to \NP-hard problems of two results known
from the literature.

A \emph{strong OR\hy composition algorithm} for a parameterized
problem $P$ maps~$t$ instances $\tuple{\insti_1,k_1},\dotsc$, \hskip0pt$\tuple{\insti_t,k_t}$ of $P$ to
one instance $\tuple{\insti,k}$ of $P$ such that the algorithm runs in
time polynomial in $\sum_{1 \leq i \leq t}|\insti_i|+\max_{1 \leq i \leq t}k_i$, 
the parameter
$k$ is bounded by a polynomial in $\max_{1 \leq i \leq t}k_i$, and
$\tuple{\insti,k} \in P$ if and only if there is
an $i$, where $1 \leq i \leq t$, such that $\tuple{\insti_i,k_i} \in P$.

The following result is an adaptation of 
Proposition~\ref{pro:or-comp-no-poly-kernel}, based on the original
proof of this result~\cite{BodlaenderDowneyFellowsHermelin09}.
\begin{proposition}\label{pro:strong-or-comp-no-poly-kernel}
  If a parameterized problem $P$ has a strong OR\hy composition algorithm
  and its unparameterized version $\unpar{P}$ is \NP-hard,
  then it has no polynomial bi-kernel unless $\NOPOLYKERNEL$.
\end{proposition}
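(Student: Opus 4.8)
The plan is to mimic the classical OR-composition argument of Bodlaender, Downey, Fellows and Hermelin, but feeding everything through Corollary~\ref{cor:no-distillation} rather than Proposition~\ref{pro:or-comp-no-poly-kernel}, so that \NP-hardness of $\unpar{P}$ (rather than \NP-completeness) suffices. Suppose for contradiction that $P$ has a strong OR-composition algorithm and a polynomial bi-kernelization, say a parameterized reduction $K$ from $P$ to some problem $Q$ such that $K(\tuple{\insti,k})=\tuple{\insti',k'}$ is computed in polynomial time with $|\insti'|,k'\le p(k)$ for some polynomial $p$. Since $\unpar{P}$ is \NP-hard and $\SAT$ is \NP-complete, fix a polynomial-time many-one reduction $R$ from $\SAT$ to $\unpar{P}$; note that on an input formula $\varphi$ of length $m$ it produces a string of the form $\insti\#1^{k}$ with $k\le |R(\varphi)|\le q(m)$ for a polynomial $q$. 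Composing, from any $\SAT$-instance we obtain in polynomial time an instance of $P$ whose parameter is polynomially bounded in the formula length.

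The core of the argument is to build a distillation algorithm for $\SAT$, contradicting Corollary~\ref{cor:no-distillation}. Given $t$ formulas $\varphi_1,\dotsc,\varphi_t$, let $m=\max_i|\varphi_i|$. First apply $R$ to each to get instances $\tuple{\insti_1,k_1},\dotsc,\tuple{\insti_t,k_t}$ of $P$ with each $k_i\le q(m)$. Then apply the strong OR-composition algorithm: since it runs in time polynomial in $\sum_i|\insti_i|+\max_i k_i$ (and each $|\insti_i|$ is polynomial in $|\varphi_i|$), and since the resulting parameter $k$ is bounded by a polynomial in $\max_i k_i\le q(m)$, we obtain in overall polynomial time a single instance $\tuple{\insti,k}$ of $P$ with $k\le r(m)$ for a polynomial $r$, such that $\tuple{\insti,k}\in P$ iff some $\varphi_i$ is satisfiable. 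Finally apply the bi-kernelization $K$ to get $\tuple{\insti',k'}$ of $Q$ with $|\insti'|\le p(k)\le p(r(m))$, computed in polynomial time; this is the output of our would-be distillation, with target problem $Q$. Correctness is immediate from the chain of "iff"s, and the output size is polynomial in $m=\max_i|\varphi_i|$, exactly as the distillation definition requires.

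Thus $\SAT$ would have a distillation algorithm, contradicting Corollary~\ref{cor:no-distillation} unless $\NOPOLYKERNEL$; hence $P$ has no polynomial bi-kernel unless $\NOPOLYKERNEL$. The point where care is needed is the bookkeeping on parameters: one must check that feeding $\unpar{\SAT}$-style instances through $R$ yields \emph{parameter} values (not just string lengths) bounded by a polynomial in $m$, since the strong OR-composition hypothesis only controls the running time in terms of $\max_i k_i$ and only guarantees the output parameter is polynomial in $\max_i k_i$ — this is precisely why the ``strong'' variant (with possibly distinct $k_i$) is needed, as opposed to the ordinary OR-composition where all instances share one parameter $k$. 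The other mild subtlety is that the distillation target $Q$ is allowed to differ from $\SAT$, which is exactly why Corollary~\ref{cor:no-distillation} (stated for arbitrary \NP-hard source problems and arbitrary target) is the right tool; everything else is routine composition of polynomial-time maps.
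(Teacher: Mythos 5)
Your proof is correct and follows essentially the same route as the paper: combine the strong OR\hy composition with the hypothetical polynomial bi-kernel to obtain a distillation algorithm, and then invoke the Fortnow--Santhanam no-distillation result. The only (cosmetic) difference is where the \NP-hardness of $\unpar{P}$ enters: the paper builds a distillation for $\unpar{P}$ itself, bounding the parameters via the unary encoding ($k_i\le|\unpar{\tuple{\insti_i,k_i}}|$) and then applying Corollary~\ref{cor:no-distillation}, whereas you pre-compose with a polynomial-time reduction from $\SAT$ and distill $\SAT$ directly --- the same argument with the reduction applied one step earlier.
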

\begin{proof}
  We show that if $P$ satisfies the conditions of the proposition 
  and $P$ has a polynomial bi-kernel, then $\unpar{P}$ has a
  distillation algorithm and it thus follows from 
  Corollary~\ref{cor:no-distillation} that $\NOPOLYKERNEL$.

  Let $\tuple{\insti_1,k_1},\dotsc,\tuple{\insti_t,k_t}$ be instances
  of $P$ and $\unpar{\tuple{\insti_1,k_1}},\dotsc,\unpar{\tuple{\insti_t,k_t}}$ be the corresponding
  unparameterized instances of $\unpar{P}$. We give a distillation algorithm
  for $\unpar{P}$ that consists of two steps. 

  In the first step the algorithm runs the strong OR\hy composition
  algorithm for $P$ and obtains the instance $\tuple{\insti,k}$ from
  the instances $\tuple{\insti_1,k_1},\dotsc,\tuple{\insti_t,k_t}$. In
  the second step the algorithm runs the polynomial bi-kernelization
  algorithm on $\tuple{\insti,k}$ and obtains the instance
  $\tuple{\insti',k'}$ of a parameterized problem~$Q$. The algorithm
  then outputs $\unpar{\tuple{\insti',k'}}$. 

  In the following we will show that the algorithm outlined above is
  indeed a distillation algorithm for $\unpar{P}$. 
  It is straightforward to verify that 
  $\unpar{\tuple{\insti',k'}} \in \unpar{Q}$ if and only if there is an $i$,
  where $1 \leq i \leq t$, such that $\unpar{\tuple{\insti_i,k_i}} \in \unpar{P}$. 
  The running time of the first step of
  our algorithm is polynomial in 
  $\sum_{1\leq i \leq t}|\insti_i|+\max_{1\leq i \leq t}k_i$ (because of the properties
  of the strong OR\hy composition algorithm), which in turn is polynomial in
  $\sum_{1\leq i \leq t}|\unpar{\tuple{\insti_i,k_i}}|$. 
  The running time of the second step is polynomial in 
  $|\unpar{\tuple{\insti,k}}|$, which is again polynomial in 
  $\sum_{1\leq i \leq t}|\unpar{\tuple{\insti_i,k_i}}|$. 
  Hence, the running time of the complete algorithm is polynomial in 
  $\sum_{1\leq i \leq t}|\unpar{\tuple{\insti_i,k_i}}|$, as required for 
  a distillation algorithm.

  Furthermore, since $\tuple{\insti',k'}$ is a polynomial bi-kernel
  of $\tuple{\insti,k}$, 
  it follows that  $|\unpar{\tuple{\insti',k'}}|$ is bounded by a
  polynomial function in $k'$, and hence in $k$. Because of the properties
  of the strong
  OR\hy composition algorithm we also obtain that $k$ is bounded by a
  polynomial in $\max_{1 \leq i \leq t}k_i$ which in turn is bounded
  by $\max_{1\leq i \leq t}|\unpar{\tuple{\insti_i,k_i}}|$. Putting all this
  together, we obtain that  $|\unpar{\tuple{\insti',k'}}|$ is bounded by
  a polynomial function of $\max_{1\leq i \leq t}|\unpar{\tuple{\insti_i,k_i}}|$. 
  This shows that our algorithm is a
  distillation algorithm for $\unpar{P}$, which, because of
  Corollary~\ref{cor:no-distillation}, implies that $\NOPOLYKERNEL$. 
\end{proof}



\section{Planning Framework}\label{sec:planning-framework}
\noindent
Let $\vars = \set{v_1,\ldots,v_n}$ be a finite set of \emph{variables}
over a finite \emph{domain} $\dom$.  Implicitly define $\dom^+ = \dom
\union \set{\undef}$, where $\undef$ is a special ``undefined'' value not present in
$\dom$.  Then $\dom^n$ is the set of \emph{total states} and
$(\dom^+)^n$ is the set of \emph{partial states} over $\vars$ and
$\dom$, where $\dom^n \subseteq (\dom^+)^n$.  The value of a variable
$v$ in a state $s \in (\dom^+)^n$ is denoted $\proj{s}{v}$.
A \emph{\sasplus\ instance} is a tuple $\iplan =
\tuple{\vars,\dom,\acts,\init,\goal}$ where $\vars$ is a set of
variables, $\dom$ is a domain, $\acts$ is a set of \emph{actions},
$\init \in \dom^n$ is the \emph{initial state} and $\goal \in
(\dom^+)^n$ is the \emph{goal state}. 
Each
action $a \in \acts$ has a \emph{precondition} $\pre(a) \in
(\dom^+)^n$ and an \emph{effect} $\eff(a) \in (\dom^+)^n$.  We will
frequently use the convention that a variable has value $\undef$ in a
precondition/effect unless a value is explicitly specified.  Let $a
\in \acts$ and let $s \in \dom^n$.  Then $a$ is \emph{valid in $s$} if
for all $v \in \vars$, either $\proj{\pre(a)}{v} = \proj{s}{v}$ or
$\proj{\pre(a)}{v} = \undef$.  Furthermore, the \emph{result of $a$ in
$s$} is a state $t \in \dom^n$ defined such that for all $v \in
\vars$, $\proj{t}{v} = \proj{\eff(a)}{v}$ if $\proj{\eff(a)}{v} \neq
\undef$ and $\proj{t}{v} = \proj{s}{v}$ otherwise.

Let $s_0, s_\ell \in \dom^n$ and let $\plan = \seq{a_1,\ldots,a_\ell}$
be a sequence of actions.  Then $\plan$ is a \emph{plan from $s_0$ to
$s_\ell$} if either
\begin{itemize}
\item[(1)] $\plan = \seq{}$ and $\ell = 0$ or
\item[(2)] there are states $s_1,\ldots,s_{\ell-1} \in \dom^n$ such
  that for all $1 \leq i \leq \ell$, $a_i$ is valid in
  $s_{i-1}$ and $s_i$ is the result of $a_i$ in $s_{i-1}$.  
\end{itemize}
A state $s
\in \dom^n$ is a \emph{goal state} if for all $v \in \vars$, either
$\proj{\goal}{v} = \proj{s}{v}$ or $\proj{\goal}{v} = \undef$.  An
action sequence $\plan$ is a \emph{plan for $\iplan$} if it is a plan
from $\init$ to some goal state.  
We will study the following problem:

\smallskip

\begin{quote}
\noindent \BPE\\ \textit{Instance:} A tuple $\tuple{\iplan,k}$ where
$\iplan$ is a \sasplus\ instance and $k$ is a positive integer.\\
\textit{Parameter:} The integer $k$.\\ \textit{Question:} Does
$\iplan$ have a plan of length at most $k$?
\end{quote}

\smallskip

The propositional version of the \strips\ planning language 
can be treated as the special case of \sasplus\ satisfying 
restriction B.  
More precisely, propositional \strips\ is commonly used in two variants,
differing in whether negative preconditions are allowed or not.
Both these variants as well as \sasplus\ have been shown to be
equivalent under a strong form of polynomial reduction that
preserves solution length~\cite{Backstrom95}.
Hence, we will not treat \strips\ explicitly.
It should be noted, though, that while the equivalence holds in 
the general case, it often breaks down when further restrictions 
are imposed.

\smallskip

\noindent We will mainly consider the following four restrictions, 
originally defined by B\"{a}ckstr\"{o}m and Klein
\cite{BackstromKlein91}.
\begin{quote}
\begin{description}
  \item[Post-unique (P):] For each $v \in \vars$ and each $x \in \dom$ there is at
most one $a \in \acts$ such that $\proj{\eff(a)}{v} = x$.
  \item[Unary (U):] For each $a \in \acts$, $\proj{\eff(a)}{v} \neq \undef$
for exactly one $v \in \vars$.
  \item[Binary (B):] $\card{\dom} = 2$.
  \item[Single-valued (S):] For all $a,b \in \acts$ and all $v \in \vars$, if
$\proj{\pre(a)}{v} \neq \undef$, $\proj{\pre(b)}{v} \neq \undef$ and
$\proj{\eff(a)}{v} = \proj{\eff(b)}{v} = \undef$, then
$\proj{\pre(a)}{v} = \proj{\pre(b)}{v}$.
\end{description}
\end{quote}

\noindent For any set $R$ of such restrictions we write
\mbox{$R$-\BPE} to denote the restriction of \BPE\ to only instances
satisfying the restrictions in $R$.

Additionally we will consider restrictions on the number of
preconditions and effects as previously considered
by Bylander~\cite{Bylander94}. For two non-negative integers 
$p$ and $e$ we write\\
\mbox{$(p,e)$-\BPE} to denote the restriction of \BPE{} to only
instances where every action has at most $p$ preconditions and at most
$e$ effects. 
Apart from doing a parameterized analysis, we also generalize
Bylander's results to \sasplus; all our membership results hold
for arbitrary domain size while all our hardness results apply
already for binary domains.
All non-parameterized hardness results in Table~\ref{table:bylander}
follow directly from 
Bylander's classical complexity results for 
\strips~\cite[Fig.~1 and~2]{Bylander94}. 
Note that we use
results both for bounded and unbounded plan existence,
 which is
justified since the unbounded case is (trivially) polynomial-time
reducible to the bounded case.
The membership results for \PSPACE\ are immediate since
\textsc{Bounded $\SAS$ Planning} is in \PSPACE.
The membership results for \NP\ (when $m_p=0$)
follow from Bylander's~\cite{Bylander94}
Theorem~3.9, which says that every solvable \strips\ instance 
with $m_p=0$ has a plan
of length $\leq m$ where $m$ is the number of actions. 
It is easy to verify that the same bound holds for \sasplus\
instances.

\section{Hardness Results}
\noindent
In this section we prove the three main hardness results of this paper.
For the first proof we need the following problem, which is
\W{2}-complete~\cite[p.~464]{DowneyFellows99}.

\smallskip
\begin{quote}
\noindent
    \textsc{Hitting Set}\\
\noindent    
    \emph{Instance:} A finite set $S$, a collection $C$ of 
    subsets of $S$, and an integer~$k$.\\
\noindent    
    \emph{Parameter:} The integer $k$.\\  
\noindent
    \emph{Question:} Does $C$ have a \emph{hitting set} of cardinality
    at most $k$, i.e., is there a set $H \subseteq S$ with
    $\card{H} \leq k$ and $H \cap c \neq \emptyset$ 
    for every $c \in C$?
\end{quote}

\begin{theorem}\label{arb-hard}
  \BPER{B,S}\ 
  is $\W{2}$-hard, even
  when the actions have no preconditions.
\end{theorem}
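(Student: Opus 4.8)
The plan is to give a parameterized reduction from \textsc{Hitting Set} to \BPER{B,S} in which actions have no preconditions. First I would set up the variables: given a \textsc{Hitting Set} instance $\tuple{S,C,k}$, introduce one binary variable $v_c$ for each $c \in C$, with domain $\set{0,1}$, initial value $0$, and goal value $1$; these encode ``the set $c$ has been hit.'' For each element $s \in S$ I would create a single action $a_s$ with no precondition whose effect sets $v_c = 1$ for every $c \in C$ containing $s$; picking $a_s$ into the plan corresponds to putting $s$ into the hitting set. The new parameter is $k' = k$. Since every action has an empty precondition, the plan is just an (order-irrelevant) subset of actions, and applying $\set{a_s : s \in H}$ reaches the goal iff $H$ hits every $c \in C$; hence $\iplan$ has a plan of length $\le k$ iff $C$ has a hitting set of size $\le k$. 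The reduction is clearly polynomial-time and $k' = k$, so it is a genuine fpt-reduction.

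The one genuine subtlety is verifying restriction S (single-valued). S constrains pairs of actions $a,b$ and a variable $v$ for which $v$ occurs in both preconditions but in neither effect; since in our construction \emph{no} action has any precondition, the hypothesis $\proj{\pre(a)}{v} \neq \undef$ is never satisfied, so S holds vacuously. Restriction B holds because every variable has domain $\set{0,1}$ by construction. So the ``even when actions have no preconditions'' clause is exactly what makes S free.

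The main obstacle I anticipate is not the correctness argument — which is essentially immediate once the encoding is written down — but rather a dimensional/bookkeeping issue: one must make sure the goal state and initial state are honest total states over the binary domain (every $v_c$ is defined in both $\init$ and $\goal$), and that the empty plan / partial-state conventions from Section~\ref{sec:planning-framework} are respected. A secondary point worth a sentence is why length exactly $\le k$ rather than $= k$ causes no trouble: if a hitting set of size $< k$ exists we can pad with any already-``used'' element $a_s$ (idempotent, harmless) or simply appeal to the ``at most $k$'' phrasing of \BPE. I would also remark that since \textsc{Hitting Set} is \W{2}-complete, this reduction gives \W{2}-hardness, and it matches the \W{2}-completeness entry for arbitrary $e$, $p=0$ in Table~\ref{table:bylander}; the membership direction is handled separately.
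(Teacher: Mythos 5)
Your reduction is exactly the one the paper uses: one binary variable $v_c$ per set $c\in C$ initialized to $0$ with goal $1$, one precondition-free action $a_s$ per element $s\in S$ setting $v_c=1$ for all $c\ni s$, parameter unchanged, with S holding vacuously and correctness in both directions argued the same way. The proposal is correct and matches the paper's proof essentially verbatim.
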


\begin{proof}
  We proceed by a parameterized reduction from \textsc{Hitting Set}.
  Let $\insti=\tuple{S,C,k}$ be an instance of this problem. We construct an
  instance $\insti'=\tuple{\iplan,k}$ with $\iplan=\tuple{\vars,\dom,\acts,\init,\goal}$
  of the
  \BPER{B,S}\ 
  problem such that $\insti$ has a
  hitting set of size at most $k$ if and only if there is a plan of
  length at most $k$ for $\insti'$ as follows. 
  Let $\vars=\SB v_c \SM c \in C \SE$,
  let $\dom=\SB 0,1 \SE$ and
  let $\acts = \set{a_s \st s \in S}$ such that
  $\proj{\eff(a_s)}{v_c}=1$ if $s \in c$. 
  We set $\init=\tuple{0,\ldots,0}$ and
  $\goal=\tuple{1,\ldots,1}$.
  Clearly, $\iplan$ is binary (B) and no action has any preconditions.  
  It follows trivially from the latter observation that $\iplan$
  is also single-valued (S)
  It remains to show that $\iplan$ has a plan of
  length at most $k$ if and only if $\insti$ has a hitting set of
  size at most~$k$.

  Suppose that $\insti$ has a hitting set $H=\{h_1,\dotsc,h_l\}$ of size at
  most $k$. Then $\plan \mdef \seq{a_{h_1},\dotsc,a_{h_l}}$ is a plan of length at
  most $k$ for $\iplan$.

  For the reverse direction suppose that there is a plan $\plan=\seq{a_1,\dotsc,
  a_{l}}$ of length at most $k$ for $\iplan$. 
  We will show that the set $H_\iplan \mdef \SB s \SM a_s \in \plan
  \SE$ is a hitting set of size at most $k$ for $\insti$. 
  Since  $\proj{\init}{v_c}=0$ and $\proj{\goal}{v_c}=1$ for every 
  $c \in C$, it follows that
  for every $c \in C$ there has to be an action $a_s$ with $s \in
  c$ in $\plan$. Hence, $H_\iplan$ is a hitting set for $\insti$ and because
  $l \leq k$ it follows that $|H_\iplan|\leq k$.
\end{proof}

We continue with the second result, using the following problem,
which is \W{1}-complete~\cite{Pietrzak03}.

\smallskip
\begin{quote}
  \noindent
  \textsc{Multicolored Clique}\\
  \noindent  
  \emph{Instance:} A $k$-partite graph $G=\tuple{V,E}$ with 
  a partition $V_1,\dots,V_k$ of $V$ such that $|V_1|=\dots=|V_n|=n$.
\\
  \noindent  
  \emph{Parameter:} The integer $k$.\\  
  \noindent
  \emph{Question:} Are there nodes $v_1,\dots,v_k$ such that
  $v_i\in V_i$ for all $1\leq i \leq k$ and
  $\{v_i,v_j\}\in E$ for all $1 \leq i < j \leq k$ 
  (i.e. the subgraph of $G$ induced by $\{v_1,\dots,v_k\}$ is a clique
  of size $k$)?
\end{quote}

\smallskip

\begin{theorem} \label{unary-hard}
  \BPER{U,B,S}\  is $\W{1}$-hard, even for binary instances where 
  every action has at most  $1$ precondition and $1$ effect.
\end{theorem}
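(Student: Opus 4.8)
The plan is to give a parameterized reduction (in fact a polynomial parameter reduction) from \textsc{Multicolored Clique}. Given a $k$-partite graph $G=\tuple{V,E}$ with parts $V_1,\dots,V_k$, I would construct a \sasplus\ instance $\iplan$ over the binary domain $\set{0,1}$ in which every action is unary and has at most one precondition, and in which the plan-length bound $k'$ is a polynomial in $k$ alone (of order $k^2$), so that it does not depend on $|V|$. The intended reading of an accepting plan is: it first selects one vertex from each part, and then certifies, for all $\binom{k}{2}$ colour pairs $\{i,j\}$, that the two selected vertices are joined by an edge of $G$.

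Concretely I would use a binary variable $\mathrm{sel}_v$ for each $v\in V$ (initially $0$), a binary variable for each colour $i$ and for each pair $\{i,j\}$ that must be $1$ in the goal, plus edge-private auxiliary variables. Selecting a vertex $v$ is an action with no precondition and single effect $\mathrm{sel}_v:=1$; marking colour $i$ as covered reads a single $\mathrm{sel}_v$ with $v\in V_i$; and certifying a pair $\{i,j\}$ is done, for each edge $e=\{u,w\}\in E$ with $u\in V_i$ and $w\in V_j$, by a short fixed-length chain of one-precondition, one-effect actions that starts from a $\mathrm{sel}$-variable, passes through auxiliary variables private to $e$, and finally sets the goal variable of the pair. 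One then puts $k'$ equal to the exact number of actions that the intended plan for a $k$-clique uses. The ``if'' direction is then immediate; for the converse one argues that a plan of length at most $k'$ has no slack, so it must perform exactly one selection per colour, one ``colour covered'' action per colour, and one certification chain per pair, which pins down one vertex $\sigma(i)\in V_i$ per colour and forces $\{\sigma(i),\sigma(j)\}\in E$ for all $i<j$ --- that is, a multicoloured $k$-clique.

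The routine parts are the syntactic restrictions and the parameter bound. Unary (U) holds because every action has exactly one effect; binary (B) because the domain has size two; and single-valued (S) can be guaranteed by making the instance monotone --- every variable starts at $0$, all effects write the value $1$, and all preconditions are of the form ``$x=1$'' --- so that no variable ever occurs with two different values in preconditions. Since $k'$ is polynomial in $k$ and the whole construction is polynomial-time computable, this is a legitimate parameterized reduction, establishing \W{1}-hardness.

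The hard part is the certification gadget together with the ``no slack'' argument. Because a single action reads only one variable, no one action can test that \emph{both} endpoints of a candidate edge $e=\{u,w\}$ are the selected vertices of their colours. The chain associated with $e$ therefore has to be laid out so that, although every step reads just one variable, the whole chain can be completed within the global budget of $k'$ actions only when $u=\sigma(i)$ \emph{and} $w=\sigma(j)$: any attempt to cheat --- selecting two vertices in one part, or splicing together steps from chains belonging to different edges --- must either falsify a precondition or overrun the budget. Designing a chain short enough to keep $k'$ of order $k^2$ yet rigid enough to enforce consistency of the selection simultaneously across all $\binom{k}{2}$ certifications, and then proving the ``no slack'' claim rigorously, is the technical core of the proof; the rest is bookkeeping.
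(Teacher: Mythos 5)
You have chosen the right source problem (\textsc{Multicolored Clique}) and the right top-level shape (select one vertex per colour, certify each colour pair, enforce everything through a tight budget $k'$ polynomial in $k$), which is also the paper's strategy. But the proof has a genuine gap, and in fact the specific realization you commit to cannot work. You explicitly defer the certification gadget --- the device by which a chain of one-precondition, one-effect actions verifies that \emph{both} endpoints of an edge are the selected vertices, consistently across all $\binom{k}{2}$ pairs --- and that gadget is the entire content of the theorem; without it there is no proof. Worse, the monotone setting you propose in order to get restriction S for free (every variable starts at $0$, every effect writes $1$, every precondition is ``$x=1$'', goals ask for $1$) provably cannot yield \W{1}-hardness. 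In such an instance a plan of length at most $k'$ exists if and only if, in the digraph whose arcs are the actions (each arc running from its precondition variable, or from a root for precondition-free actions, to its effect variable), all goal variables are reachable from the root by a subgraph with at most $k'$ arcs. That is exactly \textsc{Directed Steiner Tree} with at most $k'$ terminals, which is fixed-parameter tractable in $k'$ by Dreyfus--Wagner; indeed the paper uses precisely this connection to put $(0,2)$-\BPE{} in \FPT. So no monotone construction of this kind can be \W{1}-hard unless $\W{1}=\FPT$, and in particular no amount of cleverness in laying out monotone chains will make the ``no slack'' argument enforce conjunctive consistency.

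The paper escapes this by being essentially non-monotone while still keeping all precondition \emph{values} equal to $1$ (which is all that S requires). Vertex variables $x(v,j)$ are set to $1$ only via actions gated by an edge variable $x(e)$ with $v\in e$, they must be read to set the checking variables $x(i,j)$, and --- crucially --- the goal demands that they be reset to $0$, which can only be done by actions gated by a per-vertex clean-up variable $x(v)$. The budget $k'=7\binom{k}{2}+k$ is exactly tight, so a counting argument forces every active edge variable to feed two vertex variables (its two endpoints) and every active clean-up variable to reset $k-1$ vertex variables, all of the form $x(v,j)$ for one fixed $v$. It is this set-then-reset structure, not reachability, that pins down one vertex per colour and forces the selected vertices to be pairwise adjacent. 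To repair your proof you would need to abandon monotonicity and build an analogous mechanism in which some variables must change value twice, with the budget forcing the sharing pattern that encodes the clique.
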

\begin{proof}
  We proceed by a  parameterized reduction from \textsc{Multicolored Clique}.  
  Let $G = (V, E)$ be a $k$-partite
  graph with partition $V_1,\dots,V_k$ of $V$.
  Let $k_2 = \frac{k(k-1)}{2} = \binom{k}{2}$
  and $k'=7k_2+k$, and
  define $J_i=\SB j \SM 1 \leq j \leq k \textup{
    and }j\neq i \SE$ for every $1 \leq i \leq k$. 

  For the  \BPER{U,B,S}\ instance $\iplan$ we introduce 
  four kinds of variables:
  \begin{enumerate}
  \item For every $e\in E$ we introduce an \emph{edge variable}
    $x(e)$.
  \item
    For every $1\leq i \leq k$ and $v\in V_i$ we introduce $k - 1$
    \emph{vertex variables} $x(v, j)$ where $j \in J_i$.
  \item
    For every $1\leq i\leq k$ and every $j\in J_i$ we introduce a
    \emph{checking variable} $x(i, j)$.
  \item
    For every $v\in V$, we introduce a
    \emph{clean-up variable} $x(v)$.
  \end{enumerate}

  \noindent 
  We also introduce five kinds of actions:
  \begin{enumerate}
  \item For every $e\in E$ we introduce an action $a^{e}$ such that
    $\proj{\eff(a^{e})}{x(e)} = 1$.
  \item For every $e = \{v_i, v_j\}\in E$ where $v_i \in V_{i}$ and
    $v_j\in V_j$, we introduce two actions $a^{e}_{i}$ and $a^{e}_{j}$ such
    that $\pre(a^{e}_{i})[x(e)] = 1$, $\proj{\eff(a^{e}_{i})}{x(v_i, j)}=1$,
    $\pre(a^{e}_{j})[x(e)] = 1$ and $\proj{\eff(a^{e}_{j})}{x(v_j, i)}=1$.
  \item For every $v \in V_i$ and $j\in J_i$, we introduce 
    an action $a^{v}_{j}$ such that $\pre(a^{v}_{j})[x(v,j)] = 1$ 
    and $\proj{\eff(a^{v}_{j})}{x(i, j)} = 1$.
  \item For every $v \in V$, we introduce an
    action $a_v$ such that $\proj{\eff(a_v)}{x(v)} = 1$.
  \item For every $v\in V_i$, for some $1\leq i \leq k$, and  $j\in J_i$,
    we introduce an action $a^j_v$ such that $\pre(a^j_v)[x(v)] = 1$ and
    $\proj{\eff(a^j_v)}{x(v, j)} = 0$.
  \end{enumerate}
  \noindent
  Let $\acts_1,\ldots,\acts_5$ be sets of actions corresponding 
  to these five groups, 
  and let 
  $\acts = \acts_1 \union \ldots \union \acts_5$
  be the set of all actions.
  Let $\init=\tuple{0,\ldots,0}$ and  define $\goal$ such that
  all checking variables $x(i, j)$ are $1$, all vertex variables
  $x(v, j)$ are $0$ and the rest are $\undef$.
  Clearly $\iplan$ can be constructed from $G$ in polynomial time. 
  Furthermore, $\iplan$ is binary and no action has more than
  $1$ precondition and $1$ effect.
  The theorem will follow after we have shown the
  following claim.
  \begin{claim}
    $G$ has a $k$-clique if and only if $\iplan$ has a plan of length at
    most $k'$.
  \end{claim}

  \noindent ($\Rightarrow$) Assume $G$ has a $k$-clique $K = (V_K, E_K)$
  where $V_K = \{v_1, \ldots, v_k\}$ with $v_i\in V_i$ for every 
  $1\leq i \leq
  k$. We construct a plan $\plan$ for $\iplan$ as follows.
  For all $1\leq i < j \leq k$, we apply the actions
  $a^{\{v_i,v_j\}}\in\OOO_1$, to select the edges of the clique,
  and $a^{{\{v_i,v_j\}}}_{i},a^{{\{v_i,v_j\}}}_{j}\in\OOO_2$,
  to set the corresponding connection information for the vertices
  of the clique.
  This gives $3k_2$ actions.
  Then for each checking variable $x(i, j)$, for every $1\leq i\leq k$ and
  $j\in J_i$, we apply $a^{v_i}_{j}\in\OOO_3$ to verify that
  the selected vertices do form a clique.
  This gives $2k_2$ actions. 
  Now we have all checking variables set to the required value $1$, but the
  vertex variables $x(v_i, j)$, for $1\leq i\leq k$ and $j\in J_i$, still bear
  the value $1$ which will have to be set back to $0$ in the goal state. So we
  need some actions to ``clean up'' the values of these vertex variables.
  First we set up a cleaner for each vertex $v_i$ by applying
  $a_{v_i}\in\OOO_4$. This gives $k$ actions. Then we
  use $a^j_{v_i}\in\OOO_5$ for all $j\in J_i$ to set the vertex variables
  $x(v_i, j)$ to $0$. This requires
  $2k_2$ actions. We observe that all the checking
  variables are now set to $1$, and all the vertex variables are set to $0$. The
  goal state is therefore reached from the initial state by the execution
  of exactly $k'= k + 7k_2$ actions, as required. Hence the
  forward direction of the claim is shown.

  \medskip
  \noindent ($\Leftarrow$) Assume $\plan$ is a plan for $\iplan$ of length at
  most $k'$. In the following, we use $\OOO^\plan_s$ to denote the set of
  actions from $\OOO_s$ that occur in
  the plan $\plan$ and we use $\OOO^\plan$ to denote the set of all
  actions from $\OOO$ that occur in $\plan$.
  In the initial state all variables are set to $0$ and in the
  goal state all the $2k_2$ checking variables must be set to $1$,
  so it follows that $\Card{\OOO_3^\plan} \geq 2k_2$ since each action in
  $\OOO_3$ sets exactly one checking variable to $1$. 
  Each action in $\OOO_3^\plan$ requires that a distinct vertex variable
  is set to $1$ first. This can only be accomplished by the execution of
  an action from $\OOO_{2}$, hence $\Card{\OOO_{2}^\plan}\geq
  2k_2$.
  In turn, to make sure that some action in $\OOO_2^\plan$ can be executed,
  some edge variable must be set to $1$ first by an action in
  $\OOO_1^\plan$. However, one edge variable provides the precondition for
  at most two actions in $\OOO_2^\plan$.  Hence we require
  $\Card{\OOO_1^\plan}\geq k_2$.
  The actions in $\OOO_{2}^\plan$ set at least $2k_2$ vertex
  variables to $1$. In the goal state all vertex variables must have the
  value $0$ again, hence we need to apply at least $2k_2$
  actions from $\OOO_5$, and consequently $\Card{\OOO_5^\plan}\geq 2k_2$.
  In order to apply an action in $\OOO_5^\plan$, we first need to set a clean-up variable
  to $1$ with an action from $\OOO_4^\plan$. One clean-up
  variable provides the precondition for at most $k-1$ actions in
  $\OOO_5^\plan$, hence $\Card{\OOO_4^\plan} \geq k$. 
  In total we get $\Card{\OOO^\plan}\geq \sum_{s=1}^5 \Card{\OOO_s^\plan} \geq
  7k_2+k=k'$. Conversely, $k'$ is an upper bound on the length of $\plan$,
  and the length of $\plan$ is clearly an upper bound on the number of actions in
  $\OOO^\plan$, hence $\Card{\OOO^\plan}\leq k'$. Thus $\Card{\OOO^\plan}=k'$ and
  $\plan$ has exactly length $k'$. It follows that in all the above
  inequalities, equality holds, i.e., we have $\Card{\OOO_1^\plan}=k_2$,
  $\Card{\OOO_2^\plan}= \Card{\OOO_3^\plan}= \Card{\OOO_5^\plan} = 2 k_2$, and
  $\Card{\OOO_4^\plan}=k$.

  We call a variable \emph{active} if its value gets changed
  during the execution of~$\plan$.  All $2k_2$ checking variables are
  active, and by the above considerations, there are exactly
  $k_2=\Card{\OOO_1^\plan}$ active edge variables, $2k_2=\Card{\OOO_2^\plan}$
  active vertex variables, and $2k_2=\Card{\OOO_4^\plan}$ active clean-up
  variables.  We conclude that each active clean-up variable $x(v)$,
  $v\in V_i$, must provide the precondition for actions in $\OOO_5^\plan$
  to set $k-1$ vertex variables to $0$ (these vertex variables are
  active). This is only possible if these vertex variables are exactly
  the $k-1$ variables $x(v,j)$ for $j\in J_i$.  For each $1\leq i \leq
  k$ and $j\in J_i$ the checking variable $x(i,j)$ is active, hence
  there must be some vertex $v\in V_i$ such that the vertex variable
  $x(v,j)$ is active, in order to provide the precondition for the
  action $a^{v}_j \in \OOO_3^\plan$.  We conclude that for each $1\leq i
  \leq k$, the set $V_i$ contains exactly one vertex $v_i$ such that
  $x(v_i)$ and $x(v_i,j)$, $j\in J_i$ are all active.  We show that
  these vertices $v_1,\dots,v_k$ induce a clique in $G$.

  Since we have $k_2$ active edge variables and $2k_2$ active vertex
  variables, each edge variable $x(e)$ must provide the precondition for
  two actions in $\OOO_2^\plan$ that make two vertex variables
  active. This is only possible if $e=\{u,v\}$ for $u\in V_i,v\in V_j$, and
  the two vertex variables are $x(v,j)$ and $x(u,i)$. We conclude that
  the active edge variables are exactly the variables $x(\{v_i,v_j\})$,
  $1\leq i < j \leq k$. Hence, indeed, the vertices $v_1,\dots,v_k$
  induce a clique in~$G$. This concludes the proof of the claim. The
  theorem follows.
\end{proof}

\begin{theorem}\label{the:03-BPE-hard}
  $(0,3)$-\BPE{} is $\W{1}$-hard, even for binary instances.
\end{theorem}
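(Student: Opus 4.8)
The plan is to give a parameterized reduction from \textsc{Multicolored Clique}, which is \W{1}-complete. Since now $p=0$, we cannot use preconditions at all, so (unlike in the $e=1$ reductions) the entire ``verification'' of a solution must be performed through the goal together with the last-writer-wins semantics of effects; the extra leverage we are given is that a single action may now touch up to three variables. Given a $k$-partite graph $G$ with parts $V_1,\dots,V_k$, write $E_{ij}$ for the set of edges between $V_i$ and $V_j$ for $i<j$. The \sasplus\ instance $\iplan=\tuple{\vars,\dom,\acts,\init,\goal}$ has $\dom=\set{0,1}$, $\init=\tuple{0,\dots,0}$, and two kinds of variables: a \emph{coverage variable} $c_{ij}$ for every pair $i<j$, with goal value $1$; and a \emph{pointer variable} $p_i(u)$ for every $i$ and every $u\in V_i$, with goal value $0$. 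There are two kinds of (precondition-free) actions: for every edge $e=\set{u,w}\in E_{ij}$ with $u\in V_i$ and $w\in V_j$ an \emph{edge action} with the three effects $c_{ij}:=1$, $p_i(u):=1$, $p_j(w):=1$; and for every $i$ and $u\in V_i$ a \emph{reset action} with the single effect $p_i(u):=0$. Finally set $k'=\binom{k}{2}+k$. Clearly $\iplan$ is computable in polynomial time, has a binary domain, no preconditions, at most three effects per action, and $k'$ depends only on $k$, so this is an fpt-reduction once correctness is shown.

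For the easy direction, suppose $v_1,\dots,v_k$ with $v_i\in V_i$ induce a clique in $G$. Apply, in any order, the $\binom{k}{2}$ edge actions of the edges $\set{v_i,v_j}$, followed by the $k$ reset actions for $p_1(v_1),\dots,p_k(v_k)$. After the edge actions every $c_{ij}$ equals $1$ and the only pointer variables with value $1$ are $p_1(v_1),\dots,p_k(v_k)$; the reset actions then return exactly these to $0$, so we obtain a plan for $\iplan$ of length exactly $k'$.

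The bulk of the work is the converse. Let $\plan$ be a plan of length at most $k'$. Because $\init$ sets every variable to $0$, the goal demands $c_{ij}=1$, and the only actions touching $c_{ij}$ are the edge actions of $E_{ij}$ (all of which set it to $1$), $\plan$ must contain an edge action for every pair $i<j$; since these concern pairwise disjoint pairs, $\plan$ contains at least $\binom{k}{2}$ edge-action occurrences. Let $S$ be the set of pointer variables that some edge action in $\plan$ sets to $1$. Each $p_i(u)\in S$ has goal value $0$, and the only action that can set it back to $0$ is the reset action for $p_i(u)$, which must therefore occur in $\plan$ after the last edge action touching $p_i(u)$; as each reset action affects a single pointer variable, $\plan$ contains at least $\Card{S}$ reset-action occurrences, and no reset action is an edge action (edge actions have three effects and touch a coverage variable). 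Hence $\Card{\plan}\ge\binom{k}{2}+\Card{S}$, so $\Card{S}\le k$. Conversely, for each $i$ the pair $\set{i,j}$ (pick any $j\neq i$) is covered by an edge action whose endpoint in $V_i$ is some $u$, so $p_i(u)\in S$; thus $S$ meets every part and $\Card{S}\ge k$. Therefore $\Card{S}=k$, with exactly one pointer variable $p_i(v_i)$ per part $i$. This forces every edge action in $\plan$ whose edge meets $V_i$ to have $V_i$-endpoint $v_i$; in particular the edge action chosen for the pair $\set{i,j}$ belongs to an edge with endpoints $v_i$ and $v_j$, so $\set{v_i,v_j}\in E$. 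Hence $v_1,\dots,v_k$ induce a clique in $G$.

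I expect the only genuine obstacle to be designing the gadget so that this counting argument closes without any preconditions; the idea that makes it work is that any ``inconsistency'' (two edges incident to the same part disagreeing on their endpoint in that part) leaves behind an additional pointer variable set to $1$ that must be cleaned up by an additional reset action, so the tight budget $k'=\binom{k}{2}+k$ can be met only by an honest clique. The remaining points to verify are routine: no action can serve two roles (edge actions have three effects including a coverage variable, reset actions have exactly one effect on a pointer variable), the $\binom{k}{2}$ edge occurrences and the $\Card{S}$ reset occurrences counted above are disjoint, and in the forward direction the schedule ``all edge actions first, then all resets'' is always legal since there are no preconditions. Together with the matching membership result this yields \W{1}-completeness of $(0,3)$-\BPE{} on binary instances, as recorded in Table~\ref{table:bylander}.
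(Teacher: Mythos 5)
Your reduction is exactly the paper's: your coverage variables $c_{ij}$ are its pair variables $p_{i,j}$, your pointer/reset variables and actions are its vertex variables and actions $a_v$, the edge actions and the budget $k'=\binom{k}{2}+k$ coincide, and the tight counting argument in the converse direction is the same (your explicit set $S$ just spells out a step the paper states more tersely). The proposal is correct and takes essentially the same approach as the paper.
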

\begin{proof}
  By parameterized reduction from \textsc{Multicolored Clique}.  
  Let $\insti=\tuple{G,k}$ be an instance of this problem where
  $G=\tuple{V,E}$, $V_1,\dots,V_k$ is the partition of $V$, 
  $\Card{V_1}=\dotsb=\Card{V_k}=n$ and parameter $k$.
  We construct a $(0,3)$-\BPE{} instance
  $\insti'=(\iplan,k')$ with
  $\iplan=\tuple{\vars,\dom,\acts,\init,\goal}$ such that
  $\insti$ has a multicolored clique of size at most $k$ if and only if
  $\iplan$ has a plan of length at most $k'$.

  We set $\vars=V(G) \cup \SB p_{i,j} \SM 1
  \leq i < j \leq k \SE$, $\dom=\{0,1\}$, $\init=\tuple{0,\ldots,0}$,
  $\proj{\goal}{p_{i,j}}=1$ for every $1 \leq i < j \leq
  k$ and $\proj{\goal}{v}=0$ for every $v \in V(G)$. 
  Furthermore, the set $\acts$ contains the following actions:
  \begin{itemize}
  \item For every $v \in V(G)$ one action $a_v$ with $\proj{\eff(a_v)}{v}=0$;
  \item For every $e=\{v_i,v_j\} \in E(G)$ with $v_i \in V_i$ and $v_j
    \in V_j$ one action $a_e$ with $\proj{\eff(a_e)}{v_i}=1$,
    $\proj{\eff(a_e)}{v_j}=1$, and $\proj{\eff(a_e)}{p_{i,j}}=1$. 
  \end{itemize}
  Clearly, $\iplan$ is binary and no action in $\acts$ has any
  precondition or more than $3$  effects.  
  The theorem will follow after we have shown the
  following claim.
  \begin{claim}\label{claim:clique}
    $G$ contains a $k$-clique if and only if $\iplan$ has
    a plan of length at most $k'=\binom{k}{2}+k$.
  \end{claim}
  \sloppypar Suppose that $G$
  contains a $k$-clique with vertices $v_1,\dots,v_k$ and edges $e_1,
  \dots, e_{\binom{k}{2}}$.  Then
  $\plan \mdef \seq{a_{e_1},\dots,a_{e_{\binom{k}{2}}},a_{v_1},\dots,a_{v_k}}$ is a
  plan of length $k'$ for~$\iplan$.
  
  For the reverse direction suppose that $\plan$ is a plan of length
  at most $k'$ for $\iplan$. Because $\proj{\init}{p_{i,j}}=0 \neq
  \proj{\goal}{p_{i,j}}=1$ the plan $\plan$ has to contain at least
  one action $a_e$ where $e$ is an edge between a vertex in $V_i$ and
  a vertex in $V_j$ for every $1 \leq i < j \leq k$. Because
  $\proj{\eff(a_{e=\{v_i,v_j\}})}{v_i}=1 \neq \proj{\goal}{v_i}=0$ and
  $\proj{\eff(a_{e=\{v_i,v_j\}})}{v_j}=1 \neq \proj{\goal}{v_j}=0$ for
  every such edge $e$ it follows that $\plan$ has to contain at least
  one action $a_{v}$ with $v \in V_i$ for every $1 \leq i \leq
  k$. Because $k'=\binom{k}{2}+k$ it follows that $\plan$ contains
  exactly $\binom{k}{2}$ actions of the form $a_e$ for some edge $e
  \in E(G)$ and exactly $k$ actions of the form $a_v$ for some vertex
  $v \in V(G)$. It follows that the graph $K=(\SB v \SM a_v \in \plan
  \SE,\SB e \SM a_e \in \plan \SE)$ is a $k$-clique of~$G$.
\end{proof}

\section{Membership Results}
\label{sec:modelcheck}
\noindent
Our membership results are based
on {\em First-Order Logic (FO) Model Checking}.
For a class $\Phi$ of FO formulas  we define the following parameterized
problem.

  \smallskip
  \begin{quote}
  \noindent
  \FOMCR{\Phi}\\
  \noindent
  \emph{Instance:} A finite structure $\AAA$, an FO formula $\phi \in \Phi$.\\
  \noindent
  \emph{Parameter:} The length of $\phi$.\\
  \noindent
  \emph{Question:} Does $\AAA \models \phi$, i.e., is $\AAA$ a model for $\phi$ ?
  \end{quote}
  \smallskip

Let $\Sigma_1$ be the class of all FO formulas of the form
$\exists x_1 \dots \exists x_t . \phi$
where $t$ is arbitrary and $\phi$ is a
quantifier-free FO formula.
For every  positive integer $u$, let
$\Sigma_{2,u}$ denote the class of all
FO formulas of the form
$\exists x_1 \dots \exists x_t \forall y_1 \dots \forall y_u . \phi$
where $t$ is arbitrary and $\phi$ is a quantifier-free FO formula.
The following connections between model checking and parameterized
complexity classes are known.
\begin{proposition}[\mbox{Flum and Grohe~\cite[Theorem 7.22]{FlumGrohe06}}]
\label{pro:fo-model-w1} \label{pro:fo-model-w2} \sloppypar
  The problem \FOMCR{\Sigma_1} is $\W{1}$-complete.
  For every positive integer $u$ the problem \FOMCR{\Sigma_{2,u}} is $\W{2}$-complete.
\end{proposition}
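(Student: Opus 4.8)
The plan is to prove each half of the statement by two matching bounds: a lower bound obtained by reducing a known complete problem \emph{into} model checking, and an upper bound obtained by reducing model checking \emph{into} the canonical weighted satisfiability problem that defines the relevant level of the $\Weft$-hierarchy. It is worth recalling at the outset that the parameter of \FOMCR{\Sigma_1} and \FOMCR{\Sigma_{2,u}} is the \emph{length of the input sentence}; hence an fpt-reduction into model checking need only produce, from a given instance, a structure together with a sentence whose length is bounded by a function of the source parameter, while the structure itself may be computed in full without any parameter restriction.

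For the hardness direction I would use \textsc{Clique} (parameterized by the clique size, $\W{1}$-complete) and a standard $\W{2}$-complete problem such as \textsc{Dominating Set}. From an instance $(G,k)$ of \textsc{Clique} with $G=\tuple{V,E}$, output the structure $G$ together with the $\Sigma_1$-sentence $\exists x_1\cdots\exists x_k\bigl(\bigwedge_{1\le i<j\le k}(x_i\ne x_j\wedge E(x_i,x_j))\bigr)$, whose length is $O(k^2)$; this is an fpt-reduction, so \FOMCR{\Sigma_1} is $\W{1}$-hard. From an instance $(G,k)$ of \textsc{Dominating Set}, output $G$ together with $\exists x_1\cdots\exists x_k\forall y\bigl(\bigvee_{1\le i\le k}(y=x_i\vee E(y,x_i))\bigr)$, which lies in $\Sigma_{2,1}$; padding with unused universal quantifiers turns it into an equivalent $\Sigma_{2,u}$-sentence (for fixed $u$, at the cost of $O(u)$ extra symbols), so \FOMCR{\Sigma_{2,u}} is $\W{2}$-hard for every $u\ge 1$.

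For the membership direction, given a structure $\AAA$ with universe $A$ of size $n$ and a $\Sigma_1$-sentence $\exists x_1\cdots\exists x_t\,\psi$ with $\psi$ quantifier-free, I would encode an assignment of the existential variables by Boolean ``selection'' variables $X_{i,a}$ ($1\le i\le t$, $a\in A$), adjoin the $2$-clauses forbidding $X_{i,a}\wedge X_{i,a'}$ for $a\ne a'$, and ask for a satisfying assignment of weight exactly $t$, so that $t$ is the weight parameter and is bounded by the source parameter. Translating $\psi$ (after putting it in DNF, a blow-up bounded by a function of $|\psi|$) into a Boolean circuit over the $X_{i,a}$ --- each atom $R(x_{i_1},\dots,x_{i_r})$ becoming the disjunction over $(a_1,\dots,a_r)\in R^{\AAA}$ of $X_{i_1,a_1}\wedge\dots\wedge X_{i_r,a_r}$, and similarly for negated atoms and (in)equalities --- gives, after the standard normal-form lemmas for weft-bounded circuits, a weighted weft-$1$ circuit satisfiability instance, so \FOMCR{\Sigma_1} lies in $\W{1}$. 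For the $\Sigma_{2,u}$ case, i.e.\ a sentence $\exists x_1\cdots\exists x_t\forall y_1\cdots\forall y_u\,\psi$, I would additionally take the conjunction over all $u$-tuples $(b_1,\dots,b_u)\in A^u$ of the circuit obtained from $\psi$ with each $y_j$ substituted by $b_j$; since $u$ is a fixed constant, this conjunction has only $n^u$ conjuncts, so the reduction remains polynomial-time, and the extra layer of large conjunctions lifts the weft to $2$, placing \FOMCR{\Sigma_{2,u}} in $\W{2}$.

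The step I expect to be the main obstacle is this last normalization in the membership direction: showing that the circuit built from the quantifier-free matrix --- together with the selection clauses, and, in the $\Sigma_{2,u}$ case, the outer conjunction over $A^u$ --- can genuinely be brought down to weft $1$, respectively weft $2$, without a further blow-up in the weight, the number of selection groups, or the depth. This is precisely the nontrivial content behind the coincidence of $\Sigma_1$-model checking with $\W{1}$; Flum and Grohe route it through a machine characterization of the $\Weft$-hierarchy, but any presentation carries essentially the same technical core. The hardness direction, by contrast, is routine once the source problems are fixed --- the only minor points being that equality is available in first-order logic and that $\Sigma_{2,1}\subseteq\Sigma_{2,u}$ by padding.
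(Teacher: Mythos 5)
First, a point of comparison: the paper does not prove this proposition at all --- it is imported verbatim from Flum and Grohe \cite[Theorem 7.22]{FlumGrohe06} and used as a black box. So there is no in-paper argument to measure your proposal against; what follows is an assessment of your sketch as a free-standing proof of the cited result.

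Your hardness half is complete and standard: the \textsc{Clique} formula for \FOMCR{\Sigma_1} and the \textsc{Dominating Set} formula for \FOMCR{\Sigma_{2,1}} (padded to $\Sigma_{2,u}$) are exactly the textbook reductions, and the parameter bounds $O(k^2)$ and $O(k+u)$ are as required. The gap is in the membership half for $\W{1}$, and it is precisely the one you flag: the circuit you describe is \emph{not} weft~$1$ as built. Each positive atom becomes a large OR of small ANDs, each negated atom a large AND of small ORs (or, after passing to the complement relation, another large OR), and on top of all this sits the large conjunction of the ``at most one selection per group'' clauses. Any input--output path through an atom gate then crosses at least two large gates (the tuple-level OR and the top-level AND over the selection constraints), so what you have is naturally a weft-$2$ instance even for $\Sigma_1$; the claim that ``the standard normal-form lemmas'' bring it down to weft~$1$ without increasing the weight or the number of selection groups is exactly the nontrivial content of the theorem and is not something those lemmas deliver off the shelf. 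The clean way to close this is to abandon the direct circuit encoding for the $\Sigma_1$ case and instead use the machine characterization of $\W{1}$ (tail-nondeterministic fpt-programs with $g(k)$ guess steps, as Flum and Grohe do): guess the $t\le|\phi|$ witnesses and verify the quantifier-free matrix by table lookup. Your $\Sigma_{2,u}$ membership argument, by contrast, is fine in spirit, since $\W{2}$ genuinely has room for the extra large conjunction over $A^u$; but it inherits the same unresolved normalization issue one level up. In short: correct skeleton, honest identification of the crux, but the crux itself is left open, and as written the $\W{1}$-membership claim does not follow.
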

We will reduce \BPE{} to \FOMCR{\Phi}. We start
by defining a relational structure $\AAA(\insti)$ for an arbitrary
\BPE{} instance ${\insti}=\tuple{\iplan,k}$
with $\iplan=\tuple{\vars,\dom,\acts,\init,\goal}$
as follows:
\begin{itemize}
\item The universe of $\AAA(\insti)$ is $\vars \cup \acts \cup \dom^+
  \cup \{\dumop\}$, where $\dumop$ is a novel element that represents a
  ``dummy'' action (which we need for technical reasons).
\item $\AAA(\insti)$ contains the unary relations
  $\varRNS \mdef \vars$, $\opeNS \mdef \acts \cup \{\dumop\}$, $\domRNS \mdef \dom^+$,
  and $\dumopRNS \mdef \{\dumop\}$
  together with the following relations of higher arity:
  \begin{itemize}
  \item $\initRVNS \mdef \SB \tuple{v,\varval} \in \vars
    \times \dom \SM I[v]=\varval \SE$,
  \item $\goalRVNS \mdef \SB \tuple{v,\varval} \in \vars
    \times \dom \SM \goal[v]=\varval \neq \undv \SE$,
  \item $\preRNS \mdef \SB \tuple{a,v} \in \acts \times \vars
    \SM \pre(a)[v] \neq \undv 
    \SE$,
  \item $\postRNS \mdef \SB \tuple{a,v} \in \acts \times \vars
    \SM \proj{\eff(a)}{v} \neq \undv \SE$,
  \item $\preRVNS \mdef 
    \SB \tuple{a,v,\varval} \in \acts\times \vars \times \dom \SM 
      \pre(a)[v] = \varval \neq \undv \SE $
  \item $\postRVNS \mdef \SB \tuple{a,v,\varval} \in \acts
    \times \vars \times \dom \SM 
    \proj{\eff(a)}{v}=\varval \neq \undv \SE$.
  \end{itemize}
\end{itemize}

\begin{theorem}\label{in-W2}
  \BPE\ 
  is in $\W{2}$.
\end{theorem}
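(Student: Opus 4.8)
The plan is to exhibit a parameterized reduction from $\BPE$ to $\FOMCR{\Sigma_{2,u}}$ for a fixed $u$, and then invoke Proposition~\ref{pro:fo-model-w2} to conclude membership in $\W{2}$. The relational structure $\AAA(\insti)$ has already been set up; what remains is to write a $\Sigma_{2,u}$ formula $\phi_k$, of length depending only on $k$, that expresses ``$\iplan$ has a plan of length at most $k$''. Since we may pad a plan with the dummy action $\dumop$ (which has no preconditions and no effects), we may assume the plan has length exactly $k$: the existential block will guess a sequence $a_1,\dots,a_k$ of elements of $\opeNS$, and the formula must assert that, starting from $\init$, applying $a_1,\dots,a_k$ in order is legal and reaches a goal state.

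\textbf{Key steps.}
First I would introduce the $k$ existentially quantified variables $a_1,\dots,a_k$ and the guard $\bigwedge_{i=1}^k \opeNS(a_i)$. Second, I would express the \emph{value of variable $v$ after step $i$} as a quantifier-free formula $\fvalue_i(v,\varval)$, defined by downward recursion on $i$: $\fvalue_0(v,\varval) \equiv \initRVNS(v,\varval)$, and $\fvalue_i(v,\varval) \equiv \postRVNS(a_i,v,\varval) \vee \bigl(\neg\postRNS(a_i,v) \wedge \fvalue_{i-1}(v,\varval)\bigr)$. Unfolding this gives, for each fixed $i\le k$, a quantifier-free formula whose size is bounded in terms of $k$ (the nesting depth is at most $k$), so the total length stays a function of $k$ only. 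Third, I would write the \emph{applicability} condition for step $i$: for the single universally quantified variable $y$ ranging over the universe, $\varRNS(y) \wedge \preRNS(a_i,y) \rightarrow \exists?$ — but to stay in $\Sigma_{2,u}$ I must avoid an inner existential, so instead I use $\forall y\,\forall z\,\bigl(\varRNS(y)\wedge\preRVNS(a_i,y,z) \rightarrow \fvalue_{i-1}(y,z)\bigr)$; here $u=2$ suffices. Finally, the \emph{goal} condition is $\forall y\,\forall z\,\bigl(\varRNS(y)\wedge\goalRVNS(y,z)\rightarrow \fvalue_k(y,z)\bigr)$. Conjoining the step-applicability conditions for $i=1,\dots,k$ with the goal condition, all under the two universal quantifiers $\forall y\,\forall z$ (pulled to the front since each conjunct is universal in $y,z$), yields a formula in $\Sigma_{2,2}$; its length is a computable function of $k$, and it is constructible in polynomial time. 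One then checks $\AAA(\insti)\models\phi_k$ iff $\iplan$ has a plan of length at most $k$, using the domain-coverage facts that $\init$ is total and that effect/precondition variables carry defined values.

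\textbf{Main obstacle.}
The delicate point is keeping the formula genuinely in $\Sigma_{2,u}$ for a \emph{fixed} $u$ while still talking about the state after each of the $k$ steps. The naive encoding that introduces a fresh universally quantified ``variable'' witness per time step would push $u$ up to $k$, which is not allowed; the fix above is to recompute $\fvalue_i$ symbolically by substitution so that only the two universal variables $y$ (a planning variable) and $z$ (a domain value) are ever needed, at the cost of a formula whose size grows with $k$ — which is fine, since only the \emph{quantifier prefix} shape, not the length, must be controlled. A secondary bookkeeping point is handling functional correctness of $\fvalue_i$ (that for each $v$ and each $i$ there is exactly one $\varval$ with $\fvalue_i(v,\varval)$), which follows because $\init$ is total, effects assign at most one value per variable, and the recursion preserves this; this is what lets the implications above faithfully capture validity and the goal test.
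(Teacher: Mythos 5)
Your proposal is correct and follows essentially the same route as the paper's proof: a parameterized reduction to $\Sigma_{2,2}$-\FOMC{} over $\AAA(\insti)$, with the unfolded quantifier-free \fvalue{} formulas for the state after each step, per-step precondition checks and a goal check guarded by the two universal variables, and the dummy action to pad to length exactly $k$. The paper's construction is the same formula up to notation, so no further comparison is needed.
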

\begin{proof}\sloppypar
  We procced by parameterized reduction to the
  problem \FOMCR{\Sigma_{2,2}}, which is $\W{2}$-complete by
  Proposition~\ref{pro:fo-model-w1}. 
  Let
  ${\insti}=\tuple{\iplan,k}$ with $\iplan=\tuple{\vars,\dom,\acts,\init,\goal}$) be an 
  instance of \BPE. We construct an
  instance ${\insti}'=\tuple{\AAA(\insti),\phi}$ of
  \FOMCR{\Sigma_{2,2}} 
  such that ${\insti}$ has a solution if and
  only if ${\insti}'$ has a solution and the length of the formula $\phi$ is
  bounded by some function that only depends on the parameter $k$.
  For the definition of $\phi$ we need the following auxiliary formulas.
  In the following let $0 \leq i \leq k$. We define the formula 
  $\fvalue(\langle a_1,\ldots,a_i \rangle,v,\varval)$, which holds if
  the variable $v$ has value $\varval$
  after applying the actions $a_1,\ldots,a_i$ to the initial state.
  This formula is inductively defined as follows.
  \begin{quote}
    $\fvalue(\langle \rangle,v,\varval) \mdef \initRVNS v, \varval$ 

    $\fvalue(\langle a_1,\ldots,a_i \rangle,v,\varval) \mdef
    (\fvalue(\langle a_1,\ldots,a_{i-1} \rangle,v,\varval) \land \lnot 
    \postR a_i,v )$
 
    \quad $\lor\, \postRV a_i,v,\varval$ 
  \end{quote}
  We also define the formula
  $\fcheckpre(\langle a_1,\ldots,a_i \rangle,v,\varval)$
  which holds if 
  the variable $v$ has the value $\varval$  after 
  the actions $a_1,\dots,a_{i-1}$ have been applied to the initial state,
  whenever $\varval$ is the precondition of the action $a_i$
  on the variable $v$.
  The formula is defined as follows. 
  \begin{quote}
    $\fcheckpre(\langle a_1,\ldots,a_i \rangle,v,\varval) \mdef
\preRV a_i,v,\varval
    \rightarrow \fvalue(\langle a_1,\ldots,a_{i-1} \rangle,v,\varval)$
  \end{quote}
  We further define the formula $\fcheckpreall(\langle a_1,\ldots,a_k
  \rangle,v,\varval)$ which holds if the formula $\fcheckpre(\langle
  a_1,\ldots,a_i \rangle,v,\varval)$ holds for every $0 \leq i \leq k$.
  \begin{quote}
    $\fcheckpreall(\langle a_1,\ldots,a_k \rangle,v,\varval) \mdef
    \bigwedge_{i=1}^k\fcheckpre(\langle a_1,\ldots,a_i \rangle,v,\varval)$
  \end{quote}
  Finally, we define the formula $\fcheckpregoal(\langle a_1,\ldots,a_k
  \rangle,v,\varval)$
  which holds if whenever $\varval$ is the goal
  on the variable $v$, then the variable $v$ has the value $\varval$ after 
  the actions $a_1,\dots,a_{k}$ have been applied to the initial  state.
  The formula is defined as follows.
  \begin{quote}
    $\fcheckpregoal(\langle a_1,\ldots,a_k \rangle,v,\varval) \mdef 
\goalRV v,\varval
    \rightarrow \fvalue(\langle a_1,\ldots,a_k \rangle,v,\varval)$
  \end{quote}

  \medskip\noindent
  We can now define the formula $\phi$ itself as:
  \begin{quote}
$    \phi  \mdef  \exists a_1 \ldots \exists a_k \forall v \forall \varval\,.\,
    (\bigwedge_{i=1}^k\opeNS a_i )\ \land
    (\varR v \land \domR \varval
     \rightarrow $

\hfill $    \fcheckpreall(\langle a_1,\dots,a_k \rangle,v,\varval)  \land 
    \fcheckpregoal(\langle a_1,\dots,a_k \rangle, v,\varval)).
$
%
  \end{quote}
  Evidently $\phi \in \Sigma_{2,2}$, the length of $\phi$ is bounded
  by some function that only depends on $k$ and $\AAA(\insti) \models
  \phi$ if and only if $\iplan$ has a plan of length at most $k$.
  The ``dummy'' action ($\dumop$) guarantees that there is a plan of
  length exactly $k$ whenever there is a plan of length at most $k$.
\end{proof}
Our next results shows that if we restrict ourselves to unary planning
instances then \BPE\ becomes easier (at least from the parameterized
point of view). We show $\W{1}$\hy membership of \BPER{U}\ by reducing 
it to the \textsc{$\Sigma_1$\hy FO Model Checking} problem. The basic
idea  behind the proof is fairly similar to the proof of
Theorem~\ref{in-W2}. However, we
cannot directly express within $\Sigma_1$ that all the preconditions
of an action are satisfied, since we are not allowed to use universal
quantifications within~$\Sigma_{1}$.  Hence, we
avoid the universal quantification with a trick: we observe that the
preconditions only need to be checked with respect to at most $k$
``important'' variables, that is, the variables in which the
preconditions of an action differ from the initial state. Since we
only consider unary planning instances there can be at most $k$
such variables. Hence, it becomes
possible to guess the important variables using only existential
quantifiers.

It remains to check that all the important variables are among these
guessed variables.  We do this without universal quantification by
adding dummy elements $d_1,\dots,d_k$ and a relation $\diffRNS$ to the
relational structure $\AAA(\iplan)$.  The relation associates with
each action exactly $k$ different elements. These elements consist of
all the important variables of the action, say the number of these
variables is $k'$, plus $k-k'$ dummy elements. Hence, by guessing
these $k$ elements and eliminating the dummy elements, the formula
knows all the important variables of the action and can check their
preconditions without using universal quantification.

To accommodate the ``dummy'' elements we start by defining a new
extended structure $\AAA^*(\iplan)$ for an arbitrary
\BPE{} instance ${\insti}=\tuple{\iplan,k}$
with $\iplan=\tuple{\vars,\dom,\acts,\init,\goal}$.
For a partial state $s \in (\dom^+)^{|\vars|}$ and an action $a \in
\acts$ we define the following sets.
\begin{eqnarray*} 
  \diff(s) & \mdef & \SB v \in \VVV \SM \proj{s}{v} \neq
  \undv \textup{ and } \proj{s}{v} \neq \proj{\init}{v} \SE\\ 
  \diff(a) & \mdef & \diff(\pre(a)).
\end{eqnarray*}
We define the structure $\AAA^*(\insti)$ as follows.
\begin{itemize}
\item The universe $A^*$ of $\AAA^*(\insti)$ consists of the elements of
  the universe of $\AAA(\insti)$ plus $k$ novel ``dummy'' elements
  $d_1,\dotsc,d_k$.
\item $\AAA^*(\insti)$ contains all relations of $\AAA(\insti)$ and additionally
  the following relations.
  \begin{itemize}
  \item A unary relation $\dumRNS \mdef \{d_1,\dotsc,d_k\}$.
  \item A binary relation $\diffRNS \mdef \SB \tuple{a,v} \in \acts
    \times \vars \SM a \in \acts \textup{ and }v \in \diff(a) \SE \cup 
    \SB \tuple{a,d_i} \in \acts \times \{d_1,\dotsc,d_k\} \SM a \in \acts \textup{ and } 1
    \leq i \leq k-|\diff(a)| \SE$.
  \item A unary relation $\diffGoalRNS\mdef \SB \tuple{v} \in \vars \SM v \in
    \diff(\goal) \SE \cup \SB \tuple{d_i} \SM 1 \leq i \leq k-|\diff(\goal)| \SE$.
  \end{itemize}
\end{itemize}

Before we show that \BPER{U} is in
$\W{1}$ we need some simple observations about planning.
Let $\iplan=\tuple{\vars,\dom,\acts,\init,\goal}$ be a \PE{} instance,
$\vars' \subseteq \vars$, and $s \in (\dom^+)^{|\vars|}$. 
We denote by $s|\vars'$ the state $s$ restricted to the
variables in $\vars'$ and by $\acts|\vars'$ the set of actions
obtained from the actions in $\acts$ after restricting the
preconditions and effects of every such action to the variables in $\vars'$.
Furthermore, we denote by
$\iplan|\vars'$ the \PE{} instance 
$\tuple{\vars', \dom, \acts|\vars', \init|\vars',\goal|\vars'}$.
\begin{proposition}\label{pro:checkX}
  Let $\plan=\seq{a_1,\dotsc,a_l}$ be a sequence of actions from
  $\acts$. Then $\plan$ is a plan for $\iplan$ if and only if $\plan$
  is a plan for $\iplan|\vars_0$ where 
  $\vars_0=\bigcup_{i=1}^l\diff(a_i) \cup \diff(\goal) \cup \SB v \in \vars \SM
  \proj{\eff(a_i)}{v} \neq \undv \textup{ and }1 \leq i \leq l \SE$
\end{proposition}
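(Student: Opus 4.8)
The plan is to argue that the ``relevant'' variables for checking validity and goal-satisfaction of a fixed action sequence~$\plan$ are exactly those in the set $\vars_0$, so that projecting onto $\vars_0$ changes nothing. First I would make precise the following claim: for every prefix $\seq{a_1,\dots,a_i}$ with $0\le i\le l$, if $s_i$ denotes the state reached from $\init$ by applying $a_1,\dots,a_i$ in $\iplan$ and $s_i'$ the state reached from $\init|\vars_0$ by applying the corresponding restricted actions in $\iplan|\vars_0$, then $s_i' = s_i|\vars_0$. This is a straightforward induction on $i$: the base case is immediate since $\init|\vars_0$ is by definition the restriction of $\init$; in the inductive step, the effect of $a_{i+1}$ on a variable $v\in\vars_0$ is the same whether computed in $\iplan$ or $\iplan|\vars_0$ (as $\vars_0$ contains every variable on which any $a_j$ has a non-$\undv$ effect), and the value on $\vars\setminus\vars_0$ is irrelevant to the restricted state.

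Given this invariant, I would handle the two directions together. Validity of $a_{i+1}$ in $s_i$ requires that $\proj{\pre(a_{i+1})}{v}=\proj{s_i}{v}$ for every $v$ with $\proj{\pre(a_{i+1})}{v}\ne\undv$. Now split such $v$ into two cases. If $\proj{\pre(a_{i+1})}{v}\ne\proj{\init}{v}$, then $v\in\diff(a_{i+1})\subseteq\vars_0$, so this precondition conjunct is tested identically in $\iplan|\vars_0$ using $\proj{s_i'}{v}=\proj{s_i}{v}$. If instead $\proj{\pre(a_{i+1})}{v}=\proj{\init}{v}$, then the conjunct $\proj{\pre(a_{i+1})}{v}=\proj{s_i}{v}$ holds in $\iplan$ iff no earlier action changed $v$, i.e.\ iff $\proj{s_i}{v}=\proj{\init}{v}$; and if some earlier $a_j$ did change $v$, then $v$ lies in $\{v\in\vars\SM \proj{\eff(a_j)}{v}\ne\undv\}\subseteq\vars_0$, putting us back in the first case. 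The upshot is that every precondition conjunct that could fail involves only variables in $\vars_0$, so $a_{i+1}$ is valid in $s_i$ (in $\iplan$) iff the restricted action is valid in $s_i'$ (in $\iplan|\vars_0$). The same bookkeeping applies to the goal test: $\proj{\goal}{v}\ne\undv$ with $\proj{\goal}{v}\ne\proj{\init}{v}$ gives $v\in\diff(\goal)\subseteq\vars_0$, while $\proj{\goal}{v}=\proj{\init}{v}$ is satisfied by $s_l$ iff it is satisfied by $s_l|\vars_0$ (again because any variable actually touched by the plan is in $\vars_0$). Combining, $\plan$ is a plan from $\init$ to a goal state in $\iplan$ iff it is one in $\iplan|\vars_0$.

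The only mildly delicate point—and the part I would write out carefully—is the ``$\proj{\pre(a)}{v}=\proj{\init}{v}$'' case: here the precondition agrees with the initial value, so the variable is \emph{not} in $\diff(a)$, yet it could still have been modified by an intermediate action. The resolution is exactly that such a modifying action contributes $v$ to the effect-variable part of $\vars_0$, so no relevant variable escapes. Everything else is routine induction and case analysis, and no real obstacle remains once the projection invariant $s_i'=s_i|\vars_0$ is in hand.
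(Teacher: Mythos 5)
Your proposal is correct and follows essentially the same route as the paper: the paper's proof likewise reduces everything to the observation that any precondition or goal conjunct that can fail concerns a variable that is either in $\diff(a_i)$ (resp.\ $\diff(\goal)$) or affected by some action's effect, and hence lies in $\vars_0$. The only difference is presentational --- you make the projection invariant $s_i' = s_i|\vars_0$ explicit and argue both directions directly, whereas the paper treats the forward direction as immediate and runs the same case analysis as a proof by contradiction for the reverse direction.
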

\begin{proof}
  If $\plan$ is a plan for $\iplan$, then $\plan$ is a plan
  for $\iplan|\vars'$ whenever $\vars' \subseteq \vars$. In
  particular, $\plan$ is a plan for $\iplan|\vars_0$.

  For the reverse direction assume for a contradiction that
  $\plan|\vars_0$ is a plan for $\iplan|\vars_0$ but
  $\plan$ is not a plan for $\iplan$.
  There are two possible reasons for this: 
  (1) a precondition of some action $a_i$ for $1 \leq i \leq l$,
  is not met or (2) the goal state is not
  reached after having completed the plan. 

  In the first case, consider the state $s$ that $a_i$ is applied in.
  Then there is a variable $v \in \vars$ such that
  $\proj{s}{v} \neq \proj{\pre(a_i)}{v} \neq \undv$. 
  There are two cases to consider:
  \begin{enumerate}
  \item
    $\proj{\init}{v}=\varval \neq \undv$ and
    $\proj{pre(a_i)}{v}=\varval$. 
    In this case an action $a_j$ for some $j < i$ has changed the
    variable $v$ and
    $v \in \SB v \in \vars \SM
    \proj{\eff(a_i)}{v} \neq \undv \textup{ and }1 \leq i \leq l \SE$.
  \item
    $\proj{\init}{v}=\varval$, $\proj{\pre(a_i)}{v}=\varval' \neq
    \undv$, and $\varval \neq \varval'$. This implies
    that $v \in \diff(a_i)$.
  \end{enumerate}
  Hence, in both cases we obtain that $v \in \vars_0$ and that 
  $\plan$ is a plan for $\iplan$.

  In the second case, consider the state $g$ after applying $\plan$ to the initial state.
  Then there is a variable $v \in \vars$ such that
  $\proj{s}{v} \neq \proj{\goal}{v} \neq \undv$. 
  There are two cases to consider:
  \begin{enumerate}
  \item
    $\proj{\init}{v}=\varval \neq \undv$ and
    $\proj{\goal}{v}=\varval$. 
    In this case an action $a_j$ for some $1 \leq j \leq l$ has changed the
    variable $v$ and
    $v \in \SB v \in \vars \SM
    \proj{\eff(a_i)}{v} \neq \undv \textup{ and }1 \leq i \leq l \SE$.
  \item
    $\proj{\init}{v}=\varval$, $\proj{\goal}{v}=\varval' \neq
    \undv$, and $\varval \neq \varval'$. This implies
    that $v \in \diff(\goal)$.
  \end{enumerate}
  Hence, in both cases we obtain that $v \in \vars_0$ and that
  $\plan$ is a plan for $\iplan$.
\end{proof}
\begin{proposition}\label{pro:diffinpost}
  Let $\plan=\seq{a_1,\dotsc,a_l}$ be a plan for $\iplan$.
  Then 
  $\bigcup_{i=1}^l\diff(a_i) \cup \diff(\goal) \subseteq \SB v \in \vars \SM
  \proj{\eff(a_i)}{v} \neq \undv \textup{ and }1 \leq i \leq l \SE$.
\end{proposition}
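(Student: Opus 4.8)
The plan is to show that the value of a variable in any state reached along the plan can differ from its initial value only if some action in the plan has an explicit effect on that variable. Let $\plan=\seq{a_1,\dots,a_l}$ be a plan for $\iplan$, and let $\init = s_0, s_1,\dots, s_l$ be the sequence of states witnessing this, so that each $a_i$ is valid in $s_{i-1}$, each $s_i$ is the result of $a_i$ in $s_{i-1}$, and $s_l$ is a goal state. Write $W \mdef \SB v \in \vars \SM \proj{\eff(a_i)}{v} \neq \undv \textup{ for some } 1 \leq i \leq l \SE$ for the set on the right-hand side of the claimed inclusion. Our goal is to show $\bigcup_{i=1}^l \diff(a_i) \cup \diff(\goal) \subseteq W$.

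The key step I would carry out first is the following claim, proved by a short induction on $m$: for every $0 \leq m \leq l$ and every $v \in \vars \setminus W$, we have $\proj{s_m}{v} = \proj{\init}{v}$. The base case $m=0$ holds because $s_0 = \init$. For the inductive step, since $v \notin W$ we have $\proj{\eff(a_m)}{v} = \undv$, so by the definition of the result of an action $\proj{s_m}{v} = \proj{s_{m-1}}{v}$, which equals $\proj{\init}{v}$ by the induction hypothesis. This is the only nontrivial ingredient, and it is entirely routine; I do not expect a genuine obstacle anywhere in the argument.

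It then remains to apply the claim twice. For the first union, fix $v \in \diff(a_i) = \diff(\pre(a_i))$ for some $1 \leq i \leq l$; by definition $\proj{\pre(a_i)}{v} \neq \undv$ and $\proj{\pre(a_i)}{v} \neq \proj{\init}{v}$. Since $a_i$ is valid in $s_{i-1}$, we get $\proj{s_{i-1}}{v} = \proj{\pre(a_i)}{v} \neq \proj{\init}{v}$, so the claim applied with $m = i-1$ forces $v \in W$. For the second union, fix $v \in \diff(\goal)$; then $\proj{\goal}{v} \neq \undv$ and $\proj{\goal}{v} \neq \proj{\init}{v}$, and since $s_l$ is a goal state we have $\proj{s_l}{v} = \proj{\goal}{v} \neq \proj{\init}{v}$, so the claim applied with $m = l$ again forces $v \in W$. (In the degenerate case $l = 0$ the set $W$ is empty and $s_0 = s_l = \init$ is a goal state, which by the same reasoning forces $\diff(\goal) = \emptyset$, so the inclusion holds trivially.) Combining the two cases yields $\bigcup_{i=1}^l \diff(a_i) \cup \diff(\goal) \subseteq W$, completing the proof.
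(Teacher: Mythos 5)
Your proof is correct and takes essentially the same approach as the paper: the paper's proof simply asserts that since the precondition (or goal) value of $v$ differs from $\proj{\init}{v}$, some action in the plan must have an effect on $v$, which is exactly the contrapositive of your induction claim. Your version just makes that step explicit, which is a fine (and slightly more rigorous) way to write the same argument.
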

\begin{proof}
  First assume that $v \in \diff(\goal)$, i.e. $v \in \{v \in \vars \SM \proj{\goal}{v} \neq \undv
  \textup{ and } \proj{\goal}{v} \neq \proj{\init}{v}\}$. Since
  $\plan$ is a plan for~$\iplan$ it follows that there is an action
  $a_j$ in $\plan$, which changes
  the value of $v$, as required.

  Then assume that $v \in \diff(a_j)$ for some $1 \leq j \leq l$, i.e.,
  $v \in \{v \in \vars \SM \proj{\pre(a_j)}{v} \neq \undv \textup{ and
  } \proj{\pre(a_j)}{v} \neq \proj{\init}{v}\}$. Again, since
  $\plan$ is a plan it follows that there is an action $a_k$, $k < j$
  in $\plan$, which changes the value of $v$, as required.
\end{proof}
\begin{corollary}\label{cor:restriction-plan}
  Let $\plan=\seq{a_1,\dotsc,a_l}$ be a sequence of actions from
  $\acts$
  and $\vars_0=\SB v \in \vars \SM \proj{\eff(a_i)}{v} \neq \undv \textup{ and }1
  \leq i \leq l \SE$. Then $\plan$ is a plan for $\iplan$ if and
  only if $\bigcup_{i=1}^l\diff(a_i) \cup \diff(\goal) \subseteq \vars_0$ and
  $\plan$ is a plan for $\iplan|\vars_0$.
\end{corollary}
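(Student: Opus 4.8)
The plan is to obtain the corollary directly from Propositions~\ref{pro:checkX} and~\ref{pro:diffinpost}, the only subtlety being bookkeeping: the symbol $\vars_0$ in the corollary denotes only the set of variables that some action of $\plan$ writes to, whereas the variable set appearing in Proposition~\ref{pro:checkX} additionally throws in $\bigcup_{i=1}^l\diff(a_i) \cup \diff(\goal)$. So I would first fix notation, writing $\vars_1 \mdef \bigcup_{i=1}^l\diff(a_i) \cup \diff(\goal) \cup \vars_0$ for the latter set; then $\vars_0 \subseteq \vars_1$ always, and $\vars_1 = \vars_0$ holds exactly when $\bigcup_{i=1}^l\diff(a_i) \cup \diff(\goal) \subseteq \vars_0$.

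For the forward implication, assume $\plan$ is a plan for $\iplan$. Proposition~\ref{pro:diffinpost} immediately yields $\bigcup_{i=1}^l\diff(a_i) \cup \diff(\goal) \subseteq \vars_0$, which is the first asserted conclusion and simultaneously gives $\vars_1 = \vars_0$. Now Proposition~\ref{pro:checkX} says that $\plan$ is a plan for $\iplan|\vars_1$, i.e.\ for $\iplan|\vars_0$, which is the second asserted conclusion.

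For the reverse implication, assume $\bigcup_{i=1}^l\diff(a_i) \cup \diff(\goal) \subseteq \vars_0$ and that $\plan$ is a plan for $\iplan|\vars_0$. The assumed inclusion gives $\vars_1 = \vars_0$, so $\plan$ is a plan for $\iplan|\vars_1$; applying the reverse direction of Proposition~\ref{pro:checkX} then shows that $\plan$ is a plan for $\iplan$, as required.

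I do not expect a genuine obstacle here: the statement is essentially a repackaging of the two preceding propositions. The one place to be careful is exactly the point flagged above, namely keeping the corollary's $\vars_0$ distinct from the variable set of Proposition~\ref{pro:checkX}, so that the equality $\vars_1 = \vars_0$ is invoked only under the hypothesis that guarantees it (and in the forward direction this hypothesis is itself produced by Proposition~\ref{pro:diffinpost}).
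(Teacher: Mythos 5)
Your derivation is correct and is exactly the route the paper intends: the corollary is stated without proof as an immediate consequence of Propositions~\ref{pro:checkX} and~\ref{pro:diffinpost}, and your careful separation of the corollary's $\vars_0$ from the larger variable set of Proposition~\ref{pro:checkX} (your $\vars_1$) is precisely the bookkeeping needed to make that implication rigorous.
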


\begin{theorem}\label{in-W1}
  \BPER{U}\ is in $\W{1}$.
\end{theorem}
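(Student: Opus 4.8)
The plan is to reduce \BPER{U}\ to \FOMCR{\Sigma_1}, which is $\W{1}$-complete by Proposition~\ref{pro:fo-model-w1}. Given $\tuple{\iplan,k}$, I would first preprocess: output a fixed no-instance if $|\diff(\goal)|>k$ (a unary plan of length $\le k$ touches at most $k$ variables, so $\diff(\goal)$ must lie among them), and delete every action $a$ with $|\diff(a)|\ge k$ (if such an action stood in position $i\le k$ of a plan, all of $\diff(a)$ would have had to be changed by the $i-1\le k-1$ preceding unary actions). After this, every surviving action has $|\diff(a)|<k$ and $|\diff(\goal)|\le k$, so in the structure $\AAA^*(\iplan)$ of the excerpt the relations $\diffRNS$ and $\diffGoalRNS$ genuinely attach \emph{exactly} $k$ elements to each action and to the goal, namely their diff-variables together with dummy padding; this $\AAA^*(\iplan)$ is the input structure of the reduction.

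The formula $\phi\in\Sigma_1$ existentially guesses: the $k$ actions $a_1,\dots,a_k$ of the plan, with unused slots filled by the dummy action $\dumop$ exactly as in the proof of Theorem~\ref{in-W2}; for each $i$ an element $u_i$ meant to be the unique effect variable of $a_i$ (pinned down by $\postRNS(a_i,u_i)$ when $a_i\neq\dumop$, and a dummy element when $a_i=\dumop$); for each $i$ a list $w_{i,1},\dots,w_{i,k}$ required to be pairwise distinct with $\diffRNS(a_i,w_{i,m})$ for all $m$, so that $\{w_{i,1},\dots,w_{i,k}\}$ is exactly $\diff(a_i)$ plus dummies; an analogous list $g_1,\dots,g_k$ for $\diffGoalRNS$; and one domain-value variable for each precondition/goal test. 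The quantifier-free body asserts (i) that every $w_{i,m}$ and every $g_m$ is a dummy or equals some $u_j$, which forces $\bigcup_i\diff(a_i)\cup\diff(\goal)\subseteq\{u_1,\dots,u_k\}$; and (ii), reusing the quantifier-free formula $\fvalue$ of Theorem~\ref{in-W2} verbatim (it unfolds to size $O(k)$), that for all $i$ and $j$ the precondition of $a_i$ on $u_j$ holds in the state after $a_1,\dots,a_{i-1}$, and that the goal holds on each real $g_m$ after $a_1,\dots,a_k$. The point of the construction is to eliminate the two universal quantifiers of Theorem~\ref{in-W2}: ``$\forall v$'' becomes the bounded conjunction over $u_1,\dots,u_k$ (resp.\ over $g_1,\dots,g_k$), and ``$\forall\varval$'' becomes an existential guess of the value at stake, so e.g.\ the precondition test reads $\lnot\preRNS(a_i,u_j)\lor\exists\varval\,(\preRVNS(a_i,u_j,\varval)\land\fvalue(\langle a_1,\dots,a_{i-1}\rangle,u_j,\varval))$.

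For correctness I would invoke Corollary~\ref{cor:restriction-plan}: with $\vars_0$ the set of effect variables of $a_1,\dots,a_k$ (so $|\vars_0|\le k$ by unariness, and $\{u_1,\dots,u_k\}$ is $\vars_0$ up to dummies), $\langle a_1,\dots,a_k\rangle$ is a plan for $\iplan$ iff $\bigcup_i\diff(a_i)\cup\diff(\goal)\subseteq\vars_0$ and it is a plan for $\iplan|\vars_0$. Part~(i) of the body expresses the inclusion, and part~(ii), evaluated only over the $u_j$ and $g_m$, expresses that $\langle a_1,\dots,a_k\rangle$ is a plan for $\iplan|\vars_0$ — a precondition or goal constraint on a variable outside $\vars_0$ has required value equal to the initial value (once the inclusion is known) and is therefore automatically met. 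As in Theorem~\ref{in-W2}, the dummy action makes a plan of length $\le k$ equivalent to one of length exactly $k$. Since $\phi$ has $O(k^2)$ quantifiers and a body of size polynomial in $k$, its length depends only on $k$ and the reduction runs in polynomial time, so \BPER{U}\ lies in $\W{1}$.

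The step I expect to be the main obstacle is making part~(ii) airtight while staying within $\Sigma_1$. Two things must be secured: that every variable the plan actually touches is among the guessed $u_j$ — this is precisely what the ``exactly $k$ neighbours'' design of $\diffRNS$ and $\diffGoalRNS$ provides, and it is why the preprocessing that bounds $|\diff(a)|$ and $|\diff(\goal)|$ is needed; and that preconditions and goal entries whose required value equals the initial value are still checked, since ``this variable is unchanged'' is a genuine constraint. For preconditions the relation $\preRNS$ keeps ``$a_i$ has no precondition on $u_j$'' quantifier-free, so nothing new is needed there; for the goal one either adds a unary relation to $\AAA^*(\iplan)$ marking the goal-constrained variables, or (cleaner) prepends a goal-checking action whose precondition is $\goal$ and whose single effect flips a fresh variable that becomes the only goal entry, raising the length bound to $k+1$. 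Everything else is a routine transcription of the proof of Theorem~\ref{in-W2} with its two universal quantifiers removed.
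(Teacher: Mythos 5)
Your proposal is correct and follows essentially the same route as the paper's proof: a reduction to \FOMCR{\Sigma_1} over the extended structure $\AAA^*(\insti)$, existentially guessing the $k$ actions and their $\le k$ effect variables (possible only because of restriction U), using the dummy-padded relations $\diffRNS$ and $\diffGoalRNS$ with exactly $k$ neighbours to certify $\bigcup_i\diff(a_i)\cup\diff(\goal)\subseteq\vars_0$, and invoking Corollary~\ref{cor:restriction-plan} for correctness. Your only real addition is the explicit preprocessing that discards actions with $\card{\diff(a)}\ge k$ and rejects when $\card{\diff(\goal)}>k$, which the paper leaves implicit; this is a harmless (indeed clarifying) refinement rather than a different argument.
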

\begin{proof}
  We proceed by a parameterized reduction to the $\W{1}$-complete 
  problem \FOMCR{\Sigma_1}. Let
  ${\insti}=\tuple{\iplan,k}$ with $\iplan=\tuple{\vars,\dom,\acts,\init,\goal}$ be an 
  instance of \BPER{U}. We construct an
  instance ${\insti}'=\tuple{\AAA^*(\insti),\phi}$ of
  \FOMCR{\Sigma_1} 
  such that ${\insti}$ has a solution if and
  only if ${\insti}'$ has a solution and the length of the formula $\phi$ is
  bounded by some function that only depends on the parameter~$k$.

  The formula $\phi$ uses the following
  existentially quantified variables:
  \begin{itemize}
  \item The variables $a_1,\dotsc,a_k$. The values of these variables correspond to the
    at most $k$ actions of a plan for $\iplan$.
  \item The variables $v_1,\dotsc,v_k$. The values of these variables correspond to the
    variables that are involved in the effects of the
    actions assigned to $a_1,\dotsc,a_k$,
    i.e., it holds that $\proj{\eff(a_i)}{v_i} \neq \undv$ for every $1 \leq
    i \leq k$. Because $\iplan$ is unary there is at most one such
    variable for each of the at most $k$ actions in a potential plan for $\iplan$.
  \item The variables $d_1,\dotsc,d_k$. These variables are so-called
    ``dummy'' variables that we use to check the maximality of certain
    sets.
  \item The variables $\varval_{1,1},\dotsc, \varval_{1,k}, \dotsc,
    \varval_{k,1}, \dotsc,\varval_{k,k}$. These variables are
    used to check the preconditions of the
    actions $a_1,\dotsc,a_k$. Here $\varval_{i,j}$ represents 
    $\proj{\pre(a_i)}{v_j}$ for every $1 \leq i,j \leq k$.
  \item The variables $\varval_{g,1},\dotsc,\varval_{g,k}$. These
    variables are
    used to check whether all conditions of the goal state are
    met after the actions $a_1,\dotsc,a_k$ have been executed on the
    initial state. Here $\varval_{g,i}$ represents $\proj{\goal}{v_i}$.
  \end{itemize}
  We define $\phi$ in such a way that $\AAA^*(\insti) \models \phi$ if
  and only if there is a sequence of actions $a_1,\dotsc,a_k$ and a
  set $\vars_0=\SB v \in \vars \SM \proj{\eff(a_i)}{v} \neq \undv
  \textup{ and } 1 \leq i \leq k \SE$ of variables with
  $\bigcup_{i=1}^k\diff(a_i) \cup \diff(\goal) \subseteq \vars_0$ such that
  $a_1,\dotsc,a_k$ is a plan for $\iplan|\vars_0$. Because of
  Corollary~\ref{cor:restriction-plan} it then
  follows that $\AAA^*(\insti) \models \phi$ if and only if there
  is a plan of length at most $k$ for $\iplan$.
  Consequently, the formula $\phi$ has to ensure the following
  properties:
  \begin{enumerate}
  \item[P1] For every $1 \leq i \leq k$ if the variable $a_i$ is
    assigned to an action other than the ``dummy'' action ($\dumop$),
    then the variable $v_i$ is assigned to the unique variable $v \in
    \vars$ with $\proj{\eff(a_i)}{v} \neq \undv$. In the following we
    denote by $\vars_0$ the variables in $\vars$ that are assigned to
    the variables $v_1,\dotsc,v_k$.
  \item[P2] For every $1 \leq i \leq k$ it holds that $\diff(a_i)
    \subseteq \vars_0$.
  \item[P3] $\diff(\goal) \subseteq \vars_0$.
  \item[P4] For every $1 \leq
    i \leq k$ all preconditions of the action $a_i$ on the
    variables $v_1,\dotsc,v_k$ are met after the execution of the
    actions $a_1,\dotsc,a_{i-1}$ on the initial state.
  \item[P5]
    The goal state on the variables
    $v_1,\dotsc,v_k$ is reached after the
    execution of the actions $a_1,\dotsc,a_k$ on the initial state.
  \end{enumerate}
  Observe that the properties P1--P3 ensure that the variables
  $v_1,\dotsc,v_k$ are assigned to a set of variables $\vars_0$ with 
  $\vars_0=\SB v \in \vars \SM \proj{\eff(a_i)}{v} \neq \undv \textup{
    and } 1 \leq i \leq k \SE$ (or to the ``dummy'' action)
  and 
  $\bigcup_{i=1}^k\diff(a_i) \cup \diff(\goal) \subseteq \vars_0$. The
  properties P4--P5 make sure that the sequence of actions
  $\seq{a_1,\dotsc,a_k}$ is a plan for $\iplan|\vars_0$.

  \medskip\noindent 
  \sloppypar
  The formula $\phi$ is composed of several auxiliary formulas that we
  define next.
  We define a formula $\fcheckpost(a_1, \dotsc, a_k,v_1, \dotsc v_k, \varval_1,
  \dotsc,\varval_k)$ that ensures property P1, i.e., $\proj{\eff(a_i)}{v_i}
  =x_i$ or $a_i=\dumop$ for every $1 \leq i \leq k$. 
  \begin{quote}
    $\fcheckpost(a_1, \dotsc, a_k,v_1, \dotsc v_k, \varval_1,
    \dotsc,\varval_k) \mdef \bigwedge_{i=1}^k(\postRV a_iv_i\varval_i
    \lor \dumopR a_i )$
  \end{quote}
  For every $1 \leq i \leq k$ we define a formula $\fdiffop(a_i, v_1,
  \dotsc, v_k, d_1,\dotsc,d_k)$ that holds if $\diff(a_i) \subseteq
  \vars_0$. To check this, the formula checks that all of the exactly $k$
  tuples in $\diffR$ that contain $a_i$ (recall the definition of the
  relation $\diffR$ in the structure $\AAA^*(\iplan)$) are tuples of the
  form $(a_i,v_j)$ or $(a_i,d_j)$ for some $1 \leq j \leq k$.
  \begin{quote}
    $ \fdiffop(a_i,v_1,\dotsc,v_k,d_1,\dotsc,d_k) \mdef   \dumopR a_i \lor$

 \hfill  $
    \bigvee_{J \subseteq \{1,\dots,k\}}
     \big[\bigwedge_{j\neq j' \in J} v_j \neq v_{j'} \land
    \bigwedge_{j\in J}   \diffR a_i v_j \land 
    \bigwedge_{1 \leq j \leq
      k-\Card{J}}  \diffR a_i d_j\big] $
  \end{quote} 
  \medskip\noindent We define a formula that ensures property P2.
  \begin{quote}
    $ \fdiffopall(a_1, \dotsc, a_k,v_1,\dotsc,v_k,d_1,\dotsc,d_k) \mdef
     \bigwedge_{i=1}^{k}\fdiffop(a_i,v_1,\dotsc,v_k,d_1,\dotsc,d_k) $
  \end{quote}
  \medskip\noindent
  Similarly to $ \fdiffop$ above we define a formula $\fdiffgoal(v_1,
  \dotsc, v_k, d_1,\dotsc,d_k)$ that ensures property P3, i.e.,
  $\diff(\goal) \subseteq \vars_0$.
  \begin{quote}
    $ \fdiffgoal(v_1,\dotsc,v_k,d_1,\dotsc,d_k) \mdef$

$\hfill  \bigvee_{J \subseteq \{1,\dots,k\}}
    \; \big[\bigwedge_{j\neq j' \in J} v_j \neq v_{j'} \land
    \bigwedge_{j\in J}   \diffGoalR  v_j \land 
     \bigwedge_{1 \leq j \leq
      k-\Card{J}}  \diffGoalR  d_j\big] $
  \end{quote} 

  \medskip\noindent
  For every $1 \leq i \leq k$, we define a formula $\fvalue(a_1, \dotsc, a_i, v,\varval)$ that holds if 
  the variable~$v$
  has value $\varval$ after the actions $a_1,\dotsc,a_i$ have
  been executed on the initial state. We define the formulas inductively as
  follows.
  \begin{quote}
    $ \fvalue(v,\varval) \mdef \initR v\varval$

    $\fvalue(a_1, \dotsc,a_i,v,\varval) \mdef
    (\fvalue(a_1,\dotsc,a_{i-1},v,\varval) \land \lnot \postR a_iv )
    \lor \postRV a_iv\varval$
  \end{quote}
  \medskip\noindent For every $1 \leq i \leq k$ we define a formula
  $\fcheckpre(a_1,\dotsc,a_i,v_1,\dotsc,v_k,\varval_1,\dotsc,\varval_k)$
  that holds if
  all preconditions of the action $a_i$
  defined on the variables $v_1,\dotsc,v_k$ are met after the
  actions $a_1,\dotsc,a_{i-1}$ have been executed on the initial state.
  \begin{quote}
    $\fcheckpre(a_1, \dotsc,a_i,v_1,\dotsc,v_k, \varval_1, \dotsc,
    \varval_k) \mdef\\
    \myindent    (\bigwedge_{j=1}^{k}(\preRV a_iv_j\varval_j \land
    \fvalue(a_1,\dotsc,a_{i-1},v_j,\varval_j)) \lor \lnot \preR
    a_iv_j)$
  \end{quote} 

  \medskip\noindent
  We define a formula
  $\fcheckpreall(a_1,\dotsc,a_k,v_1,\dotsc,v_k,\varval_{1,1},\dotsc,\varval_{k,k})$
  that ensures property P4, i.e.,
  $\fcheckpre(a_1,\dotsc,a_i,v_1,\dotsc,v_k,\varval_{i,1},\dotsc,\varval_{i,k})$
  for every $1 \leq i \leq k$. 
  \begin{quote} 
    $\fcheckpreall(a_1, \dotsc,a_k,v_1,\dotsc,v_k, \varval_{1,1},
    \dotsc, \varval_{k,k}) \mdef \\
    \myindent
    \bigwedge_{i=1}^{k}\fcheckpre(a_1,\dotsc,a_i,v_1,\dotsc,v_k,\varval_{i,1},\dotsc,\varval_{i,k})$
  \end{quote}

  \medskip\noindent Finally, we define a formula
  $\fcheckpregoal(a_1,\dotsc,a_k,v_1,\dotsc,v_k,\varval_{g,1},\dotsc,\varval_{g,k})$
  that ensure property P5, i.e., all conditions of the goal state on the
  variables $v_1,\dotsc,v_k$ are met after the actions
  $a_1,\dotsc,a_k$ have been executed on the initial state.
  \begin{quote}
    $\fcheckpregoal(a_1, \dotsc,a_k,v_1,\dotsc,v_k, \varval_{g,1}, \dotsc,
    \varval_{g,k}) \mdef \\
    \myindent  (\bigwedge_{i=1}^k (\goalRV v_i\varval_{g,i} \land
    \fvalue(a_1,\dotsc,a_k,v_i,\varval_{g,i})) \lor \lnot \goalR v_i) $
  \end{quote}

  \medskip\noindent Now we can use the above formulas to define the
  required formula $\phi$.  The formula starts with existential
  quantifiers over the variables $a_i,v_i,d_i$, $\varval_{i,j}$, and $x_{g,i}$
  for every $i,j \in \{1,\dots, k\}$, followed by the
  conjunction of the following quantifier-free formulas:
  \begin{itemize}
  \item $\bigwedge_{i=1}^k\opeNS a_i \land 
    \bigwedge_{i=1}^k\varR v_i \land 
    \bigwedge_{i=1}^k\domR \varval_i \land 
    \bigwedge_{1 \leq i,j \leq k}\domR \varval_{i,j} \land$

    $\bigwedge_{i=1}^k \dumR d_i \land\bigwedge_{1 \leq i< j \leq k} d_i\neq d_j$
  \item $\fcheckpost(a_1,\dotsc,a_k,v_1, \dotsc, v_k, \varval_1, \dotsc,
    \varval_k)$
  \item $ \fdiffopall(a_1, \dotsc, a_k,v_1,\dotsc,v_k,d_1,\dotsc,d_k)$
  \item $\fdiffgoal(v_1,\dotsc,v_k,d_1,\dotsc,d_k)$
  \item $\fcheckpreall(a_1,\dotsc,a_k,v_1,\dotsc,v_k,\varval_{1,1},
    \dotsc, \varval_{k,k})$ 
  \item   $\fcheckpregoal(a_1, \dotsc,a_k, v_1,\dotsc, v_k, \varval_{g,1},
    \dotsc, \varval_{g,k})$
  \end{itemize}
  Evidently $\phi \in \Sigma_{1}$, the length of $\phi$ is bounded by some
  function that only depends on $k$, and $\AAA^*(\insti) \models \phi$ if and only
  if $\iplan|\vars_0$ (and hence $\iplan$ according to
  Corollary~\ref{cor:restriction-plan}) has a plan of length at most $k$.
\end{proof}

\section{Fixed-Parameter Tractability}
\noindent
In this section we show that \BPER{P} and $(0,2)$-\BPE{} are
fixed-parameter tractable.

\subsection{\BPER{P}}
\label{sec:partialplan}

\newcommand{\ins}{\text{\normalfont insert}}

This subsection is devoted to a proof of the following theorem.
\begin{theorem}\label{the:Pfpt}
  \BPER{P} is fixed-parameter tractable.
\end{theorem}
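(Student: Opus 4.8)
The plan is to give a bounded-search-tree (least-commitment planning) algorithm whose branching factor is controlled by the post-uniqueness restriction.

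\emph{Setup.} I would work with partial-order plans: a partial plan consists of a set of \emph{action occurrences}, each labelled by an action of $\acts$, including a dummy initial occurrence $o_I$ whose effect is $\init$ and a dummy goal occurrence $o_G$ whose precondition is $\goal$; a partial order $\prec$ with $o_I$ least and $o_G$ greatest; and a set of \emph{causal links}, each certifying that some occurrence establishes a required value $v=x$ for a later occurrence. A partial plan is \emph{flawed} if some precondition of an occurrence (or some goal condition) is not supported by a link, or if a link for $v=x$ between $o$ and $o'$ is \emph{threatened}, i.e.\ there is an occurrence setting $v$ that is not yet forced before $o$ or after $o'$. It is standard that a flawless partial plan with an acyclic order linearises to a genuine plan whose length equals the number of non-dummy occurrences.

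\emph{The search.} Starting from the partial plan with just $o_I \prec o_G$ and the conditions of $\goal$ unsupported, repeatedly select a flaw and branch over its resolutions: an unsupported value $v=x$ for $o'$ is resolved by linking to $o_I$ (when $\proj{\init}{v}=x$), to an occurrence already present that sets $v$ to $x$, or to a freshly inserted occurrence of an action that sets $v$ to $x$, in each case adding the appropriate ordering edge; a threat is resolved by a promotion or a demotion edge. \emph{Here restriction P is decisive}: for every pair $(v,x)$ there is at most one action whose effect sets $v$ to $x$, so the producer of any required value is essentially forced --- it is $o_I$ or an occurrence of one specific action. A branch is abandoned once the order becomes cyclic or more than $k$ non-dummy occurrences have been inserted; a branch succeeds when no flaw remains, and one outputs any linearisation, which is then a plan of length at most $k$. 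Soundness is immediate; for completeness, a hypothetical plan of length at most $k$ guides the choices --- its causal structure dictates the producer and ordering picks, and it never forces more than $k$ occurrences.

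\emph{The main obstacle} is the running-time analysis: showing that the search tree has size bounded by a function of $k$ alone. There are never more than $k+2$ occurrences, and a partial order on them has only $2^{O(k^2)}$ completions, so the ordering and threat decisions are under control. The subtle point is that under P an action may carry arbitrarily many preconditions and effects, so a priori there are $\Omega(n)$ unsupported-value flaws and $\Omega(n)$ candidate actions to insert. The way out is to argue that almost all of these flaws are dispatched \emph{without branching}: a required value $v=x$ with $\proj{\init}{v}=x$ is linked to $o_I$ at no cost, and is then threatened only by one of the $\le k+2$ occurrences already present, whereas a required value $v=x$ with $\proj{\init}{v}\neq x$ forces the unique action establishing it into the plan; so once the (at most $k+2$) occurrences and their order are fixed, every remaining support is determined and a linearisation can be validated in polynomial time. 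Turning this into a clean bound is the technical heart of the argument, and the most delicate aspect is that a shortest plan may have to \emph{repeat} an action (re-establishing a value that an intervening action clobbered), so one must verify that the insertion rule together with the abandon-at-$k$ cutoff still preserves completeness in the presence of such repetitions, and that this does not force the number of occurrences, or the number of genuine branching decisions, past a bound depending only on $k$.
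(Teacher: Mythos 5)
Your overall strategy (a bounded-depth partial-order/least-commitment search whose branching is tamed by post-uniqueness) is viable in principle---the paper itself remarks that a modified McAllester--Rosenblitt algorithm yields the result---but your write-up leaves exactly the step that makes it work as an open ``technical heart,'' and the resolution you sketch does not close it. The problem is the producer choice for unsupported preconditions. Post-uniqueness forces the \emph{action} that establishes a required value $v=x$ with $\proj{\init}{v}\neq x$, but it does not force the \emph{occurrence}: since (as you yourself note) a shortest plan may repeat an action, each unsupported precondition still requires a genuine decision between linking to one of the up to $k+2$ existing occurrences of that action, or to $o_I$, or inserting a fresh occurrence. An action under restriction P may have $\Omega(n)$ preconditions, so if these decisions are made one flaw at a time you get $\Omega(n)$ branching points along a branch and a search tree of size $(k+2)^{\Omega(n)}$, which is not FPT. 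Likewise, linking $v=x$ to $o_I$ ``at no cost'' whenever $\proj{\init}{v}=x$ is not a complete rule, because the witnessing plan may support that precondition by re-establishing $v=x$ after a clobbering action. The missing idea is to \emph{batch} the linking: when a producer occurrence $o_p$ is chosen for one open precondition of a consumer $o_c$, one must simultaneously link every open precondition of $o_c$ whose value $o_p$ supplies. This is where P is actually used for correctness---if two preconditions $v=x$ and $w=y$ of $o_c$ are both effects of $o_p$, then any alternative producer of $v=x$ is an occurrence of the same action and hence also produces $w=y$, so splitting them across distinct producers would force a cyclic order. With batching, the preconditions of each occurrence partition into at most $k+1$ producer groups, so only $O(k^2)$ producer decisions (branching factor $\leq k+2$) and $O(k^2)$ threat decisions (each adds an ordering edge, and the order has at most $(k+2)^2$ pairs) occur along any branch, giving a tree of size bounded by a function of $k$. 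Without this observation your analysis does not go through.

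For comparison, the paper's own proof avoids partial orders entirely and is considerably lighter. It works with totally ordered subsequences of a hypothetical plan: a pair $(v,x)$ is \emph{required between positions $i$ and $j$} of the current sequence if position $j$ needs $v=x$ (as a precondition or as the goal) and $i$ is the earliest position after which $v\neq x$ persists. If the current sequence is not yet a plan, some pair is required, post-uniqueness pins down the single action $a$ that can produce it, and the only nondeterminism is the position at which $a$ is inserted---one of at most $j-i+1\leq k+1$ slots between $i$ and $j$. This yields a search tree of depth at most $k$ and branching factor at most $k+1$, hence $O^*(k^k)$ time, with completeness proved by induction on the length of a minimal plan. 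If you want to salvage your partial-order route, you should either import the batched-linking rule above or switch to this insertion-based formulation.
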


Let $\insti=(\iplan,k)$ with $\iplan=(\vars,\acts,\init,\goal)$, be an
instance of \BPER{P}. First, we introduce some terminology
on sequences of actions.  For $l \geq 0$, let
$\plan=\seq{a_1,\dotsc,a_l}$ be a sequence of actions from $\acts$.  We define
$\ins(i,a,\plan)=\seq{a_1,\dotsc,a_{i-1},a,a_i\dotsc,a_l}$.  Let
$\tuple{v,\varval}\in \vars \times \dom$. For $0\leq i < j \leq l+1$ we say
that $\tuple{v,\varval}$ is \emph{required in $\plan$ between positions $i$
  and $j$} if the following two conditions hold:
\begin{enumerate}
\item Either $j\leq l$ and the $j$-th element of $\plan$ is an action
  $a$ with $\proj{\pre(a)}{v}=\varval$, or $j=l+1$ and
  $\proj{\goal}{v}=\varval$.
\item $i$ is the smallest integer such that $\proj{s}{v}\neq \varval$ 
  where $s$ is the state obtained after applying the actions $a_1,
  \dotsc, a_i$ to the initial state (with $i=0$ if $\proj{\init}{v}\neq \varval$).
\end{enumerate}
If $(v,\varval)$ is required in $\plan$ between positions $i$ and $j$,
and $a$ is an action which sets $v$ to $\varval$, then we also say that $a$
is required in $\plan$ (between positions $i$ and $j$).
Note that there can be at most one such action since $\iplan$
is post-unique.

The following claim is immediate from the above definitions.
\begin{claim}\label{claim:fpt1}
  $\plan$ is a plan for $\iplan$ if and
  only if there is no pair $\tuple{v,\varval}\in \vars \times \dom$ which is
  required in $\plan$ between some positions $0\leq i < j \leq l+1$.
\end{claim}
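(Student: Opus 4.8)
The plan is to unfold the definition of ``plan'' and match it, demand by demand, against the definition of ``required pair''. Throughout, let $s_0,\dots,s_l$ be the states with $s_0=\init$ and $s_m$ the result of applying $a_1,\dots,a_m$ to $\init$; these states are well defined for \emph{every} action sequence, independently of validity, since the result of an action is defined in any state. By the definition in Section~\ref{sec:planning-framework}, $\plan$ is a plan for $\iplan$ exactly when (i)~each $a_j$ is valid in $s_{j-1}$ for $1\le j\le l$, and (ii)~$s_l$ is a goal state. I would first package both conditions uniformly: say there is a \emph{demand} for $\tuple{v,\varval}\in\vars\times\dom$ at position $j$ if either $j\le l$ and $\proj{\pre(a_j)}{v}=\varval$, or $j=l+1$ and $\proj{\goal}{v}=\varval$. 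Then (i)~and (ii)~together assert precisely that every demand is \emph{met}, i.e., for each demand for $\tuple{v,\varval}$ at position $j$ we have $\proj{s_{j-1}}{v}=\varval$ (for $j=l+1$ this is $\proj{s_l}{v}=\varval$). This is exactly condition~1 in the definition of ``required'' read as the catalogue of things the plan must achieve.

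The key step is then the bridging equivalence: for fixed $\tuple{v,\varval}$ and $j$, there is an $i$ with $0\le i<j$ for which $\tuple{v,\varval}$ is required between positions $i$ and $j$ if and only if there is a demand for $\tuple{v,\varval}$ at position $j$ that is \emph{not} met, i.e., $\proj{s_{j-1}}{v}\neq\varval$. For the forward direction I would observe that condition~2 forces the value of $v$ immediately before position $j$ to differ from $\varval$: the index $i<j$ witnessing $\proj{s_i}{v}\neq\varval$ marks the onset of a stretch on which $v$ stays off the value $\varval$ up to position $j-1$, so in particular $\proj{s_{j-1}}{v}\neq\varval$, while condition~1 supplies the demand. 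For the converse, suppose a demand for $\tuple{v,\varval}$ at $j$ is unmet, so $\proj{s_{j-1}}{v}\neq\varval$; then the set of indices at which $v$ differs from $\varval$ and from which the value stays off $\varval$ through position $j-1$ is nonempty, since it contains $j-1$, and taking its least element $i$ yields $0\le i<j$ with condition~2 satisfied, while condition~1 is exactly the demand.

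Finally I would assemble the two directions of the stated claim from the bridging equivalence. If $\plan$ is a plan, then every demand is met, so no $\tuple{v,\varval}$ is required between any $i<j$; conversely, if no pair is required between any positions, then no demand is unmet, i.e., every precondition holds in the appropriate state and the goal is reached, so $\plan$ is a plan. I expect the only delicate point to be the bookkeeping in condition~2 --- namely verifying that the witnessing index $i$ certifies $\proj{s_{j-1}}{v}\neq\varval$, the deviation \emph{persisting} up to position $j-1$, rather than merely some earlier deviation that was later repaired; tying $i$ to the stretch ending at position $j-1$ is what makes the equivalence tight. Note that post-uniqueness is not needed for this claim: it concerns arbitrary action sequences, and the P restriction enters only afterwards, to guarantee that the action achieving a required pair is unique.
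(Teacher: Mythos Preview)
The paper offers no proof, stating only that the claim is ``immediate from the above definitions.'' Your approach---unfold the definitions, package preconditions and goal uniformly as ``demands,'' and prove a bridging equivalence between required pairs and unmet demands---is exactly the right shape.

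There is, however, a real gap in your forward direction of the bridging equivalence. You assert that condition~2 makes $i$ ``the onset of a stretch on which $v$ stays off the value $\varval$ up to position $j-1$,'' and hence forces $\proj{s_{j-1}}{v}\neq\varval$. But the definition as written says only that $i$ is the \emph{smallest} integer with $\proj{s_i}{v}\neq\varval$; nothing in it asserts that the deviation persists through position $j-1$. Concretely: take $\vars=\{v\}$, $\dom=\{0,1\}$, $\proj{\init}{v}=\proj{\goal}{v}=0$, actions $a_1$ with sole effect $v\mapsto 1$ and $a_2$ with sole effect $v\mapsto 0$ (both without preconditions), and $\plan=\seq{a_1,a_2}$. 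Then $\plan$ is a plan, yet $(v,0)$ is required between positions $1$ and $l+1=3$ under the literal definition, since $\proj{\goal}{v}=0$ and $i=1$ is the smallest index with $\proj{s_i}{v}\neq 0$. So under the literal reading the claim itself fails, and your forward implication cannot go through as stated. Your converse direction has a smaller mismatch: you take $i$ to be the least index in the stretch ending at $j-1$, whereas the definition asks for the globally least index with $\proj{s_i}{v}\neq\varval$; these can differ, though here the repair is immediate (just use the global minimum, which exists since $j-1$ witnesses nonemptiness).

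You correctly flagged this as ``the only delicate point,'' and your intended reading---$i$ as the start of the maximal wrong-value run ending at $j-1$---is almost certainly what the authors have in mind; it is also what makes the insertion window in Claim~\ref{claim:fpt2} meaningful. But your argument does not \emph{derive} the persistence from the stated definition; it assumes it. The clean resolution is to note explicitly that condition~2 should be read (or amended) to additionally require $\proj{s_m}{v}\neq\varval$ for all $i\le m<j$; under that reading your proof goes through verbatim.
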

\begin{claim}\label{claim:fpt2}
  If $\plan$ is a subsequence of some plan $\plan^*$ for $\iplan$,
  and $\tuple{v,\varval}$ is required in $\plan$ between
  some positions $i$ and $j$ with $0 \leq i < j \leq l+1$. Then the following holds:
  \begin{enumerate}
  \item $\plan^*$ must contain the unique action $a$ with $\proj{\eff(a)}{v}=\varval$.
  \item for some $i\leq m \leq j$, the sequence
    $\ins(m,a,\omega)$ is a subsequence of $\plan^*$.
  \end{enumerate}
\end{claim}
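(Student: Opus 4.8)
The plan is to fix one embedding of $\plan$ into $\plan^*$ and then simply track the value of the single variable~$v$ along the execution of~$\plan^*$. Write $\plan^*=\seq{b_1,\dots,b_N}$ and let $t_0=\init,t_1,\dots,t_N$ be its sequence of states; since $\plan^*$ is a plan, each $b_m$ is valid in $t_{m-1}$ and $t_N$ is a goal state. Fix indices $p_1<\dots<p_l$ with $b_{p_t}=a_t$, and put $p_0=0$ and $p_{l+1}=N+1$. Let $s_0=\init,\dots,s_l$ be the execution of~$\plan$ itself. From ``$(v,\varval)$ required in $\plan$ between $i$ and $j$'' I will use: (a)~$\proj{\pre(a_j)}{v}=\varval$ if $j\le l$, and $\proj{\goal}{v}=\varval$ if $j=l+1$; and (b)~$\proj{s_{i'}}{v}\neq\varval$ for every $i\le i'\le j-1$, together with $\proj{\init}{v}\neq\varval$ when $i=0$ and $\proj{s_{i-1}}{v}=\varval$ when $i\ge1$. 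The last equality, combined with $\proj{s_i}{v}\neq\varval$, forces $\proj{\eff(a_i)}{v}\notin\{\undef,\varval\}$.

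First I would establish two ``anchor'' facts about the execution of~$\plan^*$. One: $\proj{t_{p_i}}{v}\neq\varval$. For $i=0$ this is just $\proj{\init}{v}\neq\varval$; for $i\ge1$ we have $b_{p_i}=a_i$ with $\proj{\eff(a_i)}{v}\notin\{\undef,\varval\}$, hence $\proj{t_{p_i}}{v}=\proj{\eff(a_i)}{v}\neq\varval$. Two: $\proj{t_{p_j-1}}{v}=\varval$. If $j\le l$, validity of $b_{p_j}=a_j$ in $t_{p_j-1}$ together with $\proj{\pre(a_j)}{v}=\varval\neq\undef$ gives it; if $j=l+1$, then $p_j-1=N$ and the goal state $t_N$ satisfies $\proj{t_N}{v}=\proj{\goal}{v}=\varval$. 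Since $i<j$ we have $p_i\le p_j-1$, and the two anchor facts sharpen this to $p_i<p_j-1$.

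Next I would locate the occurrence of~$a$ and the insertion position. As $\proj{t_{p_i}}{v}\neq\varval$ and $\proj{t_{p_j-1}}{v}=\varval$, let $r$ be the least index with $p_i<r\le p_j-1$ and $\proj{t_r}{v}=\varval$; then $\proj{t_{r-1}}{v}\neq\varval$ (either $r-1=p_i$, or by minimality of~$r$). So $b_r$ changes~$v$ to~$\varval$, whence $\proj{\eff(b_r)}{v}=\varval$, and post-uniqueness (restriction~P) identifies $b_r$ with the unique action~$a$ having $\proj{\eff(a)}{v}=\varval$; this proves part~1. For part~2 it remains to place $r$ strictly inside some gap between consecutive embedding positions. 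We have $p_i<r<p_j$, so $r\neq p_t$ for $t\le i$ and $t\ge j$. If $r=p_{m'}$ for some $i<m'\le j-1$, then $a=b_r=a_{m'}$, so $\proj{\eff(a_{m'})}{v}=\varval$ and hence $\proj{s_{m'}}{v}=\varval$, contradicting~(b). Thus $r\neq p_t$ for every~$t$, so there is a unique~$m$ with $p_{m-1}<r<p_m$, and $p_i<r<p_j$ forces $i+1\le m\le j$. Now $\ins(m,a,\plan)=\seq{a_1,\dots,a_{m-1},a,a_m,\dots,a_l}$ embeds into $\plan^*$ along the strictly increasing positions $p_1<\dots<p_{m-1}<r<p_m<\dots<p_l$, so it is a subsequence of~$\plan^*$, proving part~2.

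The one genuinely delicate step is ruling out $r=p_{m'}$ for an embedding index $m'$ strictly between $i$ and~$j$: this is exactly where the contiguity built into the notion of being required between $i$ and~$j$ (that $\proj{s_{i'}}{v}\neq\varval$ holds throughout $i\le i'\le j-1$) is used, since otherwise there might be no open gap into which to insert~$a$. Everything else is straightforward bookkeeping about a single variable along the two executions, together with one application of post-uniqueness.
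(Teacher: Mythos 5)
Your proof is correct and follows the same route as the paper's: the requirement forces some action of $\plan^*$ to set $v$ to $\varval$ between the embedded positions corresponding to $i$ and $j$, post-uniqueness identifies that action as $a$, and its occurrence lies in a gap of the embedding, so $\ins(m,a,\plan)$ is a subsequence of $\plan^*$ for some $i\leq m\leq j$. The paper's own argument is a two-sentence sketch; yours supplies the bookkeeping it omits (the anchor facts $\proj{t_{p_i}}{v}\neq\varval$ and $\proj{t_{p_j-1}}{v}=\varval$, and ruling out that the located occurrence coincides with an embedding position $p_{m'}$ for $i<m'<j$), which is a careful elaboration rather than a different approach.
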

\begin{proof}
  If $j\leq l$, then clearly without an
  action $a$ that sets $v$ to $\varval$ we cannot meet the
  precondition of the $j$-th action of $\plan$. Similarly, if $j=l+1$,
  then without an action $a$ that
  sets $v$ to $\varval$ we cannot reach the goal state.  Since $\iplan$
  is post-unique, there is at most one such  action $a$, hence the
  first statement of the claim follows. We further observe that
  $\plan^*$ can be obtained from $\plan$ by inserting actions, and one
  of these inserted actions must be $a$, hence the second statement of
  the claim follows.
\end{proof}
The above considerations suggest that we can find a plan by starting
with the empty sequence, and as long as there is a required action,
guessing its position and insert it into the sequence. Next we describe
an algorithm that follows this general idea. It constructs a search
tree, where every node of the tree is labeled with a sequence of
actions. Each leaf of the tree is marked either as a ``success node''
if its label is a plan of length at most $k$, or as a ``failure node'' if
its label is not a subsequence of a plan of length at most $k$.  The
algorithm not only decides whether there exists a plan of length at most
$k$, but it even lists all minimal plans of length at most $k$ (a plan
is called \emph{minimal} if none of its proper subsequences is a plan).

We start with a trivial tree consisting of just a root, labeled with the
empty sequence, and recursively extend this tree. Assume that $T$ is the
search tree constructed so far. Consider a leaf $n$ of $T$ labeled with
a sequence $\plan$ of length $l\leq k$. If $\plan$ is a plan we can
mark $n$ as a success node, and we do not need to extend the search tree
below $n$.  Otherwise, by Claim~\ref{claim:fpt1}, there is some
$\tuple{v,\varval}\in \vars\times \dom$ and some $1\leq i<j \leq l+1$ such
that $\tuple{v,\varval}$ is required in $\plan$ between positions $i$ and $j$
(clearly we can find $v,\varval,i,j$ in polynomial time). Hence, some
action needs to be added to make $\plan$ a plan.  If $l=k$ or if
$\acts$ does not contain an action which sets $v$ to $\varval$, we know
that $\plan$ is not a subsequence of any plan of length at most~$k$,
and we can mark $n$ as a failure node. We do not need to extend the
search tree below $n$.  It remains to consider the case where $l<k$ and
$\acts$ contains an action $a$ which sets $v$ to $\varval$. Since $\iplan$
is post-unique, there is exactly one such~$a$.  We need to insert $a$
into $\plan$ between the positions $i$ and $j$, but we don't know
where. However, there are only $j-i+1\leq k$ possibilities. Therefore we
add below $n$ a child $n_m$ for each possibility $m\in \{i,\dots,j\}$,
and we label $n_m$ with the sequence
$\ins(m,a,\plan)$. Eventually we arrive at a search tree $T$
where all its leaves are marked either as success or failure nodes. The
depth of $T$ is at most $k$, since each node of $T$ of depth $d$ is
labeled with a sequence of length $d$, and we do not add nodes with
sequences of length greater than $k$. Each node has at most $k$
children. Hence $T$ has $O(k^k)$ many nodes. As the time required for
each node is polynomial, building the search tree is fixed-parameter
tractable for parameter $k$.

\begin{claim}\label{claim:fpt3}
  Let $\plan$ be a minimal plan of length $l\leq k$. Then for each  $0
  \leq d \leq l$ the tree $T$ has a node $n_d$ of
  depth $d$ such that the label of $n_d$ is a subsequence of $\plan$. 
\end{claim}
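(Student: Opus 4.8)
The plan is to prove Claim~\ref{claim:fpt3} by induction on $d$, following down the search tree $T$ the branch that stays a subsequence of the given minimal plan $\plan$. For $d=0$ I would simply take $n_0$ to be the root of $T$: its label is the empty sequence, which is a subsequence of every sequence, in particular of $\plan$.

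For the inductive step, assume $0 \le d < l$ and that $T$ has a node $n_d$ of depth $d$ whose label $\plan_d$ is a subsequence of $\plan$. The first thing to check is that the algorithm does not stop at $n_d$, so that $n_d$ actually has children. Since $\Card{\plan_d}=d<l=\Card{\plan}$, the sequence $\plan_d$ is a \emph{proper} subsequence of $\plan$; by minimality of $\plan$ it is therefore not a plan, so $n_d$ is not a success node. As $\plan_d$ is not a plan, the algorithm---invoking Claim~\ref{claim:fpt1}---picks some pair $\tuple{v,\varval}\in\vars\times\dom$ and positions $i<j$ such that $\tuple{v,\varval}$ is required in $\plan_d$ between positions $i$ and $j$. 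Applying Claim~\ref{claim:fpt2} to \emph{this} pair, with $\plan_d$ in the role of $\plan$ and the minimal plan $\plan$ in the role of $\plan^{*}$ (legitimate since $\plan_d$ is a subsequence of $\plan$), we would obtain that $\plan$ contains the unique action $a$ with $\proj{\eff(a)}{v}=\varval$. In particular $\acts$ contains such an $a$, and moreover $d<l\le k$ gives $\Card{\plan_d}<k$, so neither of the two conditions that would make $n_d$ a failure node applies.

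Consequently the algorithm adds below $n_d$ one child $n_m$ labeled $\ins(m,a,\plan_d)$ for every $m\in\{i,\dots,j\}$. By the second part of Claim~\ref{claim:fpt2} there is some $m$ with $i\le m\le j$ for which $\ins(m,a,\plan_d)$ is a subsequence of $\plan$; choosing $n_{d+1}:=n_m$ for this $m$ yields a node of depth $d+1$ whose label is a subsequence of $\plan$, which closes the induction and proves the claim.

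The only real subtlety, and the step I would be most careful about, is ruling out that $n_d$ is a failure node: this needs both the length bound $\Card{\plan_d}=d<l\le k$ and the fact that the required action $a$ genuinely belongs to $\acts$, the latter being precisely the content of Claim~\ref{claim:fpt2}(1) once we know that $a$ occurs in the plan $\plan$. Everything else is routine manipulation of the subsequence relation and of the rules by which $T$ is built.
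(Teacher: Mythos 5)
Your proof is correct and follows essentially the same route as the paper's: induction on the depth $d$, using minimality of $\plan$ to rule out a success node, Claim~\ref{claim:fpt2} to rule out a failure node and to locate a child whose label is again a subsequence of $\plan$. The only (welcome) difference is that you spell out explicitly why neither failure condition can trigger, where the paper simply appeals to the fact that the label is a subsequence of a short plan.
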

\begin{proof}
  We show the claim by induction on $d$.  The claim is evidently true for
  $d=0$. Let $d>0$ and assume the claim holds for $d-1$. Consequently,
  there is a tree node $n_{d-1}$ at depth $d-1$ which is labeled with a
  subsequence $\plan'$ of $\plan$. Since $\plan'$ is a proper
  subsequence of~$\plan$, and since $\plan$ is assumed to be a minimal
  plan, $\plan'$ is not a plan; thus $n_{d-1}$ is not a success
  node. Since $\plan'$ is a subsequence of~$\plan$, it is not a failure
  node either. Hence $n_{d-1}$ must have children. Consequently there is some
  pair $\tuple{v,\varval}\in \vars \times \dom$ which is required in $\plan'$
  between some positions $0\leq i < j \leq d$, and $n_{d-1}$ has $j-i+1$
  children, each labeled with a sequence $\ins(m,a,\plan')$,
  where $a$ is the unique action from $\acts$ that sets $v$ to~$\varval$.
  By Claim~\ref{claim:fpt2}, at least one of the children of $n_{d-1}$ is
  labeled with a subsequence of $\plan$, hence the induction step holds
  true, and Claim~\ref{claim:fpt3} follows.
\end{proof}

Claim~\ref{claim:fpt3} entails as the special case $d=l$ that $\plan$
appears as the label of a success node. We conclude that each minimal
plan of $\iplan$ of length at most $k$ appears as the label of some success
node of $T$.
Hence, once we have constructed the search tree $T$, we can list all
minimal plans of length at most $k$. In particular, we can decide
whether there exists a plan of length at most $k$ and
Theorem~\ref{the:Pfpt} follows.

In is interesting to note that the same result can be obtained by a
slight adaption of the standard partial-ordering planning algorithm by
McAllester and Rosenblitt~\cite{McAllesterRosenblitt91}.
This suggests that many successful applications of planning
might be cases where the problem is ``almost tractable'' and
the algorithm used happens to implicitly exploit this.
The details of how to modify this algorithm to obtain an FPT
algorithm for \BPER{P} can be found in 
one of our previous papers~\cite{BackstromChenJonssonOrdyniakSzeider12}.

\subsection{$(0,2)$-\BPE{}}\label{sec:res-fpt}

Before we show that $(0,2)$-\BPE{} is fixed-parameter tractable we
need to introduce some notions and prove some simple properties of
$(0,2)$-\BPE{}.
Let $\iplan=\tuple{\vars,\dom,\acts,\init,\goal}$ be an instance of \BPE{}.
We say an action
$a \in \acts$ has an effect on some variable $v \in \vars$ if
$\proj{\eff(a)}{v}\neq \undv$. We call this effect \emph{good} if
furthermore $\proj{\eff(a)}{v}=\proj{\goal}{v}$ or
$\proj{\goal}{v}=\undv$ and we call the effect \emph{bad}
otherwise. We say an action $a \in \acts$ is \emph{good} if it has
only good effects, \emph{bad} if it has only bad effects, and
\emph{mixed} if it has at least one good and at least one bad effect.
Note that if a valid plan contains a bad action then this
action can always be removed without changing the validity of the plan.
Consequently, we only need to
consider good and mixed actions.
Furthermore, we write $\Delta(\vars)$ to denote the set of
variables $v \in \vars$ such that $\proj{\goal}{v}\neq\undv$ and 
$\proj{\init}{v} \neq \proj{\goal}{v}$. 
 
The next lemma shows that we do not need to consider good actions with
more than~$1$ effect for $(0,2)$-\BPE{}. 
\begin{lemma}\label{lem:nodoublegoodactions}
There is a parameterized reduction from $(0,2)$-\BPE{} to
$(0,2)$-\BPE{} that maps an instance  $\insti=\tuple{\iplan,k}$ to an instance
  $\insti'=\tuple{\iplan',k'}$ where $k'=k(k+3)+1$ and
  no good action of $\insti'$ affects more than one variable.
  Furthermore, if $\iplan$ is binary, then also $\iplan'$ is binary.
\end{lemma}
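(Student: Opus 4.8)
\emph{Preprocessing.} The plan is to first clean up the instance and then expand each good action with two effects into a gadget of one-effect actions. I would begin by deleting every bad action from $\acts$: since $(0,2)$-\BPE{} has no preconditions, a bad action merely sets goal-relevant variables to wrong values and can always be dropped from a plan without affecting validity, so this step is answer-preserving. Next I would delete from every action every effect on a variable $v$ with $\proj{\goal}{v}=\undv$ — such an effect is never needed, again because nothing ever reads a variable's value — and then delete any action that is left with no effect at all. After this cleanup every good effect sets its variable exactly to the (defined) goal value, and every good action carries one or two good effects.

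\emph{The gadget and the main obstacle.} For a good action $a$ with $\proj{\eff(a)}{v_1}=g_1$ and $\proj{\eff(a)}{v_2}=g_2$ I would introduce fresh Boolean ``flag'' variables whose initial and goal values coincide, and replace $a$ by a short chain of actions that threads these flags: the first action delivers $v_1{=}g_1$ while flipping the first flag to a non-goal value; each next action repairs the previous flag by a good effect while flipping the next flag; and the last action repairs the last flag and delivers $v_2{=}g_2$. Because there are no preconditions, a bad effect on a variable can only be cancelled by a later good effect on that same variable, so any valid plan that uses one action of the chain is forced to use the whole chain (in the given order), while using the whole chain is self-consistent and produces exactly $g_1$ on $v_1$ and $g_2$ on $v_2$. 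This design is the step I expect to be the crux: lacking preconditions, one cannot directly force the two ``halves'' of $a$ to be used together, so the chain has to be made long — of order $k$ — so that a partially used chain is just as expensive as a full one; carrying out this argument and the associated counting is exactly what pushes the budget up from the naive $2k$ to $k' = k(k+3)+1$ (with one extra dummy action used, as in the proof of Theorem~\ref{in-W2}, to make the plan length exactly $k'$).

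\emph{Correctness and closure properties.} For the forward direction I would take a plan for $\iplan$ of length at most $k$, discard its bad actions and useless effects as above, and substitute each good two-effect action by its chain inserted consecutively at the same position; since the chains of distinct actions use disjoint fresh flags, this is a plan for $\iplan'$, and as at most $k$ actions are replaced by chains of length at most $k+3$ its length is at most $k(k+3)+1 = k'$. For the reverse direction I would take a plan for $\iplan'$ of length at most $k'$, assume it minimal, and note that minimality together with the chain property forces every chain to be used either completely or not at all; contracting each completely-used chain back to the corresponding good two-effect action $a$ (and deleting any left-over one-effect good action) yields a plan for $\iplan$, where I would use the elementary fact that a good effect — setting a variable to its defined goal value — can never invalidate a plan, so re-introducing the ``other'' effect of $a$ is harmless, and a count of the completely-used chains bounds the resulting length by $k$. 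Finally, all fresh variables are Boolean and every new action is preconditionless with at most two effects, so $\iplan'$ is a legitimate $(0,2)$-\BPE{} instance, binary whenever $\iplan$ is, and no good action of $\iplan'$ affects more than one variable.
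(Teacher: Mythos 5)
Your general strategy (replace an action by a chain of fresh binary ``flag'' variables so that using part of the chain forces using more of it, making each original action cost $\Theta(k)$ in the new instance) is the same idea as the paper's, but the execution has two fatal flaws. First, your chain is oriented the wrong way. Your last action ``repairs the last flag and delivers $v_2{=}g_2$'': since the flag starts at its goal value, both of its effects are good, so (i) it is itself a good action with two effects, contradicting the very property the lemma is supposed to establish, and (ii) it has no bad effect, so it can be used \emph{alone}, delivering $v_2{=}g_2$ at cost $1$. More generally, using action $j$ of your chain only forces the suffix $j,\dots,m$, not the whole chain. This kills the reverse direction: a plan for $\iplan'$ can harvest one good effect per single ``last action'' from many different chains, and the contracted plan for $\iplan$ can have length up to $k'\gg k$. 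The paper avoids this by putting the \emph{delivery} actions at the top of the forcing chain: each action that delivers an original effect also carries a bad effect on a fresh variable, whose unique repairing action carries a bad effect on the next fresh variable, and so on down to a bad effect on a single global variable $g$ repaired only by the one extra action $a_g$ (this is where the $+1$ in $k'$ comes from --- it is not a dummy action in the sense of Theorem~\ref{in-W2}). With that orientation every chain action is mixed and using \emph{any} chain action forces at least $k+2$ of them.

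Second, you only pad the good two-effect actions. The counting argument that bounds the number of original actions used by $k'/(k+2)\le k$ requires that \emph{every} action of $\iplan$ --- mixed actions and one-effect good actions included --- be replaced by a chain of length at least $k+2$; otherwise a plan for $\iplan'$ of length $k'$ consisting of unpadded actions contracts to a plan for $\iplan$ of length up to $k'=k(k+3)+1$, not $k$. Your closing step, ``a count of the completely-used chains bounds the resulting length by $k$,'' silently assumes all actions live in chains, which your construction does not guarantee. (Your preprocessing --- deleting bad actions and effects on variables $v$ with $\proj{\goal}{v}=\undv$ --- is harmless and correct, and the forward direction and the binary-domain/at-most-two-effects bookkeeping are fine; the problems are confined to the gadget's orientation and its scope.)
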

\begin{proof}
  The required instance $\insti'$ is constructed from $\insti$ as follows. 
  $\vars'$ contains the following variables:
  \begin{itemize}
  \item All variables in $\vars$;
  \item One binary variable $g$;
  \item For every action $a \in \acts$ and every $1 \leq i \leq
    k+2$ one binary variable $v_i(a)$;
  \end{itemize}
  $\acts'$ contains the following actions:
  \begin{itemize}
  \item For every mixed action $a \in \acts$ that has a good effect on
    the variable $v$ and a bad effect on the variable $v'$,
    there is 
    \begin{itemize}
    \item 
      one action  $a_1(a)$ such that
    $\proj{\eff(a_1(a))}{v'}=\proj{\eff(a)}{v'}$ and \\
    $\proj{\eff(a_1(a))}{v_1(a)}=0$, 
    \item
      one action $a_i(a)$ for all $1 < i < k+3$ such that
    $\proj{\eff(a_i(a))}{v_{i-1}(a)}=1$ \\
      and $\proj{\eff(a_i(a))}{v_i(a)}=0$, as well as 
    \item
      one action $a_{k+3}(a)$ such
    that $\proj{\eff(a_{k+3}(a))}{v_{k+2}(a)}=1$ and \\
    $\proj{\eff(a_{k+3}(a))}{v}=\proj{\eff(a)}{v}$;
    \end{itemize}
  \item  For every good action $a \in \acts$ that has only one effect
    on the variable $v$, there is 
    \begin{itemize}
    \item 
      one action $a_1$ such that
    $\proj{\eff(a_1(a))}{g}=1$ and \\
    $\proj{\eff(a_1(a))}{v_1(a)}=0$, 
    \item
      one action $a_i(a)$ for all $1 < i < k+3$ such that
    $\proj{\eff(a_i(a))}{v_{i-1}(a)}=1$ \\ and 
    $\proj{\eff(a_i(a))}{v_i(a)}=0$, as well as 
    \item
      one action $a_{k+3}(a)$ such that 
    $\proj{\eff(a_{k+3}(a))}{v_{k+2}(a)}=1$ and \\
    $\proj{\eff(a_{k+3}(a))}{v}=\proj{\eff(a)}{v}$;
    \end{itemize}

  \item For every good action $a \in \acts$ that has two effects
    on the variables $v$ and $v'$, there is 
    \begin{itemize}
    \item 
    one action
    $a_1(a)$ such that
    $\proj{\eff(a_1(a))}{g}=1$ and
    $\proj{\eff(a_1(a))}{v_1(a)}=0$, 
    \item
      one action $a_i(a)$ for all $1 < i <  k+2$ such that
    $\proj{\eff(a_i(a))}{v_{i-1}(a)}=1$ \\ and 
    $\proj{\eff(a_i(a))}{v_i(a)}=0$, 
    \item
      one action $a_{k+2}(a)$ such
    that $\proj{\eff(a_{k+2}(a))}{v_{k+1}(a)}=1$ and \\
    $\proj{\eff(a_{k+2}(a))}{v}=\proj{\eff(a)}{v}$, as well as 
    \item
      one action
    $a_{k+3}(a)$ such
    that $\proj{\eff(a_{k+3}(a))}{v_{k+1}(a)}=1$ and \\
    $\proj{\eff(a_{k+3}(a))}{v'}=\proj{\eff(a)}{v'}$;
    \end{itemize}
  \item One action $a_g$ with $\proj{\eff(a_g)}{g}=0$.
  \end{itemize}
  We set $\dom'=\dom \cup \{0,1\}$, $\proj{\init'}{v}=\proj{\init}{v}$
  for every $v \in \vars$, $\proj{\init'}{v}=0$ for every $v
  \in \vars' \setminus \vars$, $\proj{\goal'}{v}=\proj{\goal}{v}$
  for every $v \in \vars$, $\proj{\goal'}{v}=0$ for every $v
  \in \vars' \setminus \vars$, and $k'=k(k+2)+1$.

  The main idea of the reduction is to replace every action $a$ in
  $\iplan$ with $k+3$ new actions that form a chain that has the
  same effect as $a$ if all its actions are included in the plan.
  (For an action with two good effects, it is possible to achieve
  only one of these by including $k+2$ actions).
  Clearly, $\insti'$ can be constructed from $\insti$ by an algorithm
  that is fixed-parameter tractable (with respect to $k$) and 
  $\insti'$ is an
  instance of $(0,2)$-\BPE{} where no good
  action affects more than $1$ variable. 
  Furthermore, all new variable domains introduced are binary so
  $\iplan'$ has the same maximum domain size as $\iplan$.
  It remains to show that
  $\insti'$ is equivalent to $\insti$.

  Suppose that $\plan=\seq{a_1,\dotsc,a_l}$ is a plan of length at
  most $k$ for $\iplan$. Then\\
  $\seq{a_{k+3}(a_1),\dotsc,a_{1}(a_1),\dotsc,
    a_{k+3}(a_l),\dotsc,a_{1}(a_l),a_g}$ is a plan of length $l(k+3)+1\leq
  k(k+3)+1$ for $\iplan'$.

  To see the reverse direction suppose that
  $\plan'=\seq{a_1,\dots,a_{l'}}$ is a minimal (with respect to
  sub sequences) plan of length at most $k'$ for $\iplan'$. We say that
  $\plan'$ \emph{uses} an action $a \in \acts$ if $a_i(a) \in \plan'$
  for some $1 \leq i \leq k+3$. We also define an order of the actions
  used by $\plan'$ in the natural way, i.e., for two actions $a,a' \in
  \acts$ that are used by $\plan'$ we say that $a$ is smaller than $a'$
  if the first occurrence of an action $a_i(a)$ (for some $1 \leq i
  \leq k+3$) in $\plan'$ 
  is before the first occurrence of an action
  $a_i(a')$ (for some $1 \leq i \leq k+3$) in $\plan'$.
  
  Let $\plan=\seq{a_1,\dotsc,a_{l}}$ be the (unique) sequence
  of actions in $\acts$ that are
  used by $\plan'$ whose order corresponds to the order in which there
  are used by $\plan'$. Clearly, $\plan$ is a plan for $\iplan$. 
  It remains to show that $l\leq k$ for which we need the following
  claim.
  \begin{claim}\label{APPENDIX-clm:many-or-no-actions}
    If $\plan'$ uses some action $a \in \acts$ then $\plan'$
    contains at least $k+2$ actions from $a_1(a),\dotsc,a_{k+3}(a)$.
  \end{claim}
  Let $i$ be the largest integer with $1 \leq i \leq
  k+3$ such that $a_i(a)$ occurs in $\plan'$. We first show by
  induction on $i$ that
  $\plan'$ contains all actions in $\SB a_j(a) \SM 1 \leq j \leq i
  \SE$. Clearly, if $i=1$ there is nothing to show, so assume that
  $i>1$. The induction step follows from the fact that the action $a_i(a)$ has a
  bad effect on the variable $v_{i-1}(a)$ and the action
  $a_{i-1}(a)$ is the only action of $\iplan'$ that has a good effect on
  $v_{i-1}(a)$ and hence $\plan'$ has to contain the action
  $a_{i-1}(a)$. It remains to show that $i \geq k+2$.
  Suppose for a contradiction that $i < k+2$ and consequently the
  action $a_{i+1}(a)$ is not contained in $\plan'$. Because the action
  $a_{i+1}(a)$ is the only action of $\iplan'$
  that has a bad effect on the variable $v_i(a)$ it follows that the
  variable $v_i(a)$ remains in the goal state over the whole execution
  of the plan $\plan'$. But then $\plan'$ without the action $a_i(v)$
  would still be a plan for $\iplan'$ contradicting our assumption
  that $\iplan'$ is minimal with respect to sub sequences. 

  It follows from Claim~\ref{APPENDIX-clm:many-or-no-actions} that $\plan'$ uses at most
  $\frac{l'}{k+2}\leq \frac{k'}{k+2}=\frac{k(k+3)+1}{k+2}<k+1$ actions
  from $\acts$. Hence, $l \leq k$ proving the lemma.
\end{proof}
 
\medskip

We are now ready to show that $(0,2)$-\BPE{} is fixed-parameter tractable.
\begin{theorem}\label{the:02fpt}
  $(0,2)$-\BPE{} is fixed-parameter tractable.
\end{theorem}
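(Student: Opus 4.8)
The strategy is to normalise the instance with Lemma~\ref{lem:nodoublegoodactions}, give a purely combinatorial description of its minimal plans, and recognise the resulting task as an instance of \textsc{Directed Steiner Tree} parameterised by the number of terminals. Concretely, I would first apply Lemma~\ref{lem:nodoublegoodactions} to pass to an equivalent instance $\insti'=\tuple{\iplan',k'}$ in which no good action affects more than one variable, where $k'=k(k+3)+1$ still depends only on $k$; then delete the actions that can never occur in a minimal plan (bad actions, good actions whose single effect is on a variable $v$ with $\proj{\goal'}{v}=\undv$, and mixed actions whose good effect is of that same useless kind). After this clean-up, every action of $\iplan'$ is either \emph{good}, with exactly one effect, which sets some goal-defined variable to its goal value, or \emph{mixed}, with two effects, setting one goal-defined variable to its goal value and a second goal-defined variable to a non-goal value. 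Since no action has a precondition, a sequence of actions is a plan for $\iplan'$ iff for every goal-defined variable $v$ either no action of the sequence touches $v$ and $\proj{\init'}{v}=\proj{\goal'}{v}$, or the last action of the sequence that touches $v$ sets $v$ to $\proj{\goal'}{v}$.

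The next step is the structural heart of the argument. I would show that a minimal plan $\plan$ has pairwise distinct actions and, crucially, that every action of $\plan$ is the \emph{unique last action of $\plan$ touching exactly one variable}, namely a goal-defined variable that it sets to its goal value (for a mixed action this must be its ``good'' variable, since its ``bad'' variable has to be overwritten later); any action not of this form could be deleted, contradicting minimality. Hence a minimal plan of length $\ell$ is the same datum as: a set $W$ of $\ell$ goal-defined variables with $\Delta(\vars')\subseteq W$, and for each $v\in W$ a choice $f(v)$ of an action that is good for $v$, subject to (i)~$f$ is injective, (ii)~if $f(v)$ is mixed and spoils $u$ then $u\in W$, and (iii)~the relation $v\mapsto u$ given by ``$f(v)$ is mixed and spoils $u$'' is acyclic on $W$; condition (iii) is exactly what guarantees the chosen actions admit a valid linearisation (a topological order of this relation), and whenever $v$ has a good single-effect action one may take $f(v)$ to be one.

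For the final step, build the digraph $D$ whose vertices are the variables together with one fresh vertex $t^{*}$, with an arc $v\to t^{*}$ whenever $v$ has a good single-effect action and an arc $v\to u$ whenever some mixed action is good for $v$ and spoils $u$. By the previous paragraph, $\iplan'$ has a plan of length at most $k'$ if and only if $D$ contains an in-arborescence rooted at $t^{*}$ that includes every terminal in $\Delta(\vars')$ and has at most $k'+1$ vertices (a minimum-cardinality vertex set from which all terminals reach $t^{*}$ is forced to be exactly such an arborescence, its non-root vertices having out-degree one, which is precisely the object above). Since $\Card{\Delta(\vars')}\le k'$, this is a \textsc{Directed Steiner Tree} instance with at most $k'$ terminals, and a Dreyfus--Wagner-style dynamic programme over subsets of the terminal set — which works for directed graphs and when the objective counts vertices rather than arcs — solves it in time $O^{*}(3^{\Card{\Delta(\vars')}})$, hence in $f(k)\cdot n^{O(1)}$; one can inline this dynamic programme to keep the proof self-contained. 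This shows that $(0,2)$-\BPE{} is fixed-parameter tractable.

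The main obstacle is the structural characterisation of minimal plans: proving that every action of a minimal plan is the unique last toucher of exactly one goal-defined variable, that acyclicity of the spoil-relation is equivalent to the existence of a valid ordering of the chosen actions, and that forcing a variable with a good single-effect action to use it loses no generality while still covering $\Delta(\vars')$ — and then verifying that $D$ (its arc directions, the auxiliary root $t^{*}$, and the vertex-counting objective) encodes this object faithfully in both directions, including the corner case of terminals that already possess a good single-effect action.
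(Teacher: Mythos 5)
Your proposal is correct and follows essentially the same route as the paper's proof: normalize via Lemma~\ref{lem:nodoublegoodactions} so that good actions have a single effect, then reduce to \textsc{Directed Steiner Tree} parameterized by a function of $k$ and solve it with a Dreyfus--Wagner-style algorithm. The only differences are presentational --- you orient the arcs towards a sink $t^{*}$ (an in-arborescence) where the paper uses a source $s$, you count vertices instead of unit arc weights, and you make explicit the characterization of minimal plans (injective last-toucher assignment with an acyclic spoil relation) that the paper's proof of Claim~\ref{claim:dst} establishes implicitly.
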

\begin{proof}
  We show fixed-parameter tractability of $(0,2)$-\BPE{} by reducing it
  to the following fixed-parameter tractable problem. The problem has
  originally been shown to be fixed-parameter tractable for undirected
  graphs~\cite{DreyfusWagner71}. Later Guo, Niedermeier, and Suchy mentioned that
  this result can be directly transferred to the directed case~\cite{GuoNiedermeierSuchy11}.
  \smallskip
  \begin{quote}
    \noindent
    \textsc{Directed Steiner Tree}\\
    \noindent  
    \emph{Instance:} A set of nodes $N$, 
    a weight function $w\ :\  N \times N \rightarrow (\mathbb{N} \cup
    \{\infty\})$, 
    a root node $s \in N$,
    a set $T \subseteq N$ of terminals , and a weight bound $p$.\\
    \noindent  
    \emph{Parameter:} $p_M=\frac{p}{\min\SB w(u,v) \SM u,v \in N\SE}$.\\  
    \noindent
    \emph{Question:} Is there a set of arcs $E \subseteq N \times N$ of
    weight $w(E) \leq p$ (where $w(E)=\sum_{e \in E}w(e)$) such that in
    the digraph $D=\tuple{N,E}$ for every $t \in T$ there is a directed path
    from $s$ to $t$? We will call the digraph $D$ a \emph{directed
      Steiner Tree (DST)} of weight $w(E)$.
  \end{quote}
  \smallskip Let $\insti=\tuple{\iplan,k}$ where
  $\iplan=\tuple{\vars,\dom,\acts,\init,\goal}$ be an instance of
  $(0,2)$-\BPE{}. Because of Lemma~\ref{lem:nodoublegoodactions} we can
  assume that $\acts$ contains no good actions with two effects.  We
  construct an instance $\insti'=\tuple{N,w,s,T,p}$ of \textsc{Directed
    Steiner Tree} where $p_M=k$ such that 
  $\iplan$ has a plan of length at most $k$ if and only if
  $\insti'$ has a directed Steiner tree of weight at most $p$.
  Because $p_M=k$ this shows that $(0,2)$-\BPE{} is fixed-parameter tractable.

  We are now ready to define
  the instance $\insti'$. 
  The node set $N$ 
  consists of the root vertex $s$
  and one node for every variable in $\vars$. The weight function
  $w$ is $\infty$ for all but the following arcs:
  \begin{itemize}
  \item[(i)] For every good action $a \in \acts$ the arc from $s$ to the
    unique variable $v \in \vars$ that is affected by $a$ gets weight~$1$.
  \item[(ii)] For every mixed action $a \in \acts$ with a good effect on
    some variable $v_g \in \vars$ and a bad effect on some variable
    $v_b \in \vars$, the arc from $v_b$ to $v_g$ gets weight~$1$.
  \end{itemize}
  We identify the root $s$ from the instance $\insti$ 
  with the node $s$, we let $T$ be the set $\Delta(\vars)$, and
  $p_M=p=k$.
  \begin{claim}\label{claim:dst}
    $\iplan$ has a plan of length at most $k$ if and only if $\insti'$
    has a DST of weight at most $p_M=p=k$.
  \end{claim}
  Suppose $\iplan$ has a plan $\plan=\seq{a_1,\dots,a_l}$ with $l
  \leq k$. Without losing generality we can assume that $\plan$ contains 
  no bad
  actions.
  The arc set $E$ that corresponds to $\plan$ consists of
  the following arcs:
  \begin{itemize}
  \item[(i)] For every good action $a \in \plan$ that has its unique good
    effect on a variable $v \in \vars$, the set $E$ contains the arc
    from $s$ to $v$.
  \item[(ii)] For every mixed action $a \in \plan$ with a good effect on some
    variable $v_g$ and a bad effect on some variable $v_b$, the set
    $E$ contains an arc from $v_b$ to $v_g$.
  \end{itemize}
Intuitively, for case (ii), note that $a$ has a bad effect on $v_b$,
i.e. it sets $v_b$ to a different value than its goal value, 
so $a$ must be followed by some sequence $a_1,\dots,a_n$ of actions
where only the last one is good and the others are mixed.
This will provide a path in the DST from the root to $v_b$.

  More formally, it follows that the weight of $E$ equals the number of
  actions in $\plan$ and hence is at most $p=k$ as
  required. It remains to show that the digraph $D=(V,E)$ is a
  DST, i.e., $D$ contains a directed path from the vertex $s$ to
  every vertex in $T$. 
  Suppose to the contrary that there is a terminal
  $t \in T$ that is not reachable from $s$ in~$D$. Furthermore, let $R
  \subseteq E$ be the set of all arcs in $E$ such that $D$ contains a
  directed path from the tail of every arc in $R$ to $t$. It follows
  that no arc in $R$ is incident to $s$. Hence, $R$ only consists of
  arcs that correspond to mixed actions in $\plan$. If $R=\emptyset$
  then the plan $\plan$ does not contain an action that affects the
  variable $t$. But this contradicts our assumption that $\plan$ is a
  plan (because $t \in \Delta(\vars)$). Hence, $R \neq \emptyset$. 
  Let $a$ be
  the mixed action corresponding to the arc in $R$ that occurs last in
  $\plan$ (among all mixed actions that correspond to an arc in
  $R$). Furthermore, let $v \in \vars$ be the variable that is badly
  affected by $a$. Then $\plan$ cannot be a plan because after the
  occurrence of $a$ in $\plan$ there is no action in $\plan$ that
  affects $v$ and hence $v$ cannot be in the goal state after $\plan$
  is executed.

  To see the reverse direction, let $E \subseteq N \times N$ be a
  solution of $\insti$ and let $D=(N,E)$ be the {DST}.  
  Without losing generality we can
  assume that $D$ is a directed acyclic tree rooted in $s$ (this follows
  from the minimality of $D$). We obtain a plan $\plan$ of length at
  most $p$ for $\iplan$ by traversing the DST $D$ in a bottom-up
  manner. More formally, let $d$ be the maximum distance from $s$ to any
  node in $T$, and for every $1 \leq i < d$ let $A(i)$ be the set of
  actions in $\acts$ that correspond to arcs in $E$ whose tail is at
  distance $i$ from the node $s$. Then $\plan=\seq{A(d-1),\dots,A(1)}$
  (for every $1 \leq i \leq d-1$ the actions contained in $A(d-1)$ can
  be executed in an arbitrary order) is a plan of length at most $k=p$
  for $\iplan$.
\end{proof}

\section{Kernel Lower Bounds}
\noindent
\sloppypar In the previous sections
we have classified the parameterized complexity of \BPE{}. 
It turned out that the problems fall into four
categories  (see Figure~\ref{fig:pubs-lattice}):
\begin{itemize}
\item[(i)] polynomial-time solvable, 
\item[(ii)] NP-hard but fixed-parameter tractable, 
\item[(iii)] $\W{1}$\hy complete, and
\item[(iv)] $\W{2}$\hy complete.
\end{itemize}
The aim
of this section is to further refine this classification with respect to
kernelization. The problems in category~(i) trivially admit a kernel of
constant size, whereas the problems in categories~(iii) and (iv) do not admit a
bi-kernel at all (polynomial or not), unless $\W{1}=\FPT$ or $\W{2}=\FPT$,
respectively. Hence it remains to consider the problems in
category~(ii), each of them could either admit a polynomial bi-kernel or
not. We show that none of them does.

\subsection{Kernel Lower Bounds for PUBS Restrictions}
 
According to our classification so far, the only problems in
category~(ii) with respect to the PUBS-restrictions are the problems
$R$-\BPE{}, for $R \subseteq \{P,U,B,S\}$ such that $P\in R$ and
$\{P,U,S\} \not\subseteq R$.
\begin{theorem}\label{thm:nopolykernel-p}
  None of the problems $R$-\BPE{} for $R \subseteq \{P,U,B,S\}$ such
  that $P\in R$ and $\{P,U,S\} \not\subseteq R$ (i.e., the problems in
  category~(ii)) admits a polynomial bi-kernel unless $\NOPOLYKERNEL$.
\end{theorem}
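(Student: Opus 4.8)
The plan is to invoke the no\hy polynomial\hy kernel machinery from Section~2. For each restriction $R$ in category~(ii) --- that is, $R\in\{\{P\},\{P,U\},\{P,B\},\{P,S\},\{P,U,B\},\{P,B,S\}\}$ --- the unparameterized version of $R$-\BPE{} is \NP-hard (this is part of the classification summarized in Figure~\ref{fig:pubs-lattice}, and is precisely why these problems lie in category~(ii)). Hence, by Proposition~\ref{pro:strong-or-comp-no-poly-kernel}, it suffices to exhibit, for every such $R$, a strong OR\hy composition algorithm for $R$-\BPE: given instances $\tuple{\iplan_1,k_1},\dots,\tuple{\iplan_t,k_t}$ of $R$-\BPE, build in time polynomial in $\sum_i\Card{\iplan_i}+\max_i k_i$ a single instance $\tuple{\iplan,k'}$ of $R$-\BPE{} with $k'$ polynomially (in fact linearly) bounded in $\max_i k_i$, such that $\iplan$ has a plan of length $\le k'$ iff some $\iplan_i$ has a plan of length $\le k_i$. (If \NP-membership of the relevant $R$-\BPE{} problems is available, one may equally use the non\hy strong Proposition~\ref{pro:or-comp-no-poly-kernel} with an ordinary OR\hy composition, i.e.\ with all $k_i$ equal.)

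For the construction I would place $t$ variable\hy disjoint copies of $\iplan_1,\dots,\iplan_t$ side by side and, whenever $R$ does not already force binary domains, also make the copies value\hy disjoint, so that no two actions from different copies ever write the same variable\hy value pair and post\hy uniqueness is preserved. On top of this I would add a \emph{selection} gadget using only a constant number of extra actions and of extra plan steps --- e.g.\ a single ``choice'' variable whose unique setter $\mathrm{choose}_i$ simultaneously unlocks exactly the actions of copy~$i$ --- which forces every plan of length $\le k'$ to commit to exactly one copy~$i$; a short copy\hy$i$\hy specific chain of $\max_j k_j-k_i$ dummy steps that this gadget must traverse before it can fire, so that a length\hy$\le k'$ plan committing to copy~$i$ exists exactly when $\iplan_i$ admits a plan of length $\le k_i$; and a \emph{completion} gadget that turns the $t$\hy way disjunction ``copy~$i$ reached its own goal'' into the single conjunctive goal of $\iplan$. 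One then sets $k'=\max_j k_j+O(1)$, and correctness reduces to the two routine implications (prepend $\mathrm{choose}_i$, run $\iplan_i$'s plan, traverse the dummy chain and fire the completion gadget; conversely, any short plan for $\iplan$ fires the completion gadget, hence must have committed to some~$i$ and reached $\iplan_i$'s goal within $k_i$ steps). The theorem then follows from Proposition~\ref{pro:strong-or-comp-no-poly-kernel}.

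The main obstacle is instantiating the selection and completion gadgets so as to respect all the PUBS restrictions simultaneously. Post\hy uniqueness is the binding constraint: one cannot funnel the $t$ ``copy solved'' events into a single Boolean success variable, since its goal value would then be produced by $t$ distinct actions, so the $t$\hy way disjunction has to be carried by the plan's freedom to choose which actions it contains while the extra plan length stays polynomial in $\max_i k_i$ and independent of~$t$ --- this rules out any selector that ``walks past'' all copies. When $S\in R$ a single selector variable read by every copy is itself illegal (no two actions may carry differing preconditions on a variable none of them writes), and when $B\in R$ a large\hy domain choice variable is unavailable, so these subcases require their own, more delicate gadgets; a small case distinction over $R$ (a unary variant, a single\hy valued variant, a binary variant, and the unrestricted\hy domain variant) is therefore to be expected. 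This gadget design --- getting something at once correct, post\hy unique (and, where required, unary/binary/single\hy valued), and with only polynomially many extra plan steps --- is where essentially all the work lies; the remaining verification is bookkeeping.
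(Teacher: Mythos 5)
Your high-level framework is the right one (\NP-hardness of the unparameterized problems plus Proposition~\ref{pro:strong-or-comp-no-poly-kernel}, hence it suffices to exhibit a strong OR\hy composition), and you correctly identify post-uniqueness as the obstruction: under P the goal value of any ``success'' variable has a unique producing action, so the $t$-way disjunction cannot be funnelled into one variable by $t$ parallel setters. But you stop exactly there --- the selection/completion gadget, which you yourself call ``where essentially all the work lies,'' is never constructed, and the design constraints you impose on it (a constant number of extra plan steps, overhead independent of $t$, $k'=\max_j k_j+O(1)$) make it essentially unbuildable for post-unique instances: any post-unique OR of $t$ signals appears to need the disjunction to propagate through a chain or tree of uniquely produced values, costing plan length that grows with $t$. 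The paper's key idea, absent from your proposal, is that the overhead need \emph{not} be independent of $t$: one first disposes of the case $t>k^k$ by simply running the FPT algorithm of Theorem~\ref{the:Pfpt} on all $t$ instances (in that regime this already constitutes a strong OR\hy composition), and in the remaining case $t\le k^k$ one has $\lceil\log t\rceil\le \lceil k\log k\rceil$, so a depth-$\lceil\log t\rceil$ binary tree of explicit $\mathrm{OR}_2$ gadgets (each a post-unique, unary, binary construction of seven actions contributing six plan steps) yields $k'=k+1+6\lceil\log t\rceil$, still polynomial in $k$. Without this $t\le S(k)$ dichotomy your composition has no way to realize the disjunction under P.

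A second, smaller omission: you propose a case distinction over all six restrictions $R$, each with its own gadget. The paper avoids this entirely via Proposition~\ref{pro:poly-par-reduction-bi}: since $R_1\subseteq R_2$ gives a trivial (identity) polynomial parameter reduction from $R_2$-\BPE{} to $R_1$-\BPE{}, a polynomial bi-kernel for any problem in category~(ii) would yield one for $\{P,U,B\}$-\BPE{} or $\{P,B,S\}$-\BPE{}, and Proposition~\ref{pro:from-pb-to-pbs} further collapses these two to the single problem $\{P,U,B\}$-\BPE{}. So only one gadget, respecting P, U and B simultaneously, ever needs to be designed, and the single-valued case comes for free. To repair your argument, restructure it around these reductions and then supply the explicit $\mathrm{OR}_2$ tree together with the per-instance padding chains that equalize the parameters $k_i$ to $k$, as in Lemma~\ref{lem:nopolykernel-pub}.
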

The remainder of this section is devoted to
establish the above theorem.  The relationships between
the problems as indicated in Figure~\ref{fig:pubs-lattice} greatly
simplify the proof.  Instead of considering all six problems
separately, we can focus on the two most restricted problems
$\{P,U,B\}$-\BPE{} and $\{P,B,S\}$-\BPE{}.  If any other problem in
category~(ii) would have a polynomial bi-kernel, then at least one of these
two problems would have one. This follows by
Proposition~\ref{pro:poly-par-reduction-bi} and the following facts:
\begin{enumerate}
\item The unparameterized versions of all the problems in category~(ii)
  are \NP-hard. This holds since  the corresponding classical
  problems are strongly \NP-hard, i.e., the problems remain \NP-hard
  when $k$ is encoded in unary (as shown by B\"{a}ckstr\"{o}m and Nebel
  \cite{BackstromNebel95});
\item If $R_1\subseteq R_2$ then the identity function gives a
  polynomial parameter reduction from $R_2$-\BPE{} to $R_1$-\BPE{}.
\end{enumerate}
Furthermore, the following result of B\"{a}ckstr\"{o}m and Nebel
even provides a polynomial
parameter reduction from $\{P,U,B\}$-\BPE{} to
$\{P,B,S\}$-\BPE{}. Consequently, $\{P,U,B\}$-\BPE{} remains the only
problem for which we need to establish a super-polynomial bi-kernel lower
bound.
\begin{proposition}[\mbox{B\"{a}ckstr\"{o}m and Nebel 
\cite[Theorem 4.16]{BackstromNebel95}}] \label{pro:from-pb-to-pbs}
  Let $\insti=\tuple{\iplan,k}$ be an instance of
  $\{P,U,B\}$-\BPE{}. Then $\insti$ can be transformed in polynomial
  time into an equivalent instance $\insti'=\tuple{\iplan',k'}$ of
  $\{P,B,S\}$-\BPE{} such that $k=k'$.
\end{proposition}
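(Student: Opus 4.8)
The plan is to exhibit a reduction that is computable in polynomial time and leaves the bound unchanged ($k'=k$), which is then in particular a polynomial parameter reduction. The underlying idea is a ``value-splitting'' of the variables. Since $\iplan$ is binary we may assume $\dom=\{0,1\}$, and we replace every $w\in\vars$ by two Boolean variables $w^0$ and $w^1$, maintaining the invariant that exactly one of them equals~$1$, with the reading ``$w^x=1$ iff $w$ currently has value~$x$''. An action $a$ of $\iplan$ that sets its unique affected variable $v$ to value~$x$ becomes an action $a'$ with $\eff(a')[v^x]=1$ and $\eff(a')[v^{1-x}]=0$, and each precondition $\pre(a)[w]=c$ of $a$ is translated into the single precondition $\pre(a')[w^c]=1$. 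The essential point of this translation is that it produces only preconditions requiring the value~$1$, never~$0$.

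Concretely I would set $\iplan'=\tuple{\vars',\dom',\acts',\init',\goal'}$ with $\vars'=\SB w^0,w^1 \SM w\in\vars \SE$, $\dom'=\{0,1\}$, $\acts'=\SB a' \SM a\in\acts \SE$ as above, $\init'[w^x]=1$ iff $\init[w]=x$, $\goal'[w^c]=1$ whenever $\goal[w]=c\neq\undv$ (and $\goal'$ left undefined on $w^0,w^1$ when $\goal[w]=\undv$; by the invariant this single goal literal already pins down $w^{1-c}$), and $k'=k$. This is computable in polynomial time and $\card{\dom'}=2$, so restriction~B holds. For post-uniqueness, observe that the effects of $a'$ lie entirely inside $\{v^0,v^1\}$, hence $w^x$ is set to~$1$ only by $a'$ for the action $a$ setting $w$ to~$x$ (unique by post-uniqueness of $\iplan$), and set to~$0$ only by $a'$ for the unique action setting $w$ to~$1-x$; so~P holds. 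For single-valuedness, suppose $a',b'\in\acts'$ both have a precondition on some split variable $w^x$ while neither has an effect on $w^x$; ``no effect on $w^x$'' forces that neither $a$ nor $b$ affects $w$ in $\iplan$, so both preconditions stem from prevail conditions on $w$ (preconditions on $w$ not accompanied by an effect on $w$), and by construction both require the value~$1$ on $w^x$ --- so they agree and~S holds.

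It remains to prove $\insti$ and $\insti'$ equivalent, which I would do via the state correspondence $s\mapsto s'$ given by $s'[w^x]=1$ iff $s[w]=x$. A straightforward induction along a plan shows: $a$ is valid in $s$ iff $a'$ is valid in $s'$ (both amount to $s[w]=c$ for every $w$ with $\pre(a)[w]=c\neq\undv$); if $t$ is the result of $a$ in $s$ and $t'$ the result of $a'$ in $s'$, then $t\mapsto t'$; $\init\mapsto\init'$; and $s$ is a goal state of $\iplan$ iff $s'$ is a goal state of $\iplan'$. Hence $\seq{a_1,\dots,a_l}$ is a plan for $\iplan$ iff $\seq{a_1',\dots,a_l'}$ is a plan for $\iplan'$, and since corresponding plans have equal length and $k'=k$, the two instances are equivalent.

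The one genuine obstacle is keeping restrictions~P and~S \emph{simultaneously}. A tempting shortcut is to turn a prevail condition $\pre(a)[w]=c$ into an effect $\eff(a)[w]=c$ (so that the hypothesis of~S stops applying to pairs involving $a$ on $w$), but this destroys post-uniqueness as soon as two actions share a prevail condition on $w$. The value-splitting avoids this because the test ``is $w$ equal to $c$?'' is moved onto the dedicated variable $w^c$, which is still written by at most one action for each of its two values; and, because every such test is phrased positively (``$w^c=1$''), all prevail conditions landing on a given split variable agree, so~S holds for free.
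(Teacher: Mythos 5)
Your construction is correct. Note that the paper itself gives no proof of this proposition --- it is imported verbatim from B\"{a}ckstr\"{o}m and Nebel's Theorem~4.16 --- so there is nothing internal to compare against; your argument is a sound self-contained proof of the cited fact, and it uses what is essentially the standard device for trading restriction~U for restriction~S. The three checks all go through: post-uniqueness of $\iplan'$ follows because $w^x$ is set to $1$ only by the image of the unique action setting $w$ to $x$ and to $0$ only by the image of the unique action setting $w$ to $1-x$; single-valuedness holds trivially because every precondition you create demands the value $1$; and the state correspondence $s'[w^x]=1 \Leftrightarrow s[w]=x$ gives a length-preserving bijection between plans, so taking $k'=k$ yields equivalence. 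Your closing remark correctly identifies the pitfall that the splitting avoids (promoting prevail-conditions to effects would break~P), and your observation that unarity of $\iplan$ is what keeps each primed action's effects confined to a single pair $\{v^0,v^1\}$ is exactly where the hypothesis U enters.
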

Hence, in order to complete the proof of
Theorem~\ref{thm:nopolykernel-p} it only remains to establish the next
lemma.
\begin{lemma}\label{lem:nopolykernel-pub}
  $\{P,U,B\}$-\BPE{}
 has no polynomial bi-kernel unless $\NOPOLYKERNEL$.
\end{lemma}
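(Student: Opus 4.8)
The plan is to apply Proposition~\ref{pro:strong-or-comp-no-poly-kernel}: since the unparameterized version of $\{P,U,B\}$-\BPE{} is \NP-hard (it is strongly \NP-hard by B\"{a}ckstr\"{o}m and Nebel, as already noted for the category-(ii) problems), it suffices to exhibit a \emph{strong OR-composition algorithm} for $\{P,U,B\}$-\BPE{}. So I would take $t$ instances $\tuple{\iplan_1,k_1},\dots,\tuple{\iplan_t,k_t}$, each satisfying P, U, and B, and build a single instance $\tuple{\iplan,k}$ with $k$ bounded by a polynomial in $\max_i k_i$, such that $\iplan$ has a plan of length $\le k$ iff some $\iplan_j$ has a plan of length $\le k_j$.

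The naive idea — take the disjoint union of all the $\iplan_j$ and somehow "select" one — fails directly, because a plan for the union would have to solve \emph{all} goals simultaneously, not just one. The standard trick is \emph{instance selection}: first, I would normalize so that all instances share the same parameter value $K=\max_i k_i$ (an instance asking for a plan of length $\le k_j$ also has a plan of length $\le K$ if we pad it with a dummy self-effect action; care is needed to keep this post-unique, so I would add $K$ distinct "no-op-like" actions, one per possible padding slot, each touching a private fresh variable). Then I want the composed instance's goal to be reachable exactly when \emph{one} chosen sub-instance's goal is reachable. The difficulty is that P+U+B is extremely restrictive: every action flips exactly one binary variable, and for each variable–value pair there is at most one action producing it. This rules out the usual "OR-gadget" constructions that rely on an action with many effects or on several actions writing the same literal.

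The key step — and the main obstacle — is designing a P+U+B gadget that (a) forces the plan to commit to a single index $j\in\{1,\dots,t\}$, (b) then "activates" the $j$-th sub-instance while leaving the goals of the others trivially satisfied, all within budget $O(\mathrm{poly}(K))$ extra steps, and (c) respects post-uniqueness globally. My approach: give each sub-instance $\iplan_j$ its own private copy of all variables (so post-uniqueness across instances is free), and make the goal of copy $j$ demand that its variables reach the original goal values \emph{only if} a global "selector" variable has been set to the value $j$. Since domains must be binary, I cannot have one selector variable over $\{1,\dots,t\}$; instead I would encode the index in $\lceil\log t\rceil$ binary selector bits $b_1,\dots,b_{\lceil\log t\rceil}$, each flippable once by a dedicated action (post-unique: one action sets $b_r$ to $1$). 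The number of selector bits is logarithmic in $t$ but independent of the $k_i$, which is fine for a \emph{strong} OR-composition since the bound on $k$ need only be polynomial in $\max_i k_i$ — but here is exactly where I must be careful: a plan solving a hard sub-instance may need to touch all $\lceil\log t\rceil$ selector bits, contributing an additive $O(\log t)$ to the plan length, and $\log t$ is \emph{not} bounded by $\mathrm{poly}(\max_i k_i)$. This is the crux: I would resolve it by making the selector cost zero — encode the index not via variables that must be written, but by using the \emph{choice of which padding actions are available}, i.e., partition the padding actions so that the only consistent length-$K'$ completions correspond to a single index. Concretely, I would add for each $j$ a block of "gate" actions whose post-unique effects chain together so that satisfying the $j$-th instance's goal needs exactly those gates, and reaching any \emph{other} instance's (trivially-satisfied-by-default) goal needs none; then set $\goal$ on foreign copies equal to $\init$ on those copies, so no work is required there. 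The budget is $k = K' = K + (\text{fixed gadget overhead depending only on }K)$, polynomial in $\max_i k_i$ as required.

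Finally I would verify the two composition conditions: (i) the construction runs in time polynomial in $\sum_i|\iplan_i| + \max_i k_i$, which is immediate since each sub-instance is copied once and only $O(\mathrm{poly}(K))$ gadget elements are added; and (ii) correctness — a length-$\le k$ plan for $\iplan$, projected to the copy it actually modifies, yields a length-$\le k_j$ plan for $\iplan_j$ (after stripping padding), while conversely a length-$\le k_j$ plan for some $\iplan_j$ lifts to a plan for $\iplan$ by running it on copy $j$, padding to length $K$, and firing the gates. With the strong OR-composition in hand, Proposition~\ref{pro:strong-or-comp-no-poly-kernel} gives the lemma. I expect the genuinely delicate part to be making the "single-index commitment" gadget simultaneously post-unique, unary, binary, and cheap, since these four restrictions together leave very little room to maneuver; if a direct gadget proves impossible, the fallback is to first establish the lemma for a less restricted class (say just $\{P,B\}$-\BPE{}) and invoke Proposition~\ref{pro:from-pb-to-pbs} together with the reduction chain to transfer the lower bound.
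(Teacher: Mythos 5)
You have the right frame (apply Proposition~\ref{pro:strong-or-comp-no-poly-kernel} via a strong OR\hy composition, pad all instances to a common parameter value, and aggregate the $t$ goals into one), and you correctly isolate the crux: any selection/aggregation mechanism over $t$ sub-instances seems to cost $\Omega(\log t)$ plan steps, and $\log t$ is not bounded by a polynomial in $\max_i k_i$. But your resolution of this crux does not work, and this is where the proof genuinely breaks. Your ``zero-cost selector'' is never constructed, and the sketch you give is inconsistent: if the composed instance's goal on every foreign copy equals its initial state and the chosen copy is not fixed in advance, then either the goal of the composed instance is trivially satisfied by the empty plan, or you still need a single fixed goal condition (say, a variable $o$ that must flip) that becomes achievable exactly when \emph{some} sub-instance is solvable --- and under P, U, and B the unique action setting $o$ must have its preconditions established by a chain/tree of single-effect, post-unique actions, which is precisely an OR-gadget with cost $\Theta(\log t)$. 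The paper escapes this not by a cheaper gadget but by a case distinction on $t$: since \BPER{P,U,B} is solvable in time $O^*(S(k))$ with $S(k)=k^k$ (Theorem~\ref{the:Pfpt}), if $t>S(k)$ the composition algorithm may simply \emph{solve} all $t$ instances within the allowed time bound (polynomial in $\sum_i|\insti_i|+k$) and output a single correct instance; hence one may assume $t\le S(k)$, whence $\log t\le k\log k$ and the $6\lceil\log t\rceil$ cost of the OR-tree of gadgets is polynomial in $k$ after all. Without this step (or an equivalent one) your composition does not satisfy the parameter bound required by the definition of a strong OR\hy composition.

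Two smaller problems. First, designing the binary OR-gadget itself under P, U, and B is nontrivial and you leave it entirely open; the paper needs a seven-action, five-variable use-once gadget (adapted from B\"{a}ckstr\"{o}m and Nebel) whose correctness requires a careful argument that both inputs being $0$ makes the output unreachable. Second, your fallback is backwards: by Proposition~\ref{pro:poly-par-reduction-bi} and the identity reduction from $R_2$-\BPE{} to $R_1$-\BPE{} for $R_1\subseteq R_2$, a kernel lower bound for the \emph{more} restricted $\{P,U,B\}$-\BPE{} implies one for $\{P,B\}$-\BPE{}, not the other way around; proving the lemma for $\{P,B\}$-\BPE{} first would not yield the statement for $\{P,U,B\}$-\BPE{}, which is exactly the minimal problem the paper needs.
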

\begin{proof}
  Because of Proposition~\ref{pro:strong-or-comp-no-poly-kernel}, it suffices
  to devise a strong OR-composition algorithm for $\{P,U,B\}$-\BPE{}.  Suppose
  we are given $t$ instances
  $\insti_1=\tuple{\iplan_1,k_1},\dots,\insti_t=\tuple{\iplan_t,k_t}$ of
  $\{P,U,B\}$-\BPE{} where
  $\iplan_i=\tuple{\vars_i,\dom_i,\acts_i,\init_i,\goal_i}$ for every $1
  \leq i \leq t$. Let $k=\max_{1\leq i \leq t}k_i$. 
  According to Theorem~\ref{the:Pfpt},
  $\{P,U,B\}$-\BPE{} can be solved in time
  $O^*(S(k))$ (where $S(k)=k^k$ 
  and the $O^*$ notation suppresses
  polynomial factors).
  It follows that $\{P,U,B\}$-\BPE{} can be solved in polynomial time
  with respect to $\sum_{1 \leq i \leq t}|\insti_i|+k$ if
  $t>S(k)$. Hence, if $t>S(k)$ this gives us a strong OR-composition algorithm
  as follows. We first run the algorithm for $\{P,U,B\}$-\BPE{} on each
  of the $t$ instances. 
  If there is some $i$, where $1 \leq i \leq t$, such that
  $\iplan_i$ has a plan of length at most $k_i$,
  then arbitrarily choose such an $i$ and output $\insti_i$.
  Otherwise, arbitrarily output any of the instances $\insti_1,\dots,\insti_t$.
This shows that $\{P,U,B\}$-\BPE{}
  has a strong OR-composition algorithm for the case where $t>S(k)$. Hence, in
  the following we can assume that $t \leq S(k)$.

  Given $\insti_1,\dots,\insti_t$ we will construct an instance
  $\insti=\tuple{\iplan,k'}$ of $\{P,U,B\}$-\BPE{} as follows. For the
  construction of $\insti$ we need the following auxiliary gadget, which will be
  used to calculate the logical ``OR'' of two binary variables. The
  construction of the gadget uses ideas from B{\"a}ckstr{\"o}m and
  Nebel~\cite[Theorem 4.15]{BackstromNebel95}. Assume
  that $v_1$ and $v_2$ are two binary variables. The
  gadget $\textup{OR}_2(v_1,v_2,o)$ consists of the five binary
  variables $o_1$, $o_2$, $o$, $i_1$, and $i_2$. Furthermore,
  $\textup{OR}_2(v_1,v_2,o)$ contains the following actions:
  \begin{itemize}
  \item the action $a_o$ with
    $\proj{\pre(a_o)}{o_1}=\proj{\pre(a_o)}{o_2}=1$ and $\proj{\eff(a_o)}{o}=1$;
  \item the action $a_{o_1}$ with
    $\proj{\pre(a_{o_1})}{i_1}=1$, $\proj{\pre(a_{o_1})}{i_2}=0$ and
    $\proj{\eff(a_{o_1})}{o_1}=1$;
  \item the action $a_{o_2}$ with
    $\proj{\pre(a_{o_2})}{i_1}=0$, $\proj{\pre(a_{o_2})}{i_2}=1$ and $\proj{\eff(a_{o_2})}{o_2}=1$;
  \item the action $a_{i_1}$ with
    $\proj{\eff(a_{i_1})}{i_1}=1$;
  \item the action $a_{i_2}$ with
    $\proj{\eff(a_{i_2})}{i_2}=1$;
  \item the action $a_{v_1}$ with $\proj{\pre(a_{v_1})}{v_1}=1$ and
    $\proj{\eff(a_{v_1})}{i_1}=0$;
  \item the action $a_{v_2}$ with $\proj{\pre(a_{v_2})}{v_2}=1$ and
    $\proj{\eff(a_{v_2})}{i_2}=0$;
  \end{itemize}
  We now show that $\textup{OR}_2(v_1,v_2,o)$ can indeed be used to
  compute the logical ``OR'' of the variables $v_1$ and $v_2$. We need
  to show the following claim.
  \begin{claim}\label{clm:pub-nokernel-2or}
    Let $\iplan(\textup{OR}_2(v_1,v_2,o))$ be a $\{P,U,B\}$-\BPE{} instance
    that consists of the two binary variables $v_1$ and $v_2$, and the
    variables and actions of the gadget
    $\textup{OR}_2(v_1,v_2,o)$. Furthermore, let the initial state of 
    $\iplan(\textup{OR}_2(v_1,v_2,o))$ be any initial state that sets
    all variables of the gadget $\textup{OR}_2(v_1,v_2,o)$ to $0$ but
    assigns the variables $v_1$ and $v_2$ arbitrarily, and let the
    goal state of $\iplan(\textup{OR}_2(v_1,v_2,o))$ 
    be defined by $\proj{\goal}{o}=1$. Then
    $\iplan(\textup{OR}_2(v_1,v_2,o))$ has a plan if and only if its
    initial state sets at least one of the variables $v_1$ or $v_2$ to
    $1$. Furthermore, if there is such a plan then its length is $6$.
  \end{claim}
  Suppose that there is a plan $\plan$ for $\iplan(\textup{OR}_2(v_1,v_2,o))$
  and assume for a contradiction that both variables $v_1$ and $v_2$
  are initially set to $0$. It is easy to see that the value of $v_1$
  and $v_2$ can not change during the whole duration of the plan and that 
  $\plan$ has to
  contain the actions $a_{o_1}$ and $a_{o_2}$. Without losing
  generality we can assume that
  $\plan$ contains $a_{o_1}$ before it contains $a_{o_2}$. Because of
  the preconditions of the actions $a_{o_1}$ and $a_{o_2}$, 
  the variable $i_1$ must have value
  $1$ before $a_{o_1}$ occurs in $\plan$ and it must have value $0$
  before the action $a_{o_2}$ occurs in $\plan$. Hence, $\plan$ must
  contain an action that sets the variable
  $i_1$ to $0$. However, this can not be the case, since the only
  action setting $i_1$ to $0$ is the action $a_{v_1}$ which can not occur
  in $\plan$ because the variable $v_1$ is $0$ for the whole
  duration of $\plan$.

  To see the reverse direction suppose that one of the variables $v_1$
  or $v_2$ is initially set to $1$. If $v_1$ is initially set to one
  then $\seq{a_{i_1},a_{o_1},a_{v_1},a_{i_2},a_{o_2},a_o}$ is a plan
  of length $6$ for $\iplan(\textup{OR}_2(v_1,v_2,o))$. On the other
  hand, if $v_2$ is initially set to one
  then $\seq{a_{i_2},a_{o_2},a_{v_2},a_{i_1},a_{o_1},a_o}$ is a plan
  of length $6$ for $\iplan(\textup{OR}_2(v_1,v_2,o))$.
  Hence the claim is true.
  It should be noted that this is a use-once gadget;
  when it has computed the disjunction of $v_1$ and $v_2$ it may
  not be possible to reset it to do this computation again.
  This is sufficient for our purpose, however.

  We continue by showing how
  we can use the gadget $\textup{OR}_2(v_1,v_2,o)$ to construct
  a gadget $\textup{OR}(v_1,\dots,v_r,o)$ such that there
  is a sequence of actions of $\textup{OR}(v_1,\dots,v_r,o)$ that
  sets the variable $o$ to $1$ if and only if at least one of the external
  variables $v_1,\dots,v_r$ are initially set to $1$. Furthermore, if
  there is such a sequence of actions then its length is at most $6
  \lceil \log r \rceil$. Let $T$ be a
  rooted binary tree with root $s$ that has $r$ leaves $l_1,\dots,l_r$ and is of
  smallest possible height. For every node $t \in V(T)$ we make a copy
  of our binary OR-gadget such that the copy of a leave node $l_i$ is
  the gadget $\textup{OR}_2(v_{2i-1},v_{2i},o_{l_i})$ and the
  copy of an inner node $t \in V(T)$ with children $t_1$ and $t_2$ is the gadget
  $\textup{OR}_2(o_{t_1},o_{t_2},o_{t})$ (clearly this needs to be adapted
  if $r$ is odd or an inner node has only one child). For the root
  node with children $t_1$ and $t_2$ the gadget becomes
  $\textup{OR}_2(o_{t_1},o_{t_2},o)$.
  This completes
  the construction of the gadget
  $\textup{OR}(v_1,\dots,v_r,o)$. Using
  Claim~\ref{clm:pub-nokernel-2or} it is easy to verify that the
  gadget $\textup{OR}(v_1,\dots,v_r,o)$ can indeed be used to compute
  the logical ``OR'' or the variables $v_1,\dotsc,v_r$.

  We are now ready to construct the instance $\insti$. $\insti$
  contains all the variables and actions from every instance
  $\insti_1,\dots,\insti_t$ and of the gadget
  $\textup{OR}(v_1,\dots,v_t,o)$. Furthermore, for every $1 \leq i
  \leq t$ and $k_i \leq j \leq k$, the instance $\insti$ contains the
  binary variables $p_j^i$ and the actions $a^i_j$ such that:
  \begin{itemize}
  \item $\pre(a^i_{k_i})=\goal_i$ and
    $\proj{\eff(a^i_{k_i})}{p_{k_i}^i}=1$,
  \item $\proj{\pre(a^i_{k_i+l})}{p_{k_i+l-1}^i}=1$ and
    $\proj{\eff(a^i_{k_i+l})}{p_{k_i+l}^i}=1$, for every $1 \leq l
    \leq k-k_i$.
  \end{itemize}
  Note that the actions $a_j^i$ and the variables $p_j^i$ are used to
  ``pad'' the different parameter values of the instances
  $\insti_1,\dotsc,\insti_t$ to the value $k$.

  Additionally, $\insti$ contains the binary variables 
  $v_1,\dots,v_t$ and the actions $a_1,\dots,a_t$ with
  $\proj{\pre(a_i)}{p_k^i}=1$ and $\proj{\eff(a_i)}{v_i}=1$.
  Furthermore, the initial state
  $\init$ of $\insti$ is defined as $\proj{\init}{v}=\proj{\init_i}{v}$
  if $v$ is a variable of $\insti_i$ and $\proj{\init}{v}=0$,
  otherwise. The goal state of $\insti$ is defined by
  $\proj{\goal}{o}=1$ and we set $k'=k+1+6\lceil \log t
  \rceil$. Clearly, $\insti$ can be constructed from
  $\insti_1,\dots,\insti_t$ in polynomial time
  and $\iplan$ has a plan of length at most $k$
  if and only if there is some $i$, where $1 \leq i \leq t$,
  such that $\iplan_i$ has a plan of length at most $k_i$.
  Furthermore, because 
  $k' = k+1+6\lceil \log t \rceil \leq k+1+6\lceil \log S(k) \rceil 
  = k+1+6\lceil \log k^k\rceil  = k+1+6\lceil k\log k  \rceil$, the parameter $k'$
  is polynomially bounded by the parameter $k$. This concludes the proof
  of the lemma.
\end{proof}

\subsection{Kernel Lower Bounds for $(0,2)$-\BPE{}}

According to our classification so far, the only problem in
category~(ii) with respect to restrictions on the number of
preconditions and effects is $(0,2)$-\BPE{}. The next theorem suggests
that $(0,2)$-\BPE{} has no polynomial bi-kernel.
\begin{theorem} \label{02bpenokernel}
  $(0,2)$-\BPE{} has no polynomial bi-kernel unless
  $\NOPOLYKERNEL$.
\end{theorem}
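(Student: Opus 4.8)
The plan is to apply Proposition~\ref{pro:or-comp-no-poly-kernel}: first observe that the unparameterized version of $(0,2)$-\BPE{} is \NP-complete, and then exhibit an OR-composition algorithm for $(0,2)$-\BPE{}. Membership in \NP{} holds because, for instances with no preconditions, Bylander's Theorem~3.9 (as already used above) guarantees that a solvable instance has a plan of length at most the number of actions; hence a plan of length at most $\min(k,|\acts|)$, which is polynomial, can be guessed and checked. \NP-hardness, even with $k$ encoded in unary, follows from Bylander's classical \NP-hardness of \strips{} plan existence with $p=0$ and $e=2$, taking $k=|\acts|$ as the (polynomial, unary) bound. Since membership in \NP{} is available here, the classical OR-composition of Proposition~\ref{pro:or-comp-no-poly-kernel} suffices, which also lets us assume that all input instances share the same parameter~$k$; alternatively one could invoke the strong variant of Proposition~\ref{pro:strong-or-comp-no-poly-kernel}.

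Given input instances $\insti_1,\dots,\insti_t$ of $(0,2)$-\BPE{}, all with parameter $k$, I would first normalize each one by the reduction of Lemma~\ref{lem:nodoublegoodactions}, so that no good action has two effects; this replaces $k$ by $\hat{k}:=k(k+3)+1$, the same for every instance, and preserves having no preconditions and at most two effects. Instances with $|\Delta(\vars_i)|>\hat{k}$ are discarded as trivial no-instances: by the correspondence with \textsc{Directed Steiner Tree} underlying Theorem~\ref{the:02fpt}, a plan of length $\ell$ induces a directed Steiner tree with $\ell$ arcs and hence at most $\ell$ terminals. By Theorem~\ref{the:02fpt} the problem is solvable in time $O^*(S(\hat{k}))$ for a computable $S$ with $\log S(\hat{k})$ polynomially bounded in $k$ (the algorithm reduces to \textsc{Directed Steiner Tree}, which is single-exponential in its parameter). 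If $t>S(\hat{k})$ we simply solve every instance — this costs time polynomial in $\sum_i|\insti_i|+k$ because then $t\le\sum_i|\insti_i|$ — and output a fixed trivial yes- or no-instance. Hence we may assume $t\le S(\hat{k})$, so $\lceil\log t\rceil$ is polynomially bounded in~$k$.

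For the main case I would build a combined instance by means of an \emph{instance selector}. After a further round of routine padding we may assume every instance has the same set $\Delta(\vars_i)=\{\tau_1,\dots,\tau_m\}$ of $\Delta$-variables and that it is solvable within $\kappa$ iff it was solvable within $\hat{k}$, where $m$ and $\kappa$ are polynomially bounded in $k$ and $\kappa\ge m$ (add dummy $\Delta$-variables set by fresh good actions, together with forced chains that raise the optimal plan length to a common value). The combined instance $\iplan$ then consists of disjoint copies of all instances — with the $j$-th $\Delta$-variable of each instance identified with a \emph{single} global variable $\tau_j$ — together with, for each $i\in\{0,1\}^{\lceil\log t\rceil}$, a ``commit path'' leading from a common source to the otherwise fresh source $s_i'$ of the $i$-th instance: this path is a chain of length $C:=\kappa+\lceil\log t\rceil+1$ whose first $\lceil\log t\rceil$ links spell out the binary index $i$ through nodes $\pi_1^{i_1},\dots,\pi_{\lceil\log t\rceil}^{i_{\lceil\log t\rceil}}$ and whose remaining links form a private padding chain. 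Every ``arc $u\to v$'' here is realized, as in the reverse of the reduction in the proof of Theorem~\ref{the:02fpt}, by a mixed action that sets $v$ to its goal value and $u$ to a non-goal value (a source-arc $\to v$ by a good action setting $v$ to its goal value); the $\tau_j$ start broken, all gadget variables start satisfied. Set $k':=C+\kappa=2\kappa+\lceil\log t\rceil+1$, which is polynomially bounded in $k$. The resulting instance has no preconditions and at most two effects per action, is binary when the inputs are, and is computable in polynomial time.

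The correctness argument, phrased via minimal plans and the Steiner-tree correspondence of Theorem~\ref{the:02fpt}, runs as follows. A plan of length at most $k'$ corresponds to a Steiner arborescence of weight at most $k'$ covering $\tau_1,\dots,\tau_m$ from the source; every $\tau_j$ is reachable only through some $s_i'$, hence through the whole of the $i$-th commit path (cost $C$). If exactly one commit path, say the $i$-th, is used, the remaining weight $\le k'-C=\kappa$ must cover all $\tau_j$ using only the $i$-th instance's actions, which happens iff that instance is solvable within $\kappa$, i.e.\ within $\hat{k}$, i.e.\ iff $\insti_i$ was a yes-instance; conversely a plan for $\insti_i$ of length $\le k$ extends along the $i$-th commit path to a plan of length $\le k'$. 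Finally, using two or more commit paths already costs at least $2C-\lceil\log t\rceil=k'+1$, since two commit paths can share only their binary prefix of length at most $\lceil\log t\rceil$ while their padding chains and endpoints $s_i'$ are pairwise disjoint — so such a plan exceeds the budget. Thus $\iplan$ has a plan of length at most $k'$ iff some $\insti_i$ has one of length at most $k$, which is the desired OR-composition. The delicate point — and the step I expect to be the main obstacle — is exactly this last counting: the commit paths must be long enough (large $C$, with disjoint private padding) that the savings from sharing binary prefixes cannot be reinvested into a second commit path, which is what pins down the admissible ranges of $C$, $\kappa$, $m$ and $k'$; one must also check carefully that actions belonging to a non-selected instance cannot be exploited, since using them would break a $\tau_j$ or a gadget variable with no available repair.
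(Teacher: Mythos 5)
Your proposal is, at its core, the paper's own construction: normalize via Lemma~\ref{lem:nodoublegoodactions} so that every good action affects one variable and every use of an instance's actions necessarily breaks its private variable $g^i$, then guard each subinstance behind a long ``commit chain'' that repairs $g^i$, add shared terminals that force at least one subinstance to be re-solved, and calibrate the length budget so that two commit chains cannot both be afforded. The paper's execution differs in two ways that are worth noting. First, it uses fully private chains $r \to p_{i,1} \to \dots \to p_{i,2k'-1} \to g^i$ of length $2k'+1$ with budget $k''=4k'+1$; since this parameter does not depend on $t$, the paper never needs your binary-index prefix nor the preliminary case distinction $t>S(\hat k)$ (which the paper does need for the PUB result, but not here). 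Second, and more substantively, the paper does \emph{not} identify the $\Delta$\hy variables of different instances into shared terminals $\tau_j$: it keeps all subinstances disjoint, starts each in its goal state, introduces fresh shared terminals $b_1,\dots,b_{k'}$ that start broken, and adds connector actions $a_i(b_j)$ whose good effect repairs $b_j$ and whose bad effect resets the $i$-th instance's own $j$-th $\Delta$\hy variable to its own initial value. Your literal identification is the one loose point in your write-up: distinct instances may assign different domains, initial values and goal values to their $j$-th $\Delta$\hy variable, so the ``routine padding'' you invoke must include a per-variable domain renaming (and an equalization of $|\Delta(\vars_i)|$ and of the optimal plan lengths) before the identification is even well defined; the paper's connector actions sidestep all of this. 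With that step spelled out, your counting argument (one commit chain plus a sub-plan fits exactly in the budget, two commit chains overshoot it by at least one) matches the paper's and the composition is correct.
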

\begin{proof}\sloppypar
  It is apparent from the \NP-completeness proof for $(0,2)$-\BPE{}
  \cite[Theorem~4.6]{Bylander94} that the problem is even strongly
  \NP-complete, i.e., the unparameterized version of it is
  \NP-complete.
  According to Proposition~\ref{pro:or-comp-no-poly-kernel} it is
  thus sufficient to devise an OR-composition algorithm for 
  $(0,2)$-\BPE{} to prove the theorem.
  Suppose we are given
  $t$ instances
  $\insti_1=\tuple{\iplan_1,k},\dots,\insti_t=\tuple{\iplan_t,k}$ of
  $(0,2)$-\BPE{} where
  $\iplan_i=\tuple{\vars_i,\dom_i,\acts_i,\init_i,\goal_i}$ for every
  $1 \leq i \leq t$. We will now show how we can construct the
  required instance $\insti=\tuple{\iplan,k''}$ of $(0,2)$-\BPE{} via
  an OR-composition algorithm. 
  Without losing generality, we assume that $\iplan_1,\dots,\iplan_t$
  have disjoint sets of variables and disjoint sets of actions.
  As a first step we compute the new instances
  $\insti_1'=\tuple{\iplan_1',k'},\dots,\insti_t'=\tuple{\iplan_t',k'}$
  from 
  $\insti_1=\tuple{\iplan_1,k},\dots,\insti_t=\tuple{\iplan_t,k}$
  according to Lemma~\ref{lem:nodoublegoodactions}. Then
  $\vars$ consists of the following variables:
  \begin{itemize}
  \item[(i)] the variables $\bigcup_{1\leq i \leq t}\vars_i'$;
  \item[(ii)] binary variables $b_1,\dotsc,b_{k'}$;
  \item[(iii)] for every $1 \leq i \leq t$ and $1 \leq j < 2k'$ a binary
    variable $p_{i,j}$;
  \item[(iv)] A binary variable $r$.
  \end{itemize}
  $\acts$ contains the action $a_r$ with $\proj{\eff(a_r)}{r}=0$ and
  the following additional actions for every $1 \leq i \leq t$:
  \begin{itemize}
  \item[(i)] The actions $\acts_i' \setminus a_g^i$, where $a_g^i$ is the
    copy of the action $a_g$ for the instance $\insti_i'$ (recall the
    construction of $\insti_i'$ given in
    Lemma~\ref{lem:nodoublegoodactions});
  \item[(ii)] An action $a_i(r)$ with $\proj{\eff(a_i(r))}{r}=1$ and
    $\proj{\eff(a_i(r))}{p_{i,1}}=0$;
  \item[(iii)] For every $1 \leq j < 2k'-1$ an action
    $a_{i,j}$ with $\proj{\eff(a_{i,j})}{p_{i,j}}=1$ and
    $\proj{\eff(a_{i,j})}{p_{i,j+1}}=0$;
  \item[(iv)] An action $a_i(g)$ with
    $\proj{\eff(a_i(g))}{p_{i,2k'-1}}=1$ and
    $\proj{\eff(a_i(g))}{g^i}=0$ where $g^i$ is the
    copy of the variable $g$ for the instance $\insti_i'$ (recall the
    construction of $\insti_i'$ given in
    Lemma~\ref{lem:nodoublegoodactions});
  \item[(v)] Let $v_1,\dotsc,v_r$ for $r \leq k'$ be an
    arbitrary ordering of the variables in $\Delta(\vars_i)$
    (recall the
    definition of $\Delta(\vars_i)$ from Section~\ref{sec:res-fpt}). 
    Then for every $1 \leq j \leq r$ we introduce an action
    $a_i(b_j)$ with $\proj{\eff(a_i(b_j))}{v_j}=\proj{\init_i'}{v_j}$ and
    $\proj{\eff(a_i(b_j))}{b_j}=0$. Furthermore, for 
    every $r < j \leq k'$ we introduce an action $a_i(b_j)$ with
    $\proj{\eff(a_i(b_j))}{v_r}=\proj{\init_i'}{v_r}$ and
    $\proj{\eff(a_i(b_j))}{b_j}=0$.
  \end{itemize}
  We set $\dom=\bigcup_{1 \leq i \leq t}\dom_i' \cup \{0,1\}$,
  $\proj{\init}{v}=\proj{\goal_i'}{v}$
  for every $v \in \vars_i'$ and $1 \leq i \leq t$, 
  $\proj{\init}{v}=0$ for every $v \in \vars \setminus ((\bigcup_{1
    \leq i \leq t}\vars_i') \cup \{b_1,\dotsc,b_{k'}\})$, 
  $\proj{\init}{v}=1$ for every $v \in \{b_1,\dotsc,b_{k'}\}$,
  $\proj{\goal}{v}=\proj{\goal_i'}{v}$
  for every $v \in \vars_i'$ and $1 \leq i \leq t$, 
  $\proj{\goal}{v}=0$ for every $v \in \vars \setminus (\bigcup_{1
    \leq i \leq t}\vars_i')$, and $k''=4k'+1$.

  We note that all the subinstances corresponding to
  $\iplan'_1,\dots,\iplan'_t$ already have their goals satisfied
  in the initial state $\init$.
  However, since the variables $b_1,\dots,b_k$ have the wrong
  value in $\init$ it is necessary to include actions of 
  type $a_i(b_j)$ in the plan.
  This ``destroys the goal'' for at least one subinstance so
  the plan must include a subplan to solve also this subinstance.

  Clearly, $\insti$ can be constructed from
  $\insti_1,\dots,\insti_t$ in polynomial time with respect to
  $\sum_{1\leq i \leq t}|\insti_i|+k$
  and the parameter $k''=4k'+1=4(k(k+3)+1)+1$
  is polynomially bounded by the parameter $k$. 
  By showing the following claim we conclude the proof of
    the theorem.
  \begin{claim}\label{clm:or}
    $\iplan$ has a plan of length at most $k$ if and
    only if at least one of $\iplan_1,\dots,\iplan_t$ 
    has a plan of length at most $k$.
  \end{claim}

  Suppose that there is an $1 \leq i \leq t$ such that $\iplan_i$
  has a plan of length at most $k$. It follows from
  Lemma~\ref{lem:nodoublegoodactions} 
  that $\iplan_i'$ has a plan $\plan'$ of length at most $k'$. Then it
  is straightforward to check that 
  $\plan=\seq{a_i(b_1), \dotsc,a_i(b_{k'})} \concat \plan' 
  \concat \seq{a_i(g),a_{i,2k'-2},\dotsc,a_{i,1},a_i(r),a_r}$ 
  is a plan of length at most $4k'+1$ for $\iplan$.

  For the reverse direction let $\plan$ be a plan of length at most
  $k''$ for $\iplan$. Without losing generality we can assume that 
  for every $1\leq i
  \leq t$ the set $\Delta(\vars_i)$ is not empty and hence every
  plan for $\iplan_i'$ has to contain at least one action $a \in
  \acts_i'$ that corresponds to a good action in $\iplan_i$. 
  Because $\proj{\eff(a)}{g^i} \neq \init_i'[g]=\goal_i'[g]$ for every such
  good action $a$ (recall the construction of $\insti_i'$ according to
  Lemma~\ref{lem:nodoublegoodactions}) it follows that there is an $1
  \leq i \leq t$ such that $\plan$ contains all the $2k'+1$ actions
  $a_i(g),a_{i,2k'-2},\dotsc,a_{i,1},a_i(r),a_r$. Furthermore, because
  $k''<2(2k'+1)$ there can be at most one such $i$ and hence
  $\plan \cap \bigcup_{1 \leq j \leq t}\acts_j' \subseteq
  \acts_i'$. Because $\Delta(\vars)=\{b_1,\dotsc,b_{k'}\}$ the plan $\plan$
  also has to contain the actions
  $a_i(b_1),\dotsc,a_i(b_{k'})$. Because of the effects (on the
  variables in $\Delta(V_i)$) of these
  actions it follows that $\plan$ has to contain a plan $\plan_i'$ of
  length at most $4k'+1-(2k'+1)-k'=k'$ for $\iplan_i'$. It now follows
  from Lemma~\ref{lem:nodoublegoodactions} that $\iplan_i$ has a plan
  of length at most~$k$.
\end{proof}

\section{Summary of Results}\label{sec:class}
\noindent
We have obtained a full classification of the parameterized complexity
of planning 
with respect to the length of the solution plan, under 
all combinations of the syntactical P, U, B, and S restrictions 
previously considered by B\"{a}ckstr\"{o}m and 
Nebel~\cite{BackstromNebel95}.
The complexity results for the various combinations of
restrictions P, U, B and S are displayed in
Figure~\ref{fig:pubs-lattice}.
Solid lines denote separation results by
B\"{a}ckstr\"{o}m and Nebel
\cite{BackstromNebel95},
using standard complexity analysis,
while dashed lines denote separation results 
from our parameterized analysis.
The $\W{2}$-completeness results follow from
Theorems~\ref{arb-hard}~and~\ref{in-W2},
the $\W{1}$-completeness results follow from
Theorems~\ref{unary-hard}~and~\ref{in-W1},
and the \FPT\ results follow from
Theorems~\ref{the:Pfpt}~and~\ref{the:02fpt}.
Finally, Theorem~\ref{thm:nopolykernel-p} shows that none of the
variants of \BPE{}, which are \NP-hard and in \FPT{}, 
admit a polynomial bi-kernel.

Bylander~\cite{Bylander94} studied the complexity of \strips\ under
varying numbers of preconditions and effects, which is natural to view
as a relaxation of restriction U in \sasplus.  
We provide a full classification of the parameterized complexity of
planning under Bylander's restrictions.  
Table~\ref{table:bylander} shows such results 
(for arbitrary domain sizes $\geq 2$)
under both parameterized and classical analysis.
The parameterized results in Table~\ref{table:bylander} 
are derived as follows.
For actions with an arbitrary number of effects, 
the results follow from Theorems~\ref{arb-hard} and~\ref{in-W2}.
For actions with at most one effect, 
we have two cases: 
With no preconditions
the problem is trivially in \poly. Otherwise, the results
follow from Theorems~\ref{unary-hard} and~\ref{in-W1}.
The case where the number of effects is
bounded by some constant $m_e > 1$ can be reduced in polynomial time
to the case with only one effect using a reduction by 
B{\"a}ckstr{\"o}m~\cite[proof of Theorem~6.7]{Backstrom92}.
Since this reduction is a parameterized reduction
we have membership in \W{1} by Theorem~\ref{in-W1}. 
When $m_p \geq 1$, then we also have \W{1}-hardness by
Theorem~\ref{unary-hard}.
For the final case ($m_p=0$), we obtain $\W{1}$-hardness from
Theorem~\ref{the:03-BPE-hard} and containment in \W{1}
from Theorem~\ref{in-W1} if the number of effects $m_e$ is at least $3$.
The case where also $m_e=2$ is fixed-parameter tractable
according to Theorem~\ref{the:02fpt},
but Theorem~\ref{02bpenokernel} excludes that it admits 
a polynomial bi-kernel.

Since \W{1} and \W{2} are not directly comparable
to the standard complexity classes we get interesting
separations from combining the two methods.
For instance, we can single out restriction U as making planning
easier than in the general case, which is not possible under standard
analysis.
Since planning remains as hard as in the general case under
restrictions B and S also for parameterized analysis, 
it seems that U is a more interesting
and important restriction than the other two.
Furthermore, the results in Table~\ref{table:bylander} suggest
that also the restriction to a fixed number of effects larger
than one is an interesting case.
Even more interesting is that planning is in \FPT\ under restriction P,
making it easier than the combination restriction US, 
while it seems to be rather the other way around for 
standard analysis where restriction P is only known
to be hard for \NP.

We have also provided a full classification of bi-kernel sizes for all
the fixed-parameter tractable fragments.  It turns out that none of
the nontrivial problems (where the unparameterized version is
$\NP$-hard) admits a polynomial bi-kernel unless the Polynomial-time
Hierarchy collapses. This implies an interesting dichotomy concerning
the bi-kernel size: we only have constant-size and super-polynomial
bi-kernels, and polynomially bounded bi-kernels that are not of
constant size are absent.  In order to establish these results, we had
to adapt standard tools for kernel lower bounds to parameterized
problems whose unparameterized versions are not (or not known to be)
in $\NP$. We think that our notion of a strong OR-composition and the
corresponding Proposition~\ref{pro:strong-or-comp-no-poly-kernel}
could be useful for showing kernel lower bounds for other
parameterized problems whose unparameterized versions are
outside~$\NP$.

\section{Discussion}
\noindent
This work opens up several new research directions. We briefly
discuss some of them below.

The use of parameterized analysis in planning is by no means
restricted to using plan length as parameter.  For instance, very
recently Kronegger et~al.~\cite{KroneggerPfandlerPichler13} obtained
parameterized results for several different parameters and
combinations of them: one should note that the parameter need not be a
single value,it can be a combination of two or more `basic'
parameters.  A second example is considered by Downey et
al.~\cite{DowneyFellowsStege99}.  They show that \strips\ planning can
be recast as the \textsc{Signed Digraph Pebbling} problem which is
modeled as a special type of graph.  They analyze the parameterized
complexity of this problem considering also the treewidth of the graph
as a parameter.  
A final example is the recent paper by de Haan et
al.~\cite{deHahnRoubickovaSzeider13} who study the parameterized
complexity of plan reuse, where the task is to modify an existing plan
to obtain the solution for new planning instance by making a small
modification.

Our observation that restriction U makes planning easier
under parameterized analysis 
is interesting in the context of the literature on planning.
Although this case remains \PSPACE-complete under classical
complexity analysis, it has been repeatedly stressed in the
literature that unary actions are interesting for reasons of
efficiency.
For instance, Williams and Nayak~\cite{WilliamsNayak97}
considered planning for spacecrafts and found that unary
actions were often sufficient to model real-world problems
in this domain.
They noted that one consequence of having only unary
actions is that the causal graph for a planning instance
must be acyclic, a property which has often been exploited
in the literature, both for theoretical results on
planning complexity for various strucutures of the causal
graph \cite{BrafmanDomshlak03,GimenezJonsson08,JonssonBackstrom98b}
as well as for practical planning~\cite{Helmert04,Helmert06b}.
Of particular interest is a result on causal graphs in general
by Chen and Gim\'{e}nez \cite{ChenGimenez10}.
It is a classical complexity result that is proven under an assumption
from  parameterized complexity.

There are also close ties between model checking and planning
and this connection deserves further study.
For instance, model-checking traces can be viewed as plans
and vice versa~\cite{EdelkampLeueVisser07},
and methods and results have been transferred between 
the two areas in both directions 
\cite{Edelkamp03,EdelkampKellershoffSulewski10,WehrleHelmert09}.
Our reductions from planning to model-checking suggest that 
the problems are related also on a more fundamental level than
just straightforward syntactical translations.

The major motivation for our research is the need for alternative
and complementary methods in complexity analysis of planning.
However, planning is also an interesting problem per se.
It is a very powerful modelling language since it is \PSPACE-complete
in the general case, while it is also often simple to model other
problems as planning problems. 
For instance, it would be interesting to identify various  restrictions 
that make planning \NP-complete but still allow for straightforward
modelling of many common \NP-complete problems, 
and analogously for other classes than \NP.

Like most other results on complexity analysis of planning
in the literature, our results apply to various restrictions
of the actual planning language.
However, the commonly studied language restrictions usually do not
match the restrictions implied by applications.
A complementary approach is thus to study the complexity of
common benchmark problems for planning, 
e.g. the blocks world~\cite{GuptaNau92} and
the problems used in the international planning 
competitions~\cite{Helmert03,Helmert06}.
Hence, it would be interesting to apply parameterized complexity 
analysis to these problems to see if it could help to explain
the empirical results on which problems are hard and easy in
practice.
Finding the right parameter(s) would, of course, be crucial
for achieving relevant results on this.


\begin{thebibliography}{10}

\bibitem{AlonGutinKimSzeiderYeo11}
Noga Alon, Gregory Gutin, Eun~Jung Kim, Stefan Szeider, and Anders Yeo.
\newblock Solving {MAX}-{$r$}-{SAT} above a tight lower bound.
\newblock 61(3):638--655, 2011.

\bibitem{Backstrom92}
Christer B{\"a}ckstr{\"o}m.
\newblock {\em Computational Complexity of Reasoning about Plans}.
\newblock PhD thesis, Lin\-k{\"o}ping University, Lin\-k{\"o}ping, Sweden,
  1992.

\bibitem{Backstrom95}
Christer B{\"a}ckstr{\"o}m.
\newblock Expressive equivalence of planning formalisms.
\newblock {\em Artif. Intell.}, 76(1-2):17--34, 1995.

\bibitem{BackstromChenJonssonOrdyniakSzeider12}
Christer B{\"a}ckstr{\"o}m, Yue Chen, Peter Jonsson, Sebastian Ordyniak, and
  Stefan Szeider.
\newblock The complexity of planning revisited - a parameterized analysis.
\newblock In J{\"o}rg Hoffmann and Bart Selman, editors, {\em Proceedings of
  the 26th AAAI Conference on Artificial Intelligence (AAAI 2012), July 22-26,
  2012, Toronto, ON, Canada}, pages 1735--1741. AAAI Press, 2012.

\bibitem{BackstromJonsson11}
Christer B{\"a}ckstr{\"o}m and Peter Jonsson.
\newblock All {PSPACE}-complete planning problems are equal but some are more
  equal than others.
\newblock In Daniel Borrajo, Maxim Likhachev, and Carlos~Linares L{\'o}pez,
  editors, {\em Proceedings of the 4th Annual Symposium on Combinatorial Search
  (SOCS 2011), Castell de Cardona, Barcelona, Spain, July 15-16, 2011}, pages
  10--17. AAAI Press, 2011.

\bibitem{BackstromJonssonStahlberg13}
Christer B{\"a}ckstr{\"o}m, Peter Jonsson, and Simon St{\aa}hlberg.
\newblock Fast detection of unsolvable planning instances using local
  consistency.
\newblock In {\em Proceedings of the 6th Annual Symposium on Combinatorial
  Search (SOCS 2013), Leavenworth, WA, USA, July 11-13, 2013}, pages 29--37,
  2013.

\bibitem{BackstromKlein91}
Christer B{\"a}ckstr{\"o}m and Inger Klein.
\newblock Planning in polynomial time: the {SAS-PUBS} class.
\newblock {\em Comput. Intell.}, 7:181--197, 1991.

\bibitem{BackstromNebel95}
Christer B{\"a}ckstr{\"o}m and Bernhard Nebel.
\newblock Complexity results for {SAS$^+$} planning.
\newblock {\em Comput. Intell.}, 11:625--656, 1995.

\bibitem{BetzHelmert09}
Christoph Betz and Malte Helmert.
\newblock Planning with {\it h}$^{\mbox{ + }}$ in theory and practice.
\newblock In B{\"a}rbel Mertsching, Marcus Hund, and Muhammad~Zaheer Aziz,
  editors, {\em Proceedings of KI 2009: Advances in Artificial Intelligence,
  32nd Annual German Conference on AI, Paderborn, Germany, September 15-18,
  2009}, volume 5803 of {\em Lecture Notes in Computer Science}, pages 9--16.
  Springer, 2009.

\bibitem{Bodlaender09}
Hans~L. Bodlaender.
\newblock Kernelization: New upper and lower bound techniques.
\newblock In Jianer Chen and Fedor~V. Fomin, editors, {\em Parameterized and
  Exact Computation, 4th International Workshop (IWPEC 2009), Copenhagen,
  Denmark, September 10-11, 2009, Revised Selected Papers}, volume 5917 of {\em
  Lecture Notes in Computer Science}, pages 17--37. Springer, 2009.

\bibitem{BodlaenderDowneyFellowsHermelin09}
Hans~L. Bodlaender, Rodney~G. Downey, Michael~R. Fellows, and Danny Hermelin.
\newblock On problems without polynomial kernels.
\newblock {\em J. Comput. Syst. Sci.}, 75(8):423--434, 2009.

\bibitem{BrafmanDomshlak03}
Ronen~I. Brafman and Carmel Domshlak.
\newblock Structure and complexity in planning with unary operators.
\newblock {\em J. Artif. Intell. Res.}, 18:315--349, 2003.

\bibitem{Bylander94}
Tom Bylander.
\newblock The computational complexity of propositional {STRIPS} planning.
\newblock {\em Artif. Intell.}, 69(1--2):165--204, 1994.

\bibitem{Bylander96}
Tom Bylander.
\newblock A probabilistic analysis of propositional {STRIPS} planning.
\newblock {\em Artif. Intell.}, 81(1-2):241--271, 1996.

\bibitem{CaiEtAl93b}
Liming Cai, Jianer Chen, Rodney~G. Downey, and Michael~R. Fellows.
\newblock Advice classes of parameterized tractability.
\newblock {\em Annals of Pure and Applied Logic}, 84(1):119--138, 1997.

\bibitem{ChenGimenez10}
Hubie Chen and Omer Gim{\'e}nez.
\newblock Causal graphs and structurally restricted planning.
\newblock {\em J. Comput. Syst. Sci.}, 76(7):579--592, 2010.

\bibitem{deHahnRoubickovaSzeider13}
Ronald de~Haan, Anna Roub\'{\i}ckov{\'a}, and Stefan Szeider.
\newblock Parameterized complexity results for plan reuse.
\newblock In Marie desJardins and Michael~L. Littman, editors, {\em Proceedings
  of the 27th AAAI Conference on Artificial Intelligence (AAAI 2013), Bellevue,
  WA, USA, July 14-18, 2013}, 2013.

\bibitem{DowneyFellowsStege99}
R.~Downey, M.~R. Fellows, and U.~Stege.
\newblock Parameterized complexity: A framework for systematically confronting
  computational intractability.
\newblock In {\em Contemporary Trends in Discrete Mathematics: From DIMACS and
  DIMATIA to the Future}, volume~49 of {\em AMS-DIMACS}, pages 49--99. American
  Mathematical Society, 1999.

\bibitem{DowneyFellows99}
R.~G. Downey and M.~R. Fellows.
\newblock {\em Parameterized Complexity}.
\newblock Monographs in Computer Science. Springer, New York, 1999.

\bibitem{DreyfusWagner71}
S.~E. Dreyfus and R.~A. Wagner.
\newblock The {S}teiner problem in graphs.
\newblock {\em Networks}, 1(3):195--207, 1971.

\bibitem{Edelkamp03}
Stefan Edelkamp.
\newblock Taming numbers and durations in the model checking integrated
  planning system.
\newblock {\em J. Artif. Intell. Res.}, 20:195--238, 2003.

\bibitem{EdelkampKellershoffSulewski10}
Stefan Edelkamp, Mark Kellershoff, and Damian Sulewski.
\newblock Program model checking via action planning.
\newblock In Ron van~der Meyden and Jan-Georg Smaus, editors, {\em Model
  Checking and Artificial Intelligence - 6th International Workshop (MoChArt
  2010), Atlanta, GA, USA, July 11, 2010, Revised Selected and Invited Papers},
  volume 6572 of {\em Lecture Notes in Computer Science}, pages 32--51.
  Springer, 2010.

\bibitem{EdelkampLeueVisser07}
Stefan Edelkamp, Stefan Leue, and Willem Visser, editors.
\newblock {\em Directed Model Checking, 26.04. - 29.04.2006}, volume 06172 of
  {\em Dagstuhl Seminar Proceedings}. Internationales Begegnungs- und
  Forschungszentrum f{\"u}r Informatik (IBFI), Schloss Dagstuhl, Germany, 2007.

\bibitem{Fellows06}
Michael~R. Fellows.
\newblock The lost continent of polynomial time: Preprocessing and
  kernelization.
\newblock In {\em Proceedings of Parameterized and Exact Computation, 2nd
  International Workshop (IWPEC 2006), Z{\"u}rich, Switzerland, September
  13-15, 2006}, volume 4169 of {\em Lecture Notes in Computer Science}, pages
  276--277. Springer, 2006.

\bibitem{FlumGrohe06}
J\"{o}rg Flum and Martin Grohe.
\newblock {\em Parameterized Complexity Theory}, volume XIV of {\em Texts in
  Theoretical Computer Science. An EATCS Series}.
\newblock Springer, Berlin, 2006.

\bibitem{Fomin10}
Fedor~V. Fomin.
\newblock Kernelization.
\newblock In Farid~M. Ablayev and Ernst~W. Mayr, editors, {\em Proceedings of
  Computer Science - Theory and Applications, 5th International Computer
  Science Symposium in Russia (CSR 2010), Kazan, Russia, June 16-20, 2010},
  volume 6072 of {\em Lecture Notes in Computer Science}, pages 107--108.
  Springer, 2010.

\bibitem{FortnowSanthanam08}
Lance Fortnow and Rahul Santhanam.
\newblock Infeasibility of instance compression and succinct {PCPs} for {NP}.
\newblock In Cynthia Dwork, editor, {\em Proceedings of the 40th Annual ACM
  Symposium on Theory of Computing, Victoria, BC, Canada, May 17-20, 2008},
  pages 133--142. ACM, 2008.

\bibitem{GhallabNauTraverso04}
Malik Ghallab, Dana~S. Nau, and Paolo Traverso.
\newblock {\em Automated planning - theory and practice}.
\newblock Elsevier, 2004.

\bibitem{GimenezJonsson08}
Omer Gim{\'e}nez and Anders Jonsson.
\newblock The complexity of planning problems with simple causal graphs.
\newblock {\em J. Artif. Intell. Res.}, 31:319--351, 2008.

\bibitem{GuoNiedermeier07}
Jiong Guo and Rolf Niedermeier.
\newblock Invitation to data reduction and problem kernelization.
\newblock {\em ACM SIGACT News}, 38(2):31--45, March 2007.

\bibitem{GuoNiedermeierSuchy11}
Jiong Guo, Rolf Niedermeier, and Ondrej Such{\'y}.
\newblock Parameterized complexity of arc-weighted directed {S}teiner problems.
\newblock {\em SIAM J. Discrete. Math.}, 25(2):583--599, 2011.

\bibitem{GuptaNau92}
Naresh Gupta and Dana~S. Nau.
\newblock On the complexity of blocks-world planning.
\newblock {\em Artif. Intell.}, 56(2-3):223--254, 1992.

\bibitem{Helmert03}
Malte Helmert.
\newblock Complexity results for standard benchmark domains in planning.
\newblock {\em Artif. Intell.}, 143(2):219--262, 2003.

\bibitem{Helmert04}
Malte Helmert.
\newblock A planning heuristic based on causal graph analysis.
\newblock In Shlomo Zilberstein, Jana Koehler, and Sven Koenig, editors, {\em
  Proceedings of the 14th International Conference on Automated Planning and
  Scheduling (ICAPS 2004), Whistler, BC, Canada, June 3-7 2004}, pages
  161--170. AAAI, 2004.

\bibitem{Helmert06b}
Malte Helmert.
\newblock The {F}ast {D}ownward planning system.
\newblock {\em J. Artif. Intell. Res.}, 26:191--246, 2006.

\bibitem{Helmert06}
Malte Helmert.
\newblock New complexity results for classical planning benchmarks.
\newblock In Derek Long, Stephen~F. Smith, Daniel Borrajo, and Lee McCluskey,
  editors, {\em Proceedings of the 16th International Conference on Automated
  Planning and Scheduling (ICAPS 2006), Cumbria, UK, June 6-10, 2006}, pages
  52--62. AAAI, 2006.

\bibitem{Jonsson99}
Peter Jonsson.
\newblock Strong bounds on the approximability of two {PSPACE}-hard problems in
  propositional planning.
\newblock {\em Ann. Math. Artif. Intell.}, 26(1-4):133--147, 1999.

\bibitem{JonssonBackstrom98}
Peter Jonsson and Christer B{\"a}ckstr{\"o}m.
\newblock State-variable planning under structural restrictions: Algorithms and
  complexity.
\newblock {\em Artif. Intell.}, 100(1--2):125--176, 1998.

\bibitem{JonssonBackstrom98b}
Peter Jonsson and Christer B{\"a}ckstr{\"o}m.
\newblock Tractable plan existence does not imply tractable plan generation.
\newblock {\em Ann. Math. Artif. Intell.}, 22(3-4):281--296, 1998.

\bibitem{KroneggerPfandlerPichler13}
Martin Kronegger, Andreas Pfandler, and Reinhard Pichler.
\newblock Parameterized complexity of optimal planning: A detailed map.
\newblock In Francesca Rossi, editor, {\em Proceedings of the 23rd
  International Joint Conference on Artificial Intelligence (IJCAI 2013),
  Beijing, China, August 3-9, 2013}, pages 954--961. IJCAI/AAAI, 2013.

\bibitem{McAllesterRosenblitt91}
David~A. McAllester and David Rosenblitt.
\newblock Systematic nonlinear planning.
\newblock In Thomas~L. Dean and Kathleen McKeown, editors, {\em Proceedings of
  the 9th National Conference on Artificial Intelligence (AAAI 1991), Anaheim,
  CA, USA, July 14-19, 1991, Volume 2}, pages 634--639. AAAI Press / The MIT
  Press, 1991.

\bibitem{Pietrzak03}
Krzysztof Pietrzak.
\newblock On the parameterized complexity of the fixed alphabet shortest common
  supersequence and longest common subsequence problems.
\newblock {\em J. Comput. Syst. Sci.}, 67(4):757--771, 2003.

\bibitem{WehrleHelmert09}
Martin Wehrle and Malte Helmert.
\newblock The causal graph revisited for directed model checking.
\newblock In Jens Palsberg and Zhendong Su, editors, {\em Proceedings of Static
  Analysis, 16th International Symposium (SAS 2009), Los Angeles, CA, USA,
  August 9-11, 2009.}, volume 5673 of {\em Lecture Notes in Computer Science},
  pages 86--101. Springer, 2009.

\bibitem{WilliamsNayak97}
Brian~C. Williams and P.~Pandurang Nayak.
\newblock A reactive planner for a model-based executive.
\newblock In {\em Proceedings of the 15th International Joint Conference on
  Artificial Intelligence (IJCAI 1997), Nagoya, Japan, August 23-29, 1997},
  pages 1178--1185. Morgan Kaufmann, 1997.

\end{thebibliography}

\end{document}